\documentclass{article} 
\pdfoutput=1
\usepackage{amsmath,amssymb,amsthm}
\usepackage{fullpage}
\usepackage{url}
\usepackage[numbers]{natbib}
\usepackage{color}
\usepackage{graphicx}

\usepackage{microtype}

\usepackage{algorithm,algorithmic}
\usepackage[usenames,dvipsnames]{xcolor}
\usepackage[colorlinks=true,linkcolor=blue,citecolor=violet,urlcolor=black]{hyperref}
\usepackage{array}

\usepackage[suppress]{color-edits}
\addauthor{df}{Red}

\usepackage{amsthm}

\usepackage{mathtools}

\usepackage{amsmath}

\usepackage{amsfonts}

\usepackage{amssymb}

\usepackage{MnSymbol}

\DeclarePairedDelimiter{\brk}{[}{]}
\DeclarePairedDelimiter{\crl}{\{}{\}}
\DeclarePairedDelimiter{\prn}{(}{)}
\DeclarePairedDelimiter{\nrm}{\|}{\|}
\DeclarePairedDelimiter{\tri}{\langle}{\rangle}
\DeclarePairedDelimiter{\dtri}{\llangle}{\rrangle}

\DeclarePairedDelimiter{\ceil}{\lceil}{\rceil}


\newcommand{\ls}{\ell}

\DeclareMathOperator*{\argmin}{arg\,min} 

\newcommand{\N}{\mathbb{N}}

\newcommand{\C}{\mathbb{C}}



\newcommand{\D}{\mathcal{D}}

\newcommand{\defeq}{\overset{def}{=}}

\include{multiminimax}

\makeatletter
\newcommand*\rel@kern[1]{\kern#1\dimexpr\macc@kerna}
\newcommand*\widebar[1]{%
  \begingroup
  \def\mathaccent##1##2{%
    \rel@kern{0.8}%
    \overline{\rel@kern{-0.8}\macc@nucleus\rel@kern{0.2}}%
    \rel@kern{-0.2}%
  }%
  \macc@depth\@ne
  \let\math@bgroup\@empty \let\math@egroup\macc@set@skewchar
  \mathsurround\z@ \frozen@everymath{\mathgroup\macc@group\relax}%
  \macc@set@skewchar\relax
  \let\mathaccentV\macc@nested@a
  \macc@nested@a\relax111{#1}%
  \endgroup
}
\makeatother

\let\inf\undef
\DeclareMathOperator*{\inf}{\vphantom{p}inf}
\let\sup\undef
\DeclareMathOperator*{\sup}{\vphantom{p}sup}


\newcommand{\inner}[1]{\left\langle #1 \right\rangle}


\newcommand{\E}{\ensuremath{\mathbb E}} 
\newcommand{\F}{\ensuremath{\mathcal F}} 
 
\newcommand{\cH}{\ensuremath{\mathcal H}} 

\newcommand{\cN}{\ensuremath{\mathcal N}}

\newcommand{\cZ}{\ensuremath{\mathcal Z}}

\newcommand{\mbb}[1]{\mathbb{#1}}
\newcommand{\mbf}[1]{\mathbf{#1}}
\newcommand{\mc}[1]{\mathcal{#1}}
\newcommand{\mrm}[1]{\mathrm{#1}}

\newcommand{\reals}{\ensuremath{\mathbb R}}

\def\deq{\triangleq}





\newcommand{\Rad}{\mc{Rad}}


\newcommand{\loss}{\ensuremath{\boldsymbol\ell}}

\newcommand{\norm}[1]{\left\|#1\right\|}



\newcommand{\w}{\ensuremath{\mathbf w}}  
\newcommand{\x}{\ensuremath{\mathbf x}} 
\newcommand{\y}{\ensuremath{\mathbf y}} 
\newcommand{\z}{\ensuremath{\mathbf z}} 

\newcommand{\Es}[2]{\mbb{E}_{#1}\left[ #2 \right]}




\newcommand{\A}{\mathbf{A}}

\renewcommand{\v}{\ensuremath{\mathbf{v}}}



\newcommand{\eps}[1]{\epsilon_{\mathrm{#1}}}





\newcommand{\G}{\mathcal{G}}

\newcommand{\X}{\mathcal{X}}
\newcommand{\Y}{\mathcal{Y}}





\renewcommand{\v}{\mathbf{v}}

\renewcommand{\Rad}{\mc{R}}

\newcommand{\B}{\mc{B}}

\renewcommand{\X}{\mc{X}}

\renewcommand{\z}{\mbf{z}}
\newcommand{\En}{\mbb{E}}

\renewcommand{\F}{\mc{F}}

\newcommand{\V}{V}

\usepackage{amsmath,amsfonts,amssymb,amsthm}
\usepackage{graphicx}
\usepackage{color}
\usepackage{tikz}
\usetikzlibrary{arrows,shapes}
\usetikzlibrary{patterns}

\newtheorem{theorem}{Theorem}
\newtheorem{lemma}[theorem]{Lemma}

\newtheorem{definition}{Definition}

\newtheorem{corollary}[theorem]{Corollary}

\newtheorem{proposition}[theorem]{Proposition}

\newtheorem{example}{Example}[section]

\newcommand{\R}{\ensuremath{\mathbb{R}}}

\newcommand{\ip}[2]{{\left\langle{#1},{#2}\right\rangle}}
\newcommand{\abs}[1]{\left\lvert{#1}\right\rvert}

\let\abs\undefined
\DeclarePairedDelimiter{\abs}{\lvert}{\rvert}

\newcommand{\xr}[1][n]{x_{1:#1}}
\newcommand{\yr}[1][n]{y_{1:#1}}

\renewcommand{\A}{\mathcal{A}}
\renewcommand{\V}{\mathcal{V}}
\renewcommand{\C}{\mathcal{C}}
\renewcommand{\defeq}{\triangleq}
\renewcommand{\eps}{\epsilon}

\newcommand{\Relax}{\mathbf{Rel}}
\newcommand{\Relaxn}{\Relax_{n}}
\newcommand{\Rd}{\mathbf{R}}
\newcommand{\Ada}{\mathbf{Ada}_{n}}
\newcommand{\Bay}{\Relaxn}

\newcommand{\KL}{\text{KL}}

\title{Adaptive Online Learning}

\author{
Dylan J. Foster\\
Cornell University\\
\and
Alexander Rakhlin \\
University of \dfedit{Pennsylvania} \\
\and
Karthik Sridharan \\
Cornell University \\
}
\date{}

\newcommand{\multiminimax}[1]{\ensuremath{\left\llangle #1\right\rrangle}}
\renewcommand{\E}[1]{\mathbb{E}\left[#1 \right]}
\begin{document}

\maketitle

\begin{abstract}
We propose a general framework for studying adaptive regret bounds in the online learning framework, including model selection bounds and data-dependent bounds. Given a data- or model-dependent bound we ask, ``Does there exist some algorithm achieving this bound?'' We show that modifications to recently introduced sequential complexity measures can be used to answer this question by providing sufficient conditions under which adaptive rates can be achieved. In particular each adaptive rate induces a set of so-called offset complexity measures, and obtaining small upper bounds on these quantities is sufficient to demonstrate achievability. A cornerstone of our analysis technique is the use of one-sided tail inequalities to bound suprema of offset random processes.\\
\indent Our framework recovers and improves a wide variety of adaptive bounds including quantile bounds, second-order data-dependent bounds, and small loss bounds. In addition we derive a new type of adaptive bound for online linear optimization based on the spectral norm, as well as a new online PAC-Bayes theorem that holds for countably infinite sets. 
\end{abstract}


\section{Introduction}

Some of the recent progress on the theoretical foundations of \emph{online learning} has been motivated by the parallel developments in the realm of \emph{statistical learning}. In particular, this motivation has led to martingale extensions of empirical process theory, which were shown to be the ``right'' notions for online learnability. Two topics, however, have remained elusive thus far: obtaining data-dependent bounds and establishing model selection (or, oracle-type) inequalities for online learning problems. In this paper we develop new techniques for addressing both these questions.

Oracle inequalities and model selection have been topics of intense research in statistics in the last two decades \cite{birge1998minimum,lugosi1999adaptive,bartlett2002model}. Given a sequence of  models $\mathcal{M}_1, \mathcal{M}_2, \ldots$ whose union is $\mathcal{M}$, one aims to derive a procedure that selects, given an i.i.d. sample of size $n$, an estimator $\hat{f}$ from a model $\mathcal{M}_{\hat{m}}$ that trades off bias and variance. Roughly speaking the desired oracle bound takes the form
$$\text{err}(\hat{f})\leq  \inf_{m}\left\{ \inf_{f\in\mathcal{M}_m} \text{err}(f) + \text{pen}_n(m) \right\},$$
where $\text{pen}_n(m)$ is a penalty for the model $m$. Such oracle inequalities are attractive because they can be shown to hold even if the overall model $\mathcal{M}$ is too large. A central idea in the proofs of such statements (and an idea that will appear throughout the present paper) is that $\text{pen}_n(m)$ should be ``slightly larger'' than the fluctuations of the empirical process for the model $m$. It is therefore not surprising that concentration inequalities---and particularly Talagrand's celebrated inequality for the supremum of the empirical process---have played an important role in attaining oracle bounds. In order to select a good model in a data-driven manner, one establishes non-asymptotic data-dependent bounds on the fluctuations of an empirical process indexed by elements in each model \cite{massart2007concentration}.

Lifting the ideas of oracle inequalities and data-dependent bounds from statistical to online learning is not an obvious task. For one, there is no concentration inequality available, even for the simple case of  sequential Rademacher complexity. (For the reader already familiar with this complexity: a change of the value of one Rademacher variable results in a change of the remaining path, and hence an attempt to use a version of a bounded difference inequality grossly fails). Luckily, as we show in this paper, the concentration machinery is not needed and one only requires a one-sided tail inequality. This realization is motivated by the recent work of \cite{Mendelson14,offset2015,onr2014}. At a high level, our approach will be to develop one-sided inequalities for the suprema of certain offset processes \cite{onr2014}, where the offset is chosen to be ``slightly larger'' than the complexity of the corresponding model. We then show that these offset processes determine which data-dependent adaptive rates are achievable for online learning problems, drawing strong connections to the ideas of statistical learning described earlier.
\subsection{Framework}
\label{sec:framework}

Let $\X$ be the set of observations, $\D$ the space of decisions, and $\Y$ the set of outcomes. Let $\Delta(S)$ denote the set of distributions on a set $S$. Let $\ell:\D\times\Y\to\reals$ be a loss function. The online learning framework is defined by the following process: For $t=1,\ldots{},n$, Nature provides input instance $x_{t}\in{}\X$; Learner selects prediction distribution $q_{t}\in{}\Delta(\D)$; Nature provides label $y_{t}\in{}\Y$, while the learner draws prediction $\hat{y}_t\sim{}q_t$ and suffers loss $\ls(\hat{y}_t, y_t)$.

Two important settings are \emph{supervised learning} ($\Y\subseteq\reals$, $\D\subseteq\reals$) and \emph{online linear optimization} ($\X=\crl{0}$ is a singleton set, $\Y$ and $\D$ are balls in dual Banach spaces and $\loss(\hat{y},y) = \inner{\hat{y},y}$). For a class $\F\subseteq\D^\X$, we define the learner's cumulative \emph{regret} to $\F$ as 
\[
\sum_{t=1}^{n}\ls(\hat{y}_t, y_t) - \inf_{f\in{}\F}\sum_{t=1}^{n}\ls(f(x_t), y_t).
\]
A \emph{uniform} regret bound $\B_n$ is achievable if there exists a randomized algorithm selecting $\hat{y}_t$ such that
\begin{align}
	\label{eq:uniform_bound}
\En\brk*{\sum_{t=1}^{n}\ls(\hat{y}_t, y_t) - \inf_{f\in{}\F}\sum_{t=1}^{n}\ls(f(x_t), y_t)} \leq \B_n \quad\forall{}x_{1:n},y_{1:n},
\end{align}
where $a_{1:n}$ stands for $\{a_1,\ldots,a_n\}$. Achievable rates $\B_n$ depend on complexity of the function class $\F$. For example, sequential Rademacher complexity of $\F$ is one of the tightest achievable uniform rates for a variety of loss functions \cite{RST10,onr2014}.

An \emph{adaptive regret bound} has the form $\B_{n}(f; {}\xr[n], \yr[n])$ and is said to be achievable if there exists a randomized algorithm for selecting $\hat{y}_t$ such that
\begin{align}
	\label{eq:adaptive_bound}
\En\brk*{\sum_{t=1}^{n}\ls(\hat{y}_t, y_t) -\sum_{t=1}^{n}\ls(f(x_t), y_t)} \leq \B_n(f; \xr[n], \yr[n])\quad\forall{}x_{1:n},y_{1:n},\;\forall{}f\in{}\F.
\end{align}
We distinguish three types of adaptive bounds, according to whether $\B_{n}(f; {}\xr[n], \yr[n])$ depends only on $f$, only on $(\xr[n], \yr[n])$, or on both quantities. Whenever $\B_n$ depends on $f$, an adaptive regret can be viewed as an oracle inequality which penalizes each $f$ according to a measure of its complexity (e.g. the complexity of the smallest model to which it belongs). As in statistical learning, an oracle inequality \eqref{eq:adaptive_bound} may be proved for certain functions $\B_n(f; \xr[n], \yr[n])$ even if a uniform bound \eqref{eq:uniform_bound} cannot hold for any nontrivial $\B_n$.
\subsection{Related Work}

The case when $\B_n(f;\xr[n],\yr[n])=\B_n(\xr[n],\yr[n])$ does not depend on $f$ has received most of the attention in the literature. The focus is on bounds that can be tighter for ``nice sequences,'' yet maintain near-optimal worst-case guarantees. An incomplete list of prior work includes  \cite{hazan2010extracting,Chiangetal12,RakSri13pred,duchi2011adaptive}, couched in the setting of online linear/convex optimization, and \cite{cesa2007improved} in the experts setting. 

A bound of type $\B_{n}(f)$ was studied in \cite{chaudhuri2009parameter}, which presented an algorithm that competes with all experts simultaneously, but with varied regret with respect to each of them depending on the quantile of the expert. Another bound of this type was given by \cite{McMahan2014}, who consider online linear optimization with an unbounded set and provide oracle inequalities with an appropriately chosen function $\B_n(f)$. 

Finally, the third category of adaptive bounds are those that depend on both the hypothesis $f \in \F$ and the data. The bounds that depend on the loss of the best function (so-called ``small-loss'' bounds,\linebreak \cite[Sec. 2.4]{PLG}, \cite{srebro2010smoothness,cesa2007improved}) fall in this category trivially, since one may overbound the loss of the best function by the performance of $f$. We draw attention to the recent result of \cite{LuoSch15} who show an adaptive bound in terms of both the loss of comparator and the KL divergence between the comparator and some pre-fixed prior distribution over experts. An MDL-style bound in terms of the variance of the loss of the comparator (under the distribution induced by the algorithm) was recently given in \cite{Koolen15}.

Our study was also partly inspired by Cover \cite{Cover67} who characterized necessary and sufficient conditions for achievable bounds in prediction of binary sequences. The methods in \cite{Cover67}, however, rely on the structure of the binary prediction problem and do not readily generalize to other settings.

The framework we propose recovers the vast majority of known adaptive rates in literature, including variance bounds, quantile bounds, localization-based bounds, and fast rates for small losses.
It should be noted that while existing literature on adaptive online learning has focused on simple hypothesis classes such as finite experts and finite-dimensional $p$-norm balls, our results extend to general hypothesis classes, including large nonparametric ones discussed in \cite{onr2014}.


\section{Adaptive Rates and Achievability: General Setup}
The first step in building a general theory for adaptive online learning is to identify what adaptive regret bounds are possible to achieve. Recall that an adaptive regret bound of $\B_n: \F \times \X^n \times \Y^n \to \reals$ is said to be achievable if there exists an online learning algorithm such that, \eqref{eq:adaptive_bound} holds.

In the rest of this work, we use the notation $\multiminimax{\ldots}_{t=1}^n$ to denote the interleaved application of the operators inside the brackets, repeated over $t = 1,\ldots,n$ rounds (see \cite{StatNotes2012}). Achievability of an adaptive rate can be formalized by the following minimax quantity. 
\begin{definition}
Given an adaptive rate $\B_{n}$ we define the offset minimax value:
{\small
\[
\A_{n}(\F,\B_n) \defeq \dtri*{\sup_{x_t\in{}\X}\inf_{q_t\in{}\Delta(\D)}\sup_{y_t\in{}\Y}\underset{\hat{y}_t\sim{}q_t}{\En}}_{t=1}^{n}
\brk*{
\sum_{t=1}^{n}\ls(\hat{y}_t, y_t) -\inf_{f\in{}\F}\left\{\sum_{t=1}^{n}\ls(f(x_t), y_t) + \B_n(f ; \xr[n], \yr[n])\right\}
}.
\]
}
\end{definition}
$\A_{n}(\F,\B_n)$ quantifies how $\sum_{t=1}^{n}\ls(\hat{y}_t, y_t) -\inf_{f\in{}\F}\left\{\sum_{t=1}^{n}\ls(f(x_t), y_t) + \B_n(f; \xr[n], \yr[n])\right\}$ behaves when the optimal learning algorithm that minimizes this  difference is used against Nature trying to maximize it. Directly from this definition,
$$
\textrm{\bf An adaptive rate }\B_n\textrm{ \bf is achievable if and only if }\A_{n}(\F, \B_n)\leq 0.
$$
If $\B_n$ is a uniform rate, i.e., $\B_n(f; x_{1:n},y_{1:n}) = \B_n$, achievability reduces to the minimax analysis explored in \cite{RST10}. The uniform rate $\B_n$ is achievable if and only if $\B_n \geq\V_{n}(\F)$, where $\V_{n}(\F)$ is the minimax value of the online learning game. 

We now focus on understanding the minimax value $\A_{n}(\F,\B_n)$ for general adaptive rates. We first show that the minimax value is bounded by an offset version of the sequential Rademacher complexity studied in \cite{RST10}. The symmetrization Lemma~\ref{lem:sym} below provides us with the first step towards a probabilistic analysis of achievable rates. Before stating the lemma, we need to define the notion of a tree and the notion of sequential Rademacher complexity. 

Given a set $\cZ$, a $\cZ$-valued tree $\z$ of depth $n$ is a sequence $(\z_t)_{t=1}^n$ of functions $\z_t:\{\pm{}1\}^{t-1}\to\cZ$. One may view $\z$ as a complete binary tree decorated by elements of $\cZ$. Let $\epsilon=(\epsilon_t)_{t=1}^n$ be a sequence of independent Rademacher random variables. Then $(\z_t(\epsilon))$ may be viewed as a predictable process with respect to the filtration $\mathcal{Sigma}_t=\sigma(\epsilon_1,\ldots,\epsilon_{t})$. For a tree $\z$, the sequential Rademacher complexity of a function class $\G\subseteq \reals^\cZ$ on $\z$ is defined as 
$$ \Rad_n(\G, \z) \defeq \En_\epsilon \sup_{g\in\G} \sum_{t=1}^n \epsilon_t g(\z_t(\epsilon)) ~~~~~\textrm{and}~~~~~\Rad_n(\G) \deq \sup_{\z} \Rad_n(\G,\z)~.$$

\begin{lemma}\label{lem:sym}
For any lower semi-continuous loss $\ell$, and any adaptive rate $\B$,
\begin{align}\label{eq:general}
\A_{n}(\F,\B) \leq \sup_{\x, \y, \y'}\En_{\eps}\brk*{\sup_{f\in{}\F}\left\{
2 \sum_{t=1}^{n}\epsilon_{t} \ell(f(\x_{t}(\epsilon)),\y_t(\epsilon))  - \B(f;\x_{1:n}(\epsilon),\y'_{2:n+1}(\epsilon))\right\}}.
\end{align}
If one considers the supervised learning problem where $\F : \X \to \reals$, $\Y \subset \reals$ and $\ell: \reals \times \reals \to \reals$ is a loss that is convex and $L$-Lipschitz in its first argument, then for any adaptive rate $\B$, 
\begin{align}\label{eq:convlip}
\A_{n}(\F,\B) \leq \sup_{\x, \y}\En_{\eps}\brk*{\sup_{f\in{}\F}\left\{
2 L \sum_{t=1}^{n}\epsilon_{t} f(\x_{t}(\epsilon)) - \B(f;\x_{1:n}(\epsilon),\y_{1:n}(\epsilon))\right\}}.
\end{align}
\end{lemma}
The above lemma tells us that to check whether an adaptive rate is achievable, it is sufficient to check that the corresponding adaptive sequential complexity measures are non-positive. We remark that if the above complexities are bounded by some positive quantity of a smaller order, one can form a new achievable rate $\B'_n$ by adding the positive quantity to $\B_n$.


\section{Probabilistic Tools}\label{sec:probtool}

As mentioned in the introduction, our technique rests on certain one-sided probabilistic inequalities. We now state the first building block: a rather straightforward maximal inequality. 

\begin{proposition}\label{prop:main}
	Let $I=\{1,\ldots,N\}$, $N\leq \infty$, be a set of indices and let $(X_i)_{i\in I}$ be a sequence of random variables satisfying the following tail condition: for any $\tau > 0$, 
	\begin{align}
		\label{eq:tails}
	 P(X_{i} - B_i > \tau) \le C_1 \exp\left(- \tau^2/(2 \sigma_i^2) \right) +  C_2 \exp\left(- \tau s_i \right)
	\end{align}
	for some positive sequence $(B_i)$, nonnegative sequence $(\sigma_i)$ and nonnegative sequence $(s_i)$ of numbers, and for constants $C_1,C_2\geq 0$. Then for any $\bar{\sigma}\leq \sigma_1$, $\bar{s}\geq s_1$, and 
	$$\theta_i =  \max\left\{ \frac{\sigma_i}{B_i}\sqrt{2\log(\sigma_i/\bar{\sigma}) + 4 \log(i)}  ,  (B_i s_i)^{-1}\log\left(i^2 (\bar{s}/s_i)\right)\right\} +1,$$
\begin{align}
	\label{eq:maximal_inequality}
\textrm{it holds that } ~~~~~~~~~~~~~~~~~~~~~~~~~~~~~~~~~~ \En \sup_{i \in I}\left\{ X_i - B_i \theta_i\right\}  \le 3C_1 \bar{\sigma} + 2C_2 (\bar{s})^{-1}.~~~~~~~~~~~~~~~~~~~~~~~~~~~~~~~~
\end{align}	
\end{proposition}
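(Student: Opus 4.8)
The plan is a standard layer-cake argument: bound $\En\sup_{i\in I}\{X_i-B_i\theta_i\}$ by the integral of its upper tail, control that tail by a union bound over the index set, and then observe that the choice of $\theta_i$ is engineered so that the $i$-th term of the union bound decays like $i^{-2}$, so that the sum over the (possibly infinite) index set converges to the constant on the right-hand side of \eqref{eq:maximal_inequality}.

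First I would write, using $Y\le\max(Y,0)$, the layer-cake formula, and a union bound,
\[
\En\sup_{i\in I}\{X_i-B_i\theta_i\}\;\le\;\int_0^\infty P\!\left(\sup_{i\in I}\{X_i-B_i\theta_i\}>u\right)du\;\le\;\sum_{i\in I}\int_0^\infty P\!\left(X_i-B_i\theta_i>u\right)du,
\]
where the last step is Tonelli's theorem applied to the nonnegative integrand. Since $B_i>0$ and $\theta_i\ge1$, for every $u>0$ the event $\{X_i-B_i\theta_i>u\}$ coincides with $\{X_i-B_i>u+B_i(\theta_i-1)\}$, so the assumed tail bound \eqref{eq:tails} applies with $\tau=u+B_i(\theta_i-1)>0$ and splits the $i$-th integrand into a Gaussian piece and an exponential piece.

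For the Gaussian piece I would use that $B_i(\theta_i-1)\ge\sigma_i\sqrt{2\log(\sigma_i/\bar\sigma)+4\log i}$ by the definition of $\theta_i$, together with $(u+a)^2\ge u^2+a^2$ for $u,a\ge0$, to get
\[
C_1\exp\!\left(-\frac{(u+B_i(\theta_i-1))^2}{2\sigma_i^2}\right)\;\le\;C_1\,\frac{\bar\sigma}{\sigma_i}\cdot\frac{1}{i^2}\,\exp\!\left(-\frac{u^2}{2\sigma_i^2}\right),
\]
which, since $\int_0^\infty\exp(-u^2/(2\sigma_i^2))\,du=\sigma_i\sqrt{\pi/2}$, contributes at most $C_1\bar\sigma\,i^{-2}\sqrt{\pi/2}$ to the $i$-th integral. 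Similarly, $B_i(\theta_i-1)\ge s_i^{-1}\log(i^2\bar s/s_i)$ gives $C_2\exp(-(u+B_i(\theta_i-1))s_i)\le C_2\,(s_i/(i^2\bar s))\exp(-us_i)$, whose integral over $u>0$ is $C_2\,i^{-2}(\bar s)^{-1}$. Summing over $i\in I$ and using $\sum_{i\ge1}i^{-2}=\pi^2/6$,
\[
\En\sup_{i\in I}\{X_i-B_i\theta_i\}\;\le\;C_1\bar\sigma\,\sqrt{\tfrac{\pi}{2}}\,\tfrac{\pi^2}{6}+C_2(\bar s)^{-1}\tfrac{\pi^2}{6}\;\le\;3C_1\bar\sigma+2C_2(\bar s)^{-1},
\]
since $\sqrt{\pi/2}\cdot\pi^2/6<3$ and $\pi^2/6<2$.

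The only points requiring care --- and hence the main (if minor) obstacle --- are the degenerate cases and the well-definedness of $\theta_i$, which I would dispose of at the outset: if $\sigma_i=0$ (resp.\ $s_i=0$) the corresponding term of \eqref{eq:tails} is vacuous and the matching entry of the $\max$ defining $\theta_i$ is read as $0$; and one must check $\theta_i\ge1$, i.e.\ that the argument of the square root and of the logarithm in $\theta_i$ are nonnegative --- this is exactly what the hypotheses $\bar\sigma\le\sigma_1$ and $\bar s\ge s_1$ ensure for $i=1$, while the extra $4\log i$ and $2\log i$ provide the slack for $i\ge2$. There is no conceptual difficulty: the entire content of the proposition is that the definition of $\theta_i$ absorbs precisely a $\log(\sigma_i/\bar\sigma)+2\log i$ (resp.\ a $\log(\bar s/s_i)+2\log i$) into the exponent, turning each one-sided tail into a summable $i^{-2}$ series.
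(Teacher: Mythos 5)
Your proof is correct and follows essentially the same route as the paper's: both reduce the expectation of the supremum to a sum of tail integrals (you via layer-cake plus a union bound, the paper via $\sup_i [X_i-B_i\theta_i]_+ \le \sum_i [X_i-B_i\theta_i]_+$), both extract the $i^{-2}$ factor by using $(u+a)^2\ge u^2+a^2$ together with the definition of $\theta_i$, and both land on the constants $\pi^{5/2}/(6\sqrt2)<3$ and $\pi^2/6<2$. The only cosmetic difference is that the paper introduces an auxiliary sequence $\delta_i$ with $\sum_i\delta_i=\delta$ and lets $\delta\to 0$ at the end, which your direct use of the layer-cake formula renders unnecessary.
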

We remark that $B_i$ need not be the expected value of $X_i$, as we are not interested in two-sided deviations around the mean. 

One of the approaches to obtaining oracle-type inequalities is to split a large class into smaller ones according to a ``complexity radius'' and control a certain stochastic process separately on each subset (also known as the \emph{peeling} technique). In the applications below, $X_i$ will often stand for the (random) supremum of this process on subset $i$, and $B_i$ will be an upper bound on its typical size. Given deviation bounds for $X_i$ above $B_i$, the dilated size $B_i\theta_i$ then allows one to pass to maximal inequalities \eqref{eq:maximal_inequality} and thus verify achievability in Lemma~\ref{lem:sym}. The same strategy works for obtaining data-dependent bounds, where we first prove tail bounds for the given size of the data-dependent quantity, then appeal to \eqref{eq:maximal_inequality}.

A simple yet powerful example for the control of the supremum of a stochastic process is an inequality due to Pinelis \cite{Pinelis} for the norm (which is a supremum over the dual ball) of a martingale in a 2-smooth Banach space. Here we state a version of this result that can be found in \cite[Appendix A]{RakSriTew10beyond_arxiv}.

\begin{lemma}
	\label{lem:pinelis}
	Let $\cZ$ be a unit ball in a separable $(2,D)$-smooth Banach space $\cH$. For any $\cZ$-valued tree $\z$, and any $n> \tau/4D^{2}$
{\small	$$P\left ( \left\|\sum_{t=1}^n \epsilon_t \z_t(\epsilon)\right\| \geq \tau \right)\leq 2\exp\left( -\frac{\tau^2}{8D^{2}n}\right)$$}
\end{lemma}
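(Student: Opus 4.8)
The plan is a Chernoff argument built around an exponential supermartingale, with $(2,D)$-smoothness entering through the parallelogram-type inequality $\|u+v\|^2 + \|u-v\|^2 \le 2\|u\|^2 + 2D^2\|v\|^2$ (one of the standard equivalent forms of $(2,D)$-smoothness). Write $S_k = \sum_{t=1}^k \epsilon_t \z_t(\epsilon)$ and let $\mathcal{F}_k = \sigma(\epsilon_1,\ldots,\epsilon_k)$. Since the tree label $\z_t(\epsilon)$ depends only on $\epsilon_{1:t-1}$, it is $\mathcal{F}_{t-1}$-measurable, so $\En[\epsilon_t\z_t(\epsilon)\mid\mathcal{F}_{t-1}] = 0$ and $(S_k)$ is an $\cH$-valued martingale whose increments satisfy $\|S_k - S_{k-1}\| = \|\z_k(\epsilon)\| \le 1$ because $\z_k$ takes values in the unit ball $\cZ$.

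The heart of the argument is to show that, for any fixed $\lambda > 0$, the process $\Phi_k := \cosh(\lambda\|S_k\|)\exp(-\lambda^2 D^2 k)$ is a supermartingale with respect to $(\mathcal{F}_k)$. Conditioning on $\mathcal{F}_{k-1}$ we have $S_k = u + \epsilon_k v$ with $u = S_{k-1}$ and $v = \z_k(\epsilon)$ both $\mathcal{F}_{k-1}$-measurable and $\epsilon_k$ an independent Rademacher sign, so it suffices to prove the pointwise estimate
\[
\tfrac12\cosh(\lambda\|u+v\|) + \tfrac12\cosh(\lambda\|u-v\|) \le \cosh(\lambda\|u\|)\exp(\lambda^2 D^2\|v\|^2).
\]
Here I would apply the sum-to-product identity $\cosh p + \cosh q = 2\cosh\tfrac{p+q}{2}\cosh\tfrac{p-q}{2}$ to rewrite the left-hand side as $\cosh\!\big(\lambda\tfrac{a+b}{2}\big)\cosh\!\big(\lambda\tfrac{a-b}{2}\big)$ with $a = \|u+v\|$, $b = \|u-v\|$; then $|a - b| \le 2\|v\|$ by the triangle inequality, so $\cosh(\lambda\tfrac{a-b}{2}) \le \exp(\lambda^2\|v\|^2/2)$, while $\tfrac{a+b}{2} \le \sqrt{\tfrac{a^2+b^2}{2}} \le \sqrt{\|u\|^2 + D^2\|v\|^2}$ by smoothness. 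The last ingredient is the elementary one-variable fact that $g(s) := \cosh(\sqrt s)$ is nondecreasing with $(\log g)'(s) = \tanh(\sqrt s)/(2\sqrt s) \le \tfrac12$, whence $g\big(\lambda^2(\|u\|^2 + D^2\|v\|^2)\big) \le g(\lambda^2\|u\|^2)\exp(\lambda^2 D^2\|v\|^2/2) = \cosh(\lambda\|u\|)\exp(\lambda^2 D^2\|v\|^2/2)$. Multiplying the two factors and using $D \ge 1$ together with $\|v\| \le 1$ to absorb constants gives the pointwise bound; iterating from $S_0 = 0$ then yields $\En[\cosh(\lambda\|S_n\|)] \le \exp(\lambda^2 D^2 n)$.

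It remains to convert this into a tail bound. By Markov's inequality and $\cosh(\lambda\tau) \ge \tfrac12 e^{\lambda\tau}$,
\[
P(\|S_n\| \ge \tau) \le \frac{\En[\cosh(\lambda\|S_n\|)]}{\cosh(\lambda\tau)} \le 2\exp\!\big(\lambda^2 D^2 n - \lambda\tau\big),
\]
and choosing $\lambda \asymp \tau/(D^2 n)$ produces a bound of the form $2\exp(-c\tau^2/(D^2 n))$. The precise constant (yielding $8D^2 n$ in the denominator) together with the side condition $n > \tau/(4D^2)$ — which amounts to forcing the optimizing $\lambda$ to stay below a fixed threshold so that the crude $\cosh$ estimates remain comfortable — follow the bookkeeping in \cite{RakSriTew10beyond_arxiv}, which is why the result is simply quoted in the form stated.

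I do not anticipate a genuine obstacle, consistent with this being a standard cited fact; the only place requiring care is the pointwise $\cosh$ inequality, where the sum-to-product identity and the monotonicity of $s \mapsto \tanh(\sqrt s)/\sqrt s$ are exactly what make the supermartingale step go through, and any slack introduced there is what separates the factor $8$ appearing in the statement from the factor $4$ an optimized argument would give.
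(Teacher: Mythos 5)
The paper does not prove this lemma; it is quoted from Pinelis via the appendix of \cite{RakSriTew10beyond_arxiv}, so there is no in-paper argument to compare against. Your reconstruction is the standard Pinelis exponential-supermartingale proof and is correct: the pointwise bound
\[
\tfrac12\cosh(\lambda\|u+v\|)+\tfrac12\cosh(\lambda\|u-v\|)\le\cosh(\lambda\|u\|)\exp\left(\lambda^2 D^2\|v\|^2\right)
\]
follows exactly as you lay out from the sum-to-product identity, the reverse triangle inequality, the $(2,D)$-parallelogram inequality, and $\tfrac12$-Lipschitzness of $s\mapsto\log\cosh\sqrt{s}$; the auxiliary fact $D\ge 1$ that you invoke holds in every nontrivial $(2,D)$-smooth space (take $u=v$ of unit norm in the parallelogram inequality). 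Iterating and optimizing Chernoff at $\lambda=\tau/(2D^2n)$ in fact gives the unconditional bound $2\exp(-\tau^2/(4D^2 n))$, which is strictly stronger than the stated $2\exp(-\tau^2/(8D^2 n))$, so you have proved more than the lemma requires. The side condition $n>\tau/(4D^2)$ in the quoted form is vacuous in any case: increments lie in the unit ball, so $\|S_n\|\le n$ almost surely, and since $D\ge 1$ the complementary regime $\tau\ge 4D^2 n$ forces $\tau>n$, where the probability is already zero. One minor cosmetic point: the hypothesis $\|v\|\le 1$ enters only in the telescoping step, to replace $\exp(\lambda^2 D^2\|\z_k(\epsilon)\|^2)$ by $\exp(\lambda^2 D^2)$; it is not needed for the pointwise $\cosh$ inequality itself, which uses only $D\ge 1$. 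Your prose runs these two uses together, but the logic is sound.
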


When the class of functions is not linear, we may no longer appeal to the above lemma. Instead, we make use of a result from \cite{RakSriTew14ptrf} that extends Lemma~\ref{lem:pinelis} at a price of a poly-logarithmic factor. Before stating this lemma, we briefly define the relevant complexity measures (see \cite{RakSriTew14ptrf} for more details). First, a set $V$ of $\reals$-valued trees is called an $\alpha$-cover of $\G\subseteq \reals^\cZ$ on $\z$ with respect to $\ell_p$ if
$$\forall g\in\G, \forall \epsilon\in\{\pm1\}^n, \exists \v\in V ~~\text{ s.t. }~~ \sum_{t=1}^n (g(\z_t(\epsilon))-\v_t(\epsilon))^p \leq n\alpha^p.$$
The size of the smallest $\alpha$-cover is denoted by $\cN_p(\G, \alpha, \z)$, and $\cN_p(\G,\alpha, n) \deq \sup_\z \cN_p(\G,\alpha,\z)$. 

The set $V$ is an $\alpha$-cover of $\G$ on $\z$ with respect to $\ell_{\infty}$ if
\[
\forall{}g\in{}\G,\forall{}\eps\in{}\crl{\pm{}1},\exists{}\v\in{}V ~~\text{ s.t. }~~ \abs{g(\z_{t}(\eps)) - \v_{t}(\eps)} \leq \alpha\quad\forall{}t\in{}\brk{n}.
\]
We let $\cN_\infty (\G, \alpha, \z)$ be the smallest such cover and set $\cN_{\infty}(\G, \alpha, n) = \sup_{\z}\cN_{\infty}(\G, \alpha, \z)$.
\begin{lemma}[\cite{RakSriTew14ptrf}]
	\label{lem:dudley_probability}
	Let $\G\subseteq[-1,1]^\cZ$. Suppose $\Rad_n(\G)/n\to 0$ with $n\to\infty$ and that the following mild assumptions hold: $\Rad_n(\G)\geq 1/n$, $\cN_\infty(\G,2^{-1}, n)\geq 4$, and there exists a constant $\Gamma$ such that $\Gamma \geq \sum_{j=1}^\infty \cN_\infty(\G,2^{-j},n)^{-1}$.
	Then for any $\theta>\sqrt{12/n}$, for any $\cZ$-valued tree $\z$ of depth $n$,
	\begin{align*}
	&P \left( \sup_{g\in\G} \left|\sum_{t=1}^n \epsilon_t g(\z_t(\epsilon))\right| > 8\left(1+\theta \sqrt{8n \log^{3}(en^2) }\right)\cdot \Rad_n(\G) \right) \\
	&~~~\leq P \left( \sup_{g\in\G} \left|\sum_{t=1}^n \epsilon_t g(\z_t(\epsilon))\right| > n\inf_{\alpha > 0}\left\{ 4 \alpha + 6 \theta \int_{\alpha}^{1} \sqrt{\log \cN_\infty(\G,\delta,n)} d \delta \right\} \right) \leq 2\Gamma e^{- \frac{n \theta^2  }{4}}.
	\end{align*}
\end{lemma}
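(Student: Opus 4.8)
I would treat the two displayed inequalities separately. The first involves no randomness: it is the assertion that the explicit threshold $8(1+\theta\sqrt{8n\log^3(en^2)})\,\Rad_n(\G)$ dominates the entropy-integral threshold $n\inf_{\alpha>0}\{4\alpha+6\theta\int_\alpha^1\sqrt{\log\cN_\infty(\G,\delta,n)}\,d\delta\}$, whence the inclusion of events is immediate. This is exactly the sequential analogue of Dudley's entropy bound on Rademacher complexity proved in \cite{RakSriTew14ptrf}: evaluating the integral at its minimizer and comparing against $\Rad_n(\G)$ is what produces the extra $\log^{3/2}(en^2)$ factor characteristic of the tree setting, plus an $O(\Rad_n(\G)/n)$ charge for the truncation level. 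I would simply invoke that bound together with a short rearrangement.

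The second inequality is the real content, and I would prove it by chaining along the tree $\z$. For each dyadic scale $2^{-j}$ ($j=0,1,2,\dots$) fix a minimal sequential $\ell_\infty$-cover $V_j$ of $\G$ on $\z$, $|V_j|=\cN_\infty(\G,2^{-j},n)$, built hierarchically so that every $v\in V_j$ carries a parent $\pi(v)\in V_{j-1}$ realizing the successive-scale approximants of each $g$ along each path, with $\|v-\pi(v)\|_\infty\le 3\cdot2^{-j}$; since $\G\subseteq[-1,1]$ we may take $V_0=\{0\}$. For fixed $g,\epsilon$, letting $v^{(N)}=v^{(N)}(g,\epsilon)\in V_N$ be the finest approximant and $v^{(j)}=\pi(v^{(j+1)})$,
\[
\sum_{t=1}^n\epsilon_t g(\z_t(\epsilon)) = \sum_{t=1}^n\epsilon_t\big(g(\z_t(\epsilon))-v^{(N)}_t(\epsilon)\big) + \sum_{j=1}^{N}\ \sum_{t=1}^n\epsilon_t\big(v^{(j)}_t(\epsilon)-v^{(j-1)}_t(\epsilon)\big),
\]
where the first term is at most $n2^{-N}$ deterministically, and each inner sum in the second is $\sum_t\epsilon_t w_t(\epsilon)$ for one of the at most $\cN_\infty(\G,2^{-j},n)^2$ \emph{increment trees} $w=v-\pi(v)$, $v\in V_j$. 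For a fixed such $w$, the sequence $t\mapsto\epsilon_t w_t(\epsilon)$ is a martingale difference sequence with increments bounded by $3\cdot2^{-j}$, so a one-sided Azuma--Hoeffding inequality gives $P(\sum_t\epsilon_t w_t(\epsilon)>u_j)\le\exp(-u_j^2/(18\,n\,4^{-j}))$.

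I would then set $u_j = 6\,n\theta\,2^{-j}\sqrt{\log\cN_\infty(\G,2^{-j},n)}$. Using $\cN_\infty(\G,2^{-1},n)\ge4$ (so $\log\cN_\infty\ge1$ at every scale, by monotonicity of covers in the radius) and $\theta>\sqrt{12/n}$, the exponent has exactly enough slack that, after a union bound over the at most $\cN_\infty(\G,2^{-j},n)^2$ increment trees at scale $j$, the scale-$j$ failure probability is at most $\cN_\infty(\G,2^{-j},n)^{-1}e^{-n\theta^2/4}$; summing over $j\ge1$ and invoking $\Gamma\ge\sum_j\cN_\infty(\G,2^{-j},n)^{-1}$ bounds the total failure probability by $\Gamma e^{-n\theta^2/4}$. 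On the complementary event, simultaneously for all $g$, truncating the telescoping at the dyadic scale $2^{-J}$ nearest the minimizing $\alpha$ and discarding the deeper increments gives $\sum_t\epsilon_t g(\z_t(\epsilon))\le n\alpha+\sum_{j\le J}u_j\le n(4\alpha+6\theta\int_\alpha^1\sqrt{\log\cN_\infty(\G,\delta,n)}\,d\delta)$, where the last step compares the geometric sum $\sum_j2^{-j}\sqrt{\log\cN_\infty(\G,2^{-j},n)}$ to the entropy integral; taking $\inf_\alpha$ and then symmetrizing under $\epsilon\mapsto-\epsilon$ to pass from $\sum_t\epsilon_t g$ to $|\sum_t\epsilon_t g|$ costs the final factor $2$, giving $2\Gamma e^{-n\theta^2/4}$.

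The step I expect to be the main obstacle is setting up the nested covers $V_j$ with a consistent parent map so that the telescoped increments really are single trees: the scale-$j$ approximant of a fixed $g$ genuinely depends on the whole path $\epsilon$, so there is no single tree playing that role, and one must argue (as in \cite{RakSriTew14ptrf}) that progressively refined sequential $\ell_\infty$-covers can be chosen so that following parents still yields a valid coarse-scale approximant along every path, keeping the per-scale union bound over only $\mathrm{poly}(\cN_\infty(\G,2^{-j},n))$ trees. The remainder is bookkeeping --- balancing the $u_j$ so that $\sum_ju_j$ reproduces the Dudley integral while $\sum_j(\text{failure probability})$ telescopes against $\Gamma$ --- and this is where the precise constants, the convergence guaranteed by $\Rad_n(\G)/n\to0$, and the lower bound $\theta>\sqrt{12/n}$ get consumed.
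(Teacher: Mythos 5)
Lemma~\ref{lem:dudley_probability} is not proved in this paper: it is invoked verbatim from \cite{RakSriTew14ptrf}, so there is no internal proof to compare your reconstruction against. Your reconstruction is, however, the correct argument and is essentially the one in that reference. The two pieces are as you describe: the first inequality is a deterministic comparison of thresholds, and it follows from the sequential Dudley bound (the entropy integral is dominated by $\Rad_n(\G)$ up to a $\log^{3/2}(en^2)$ factor, with the truncation level absorbed by $\Rad_n(\G)\ge 1/n$); the second inequality is a chaining argument along the tree $\z$ with dyadic sequential $\ell_\infty$-covers, one-sided Azuma--Hoeffding for each increment tree, a per-scale union bound, and a calibration of the thresholds $u_j$ so that the per-scale failure probability is $\cN_\infty(\G,2^{-j},n)^{-1}e^{-n\theta^2/4}$ and the $\Gamma$ condition controls the sum. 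The hypotheses $\cN_\infty(\G,2^{-1},n)\ge 4$ and $\theta>\sqrt{12/n}$ enter exactly where you put them, to buy the slack needed for the union bound.

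One remark on the step you flag as the main obstacle. No consistent ``parent map'' from $V_j$ to $V_{j-1}$ is required, and trying to build one is more trouble than it is worth. The standard resolution (used in the proof of Lemma~\ref{lem:proboffset} in this paper, and in \cite[Proof of Theorem~3]{RakSriTew14ptrf}) is simply to bound each increment $\v^{j}[g,\epsilon]-\v^{j-1}[g,\epsilon]$ by a supremum over the finite set $W_j=\{\v-\v' : \v\in V_j,\ \v'\in V_{j-1}\}$ of all pairwise difference trees, which has at most $\cN_\infty(\G,2^{-j},n)\cdot\cN_\infty(\G,2^{-(j-1)},n)\le\cN_\infty(\G,2^{-j},n)^2$ elements; the union bound is then over $W_j$, and no nesting structure on the covers is needed. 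You already allude to this (``union bound over only $\mathrm{poly}(\cN_\infty)$ trees''), so the worry dissolves. Beyond that, the only remaining work is tuning the explicit choice of $u_j$ and of the dyadic stopping level so that the final coefficients $4$, $6$ and the leading $8$ in the statement come out exactly; your choice $u_j=6n\theta2^{-j}\sqrt{\log\cN_\infty(\G,2^{-j},n)}$ together with the variance bound $18n4^{-j}$ is in the right ballpark and the calibration is routine, but I would not sign off on the precise constants without a line-by-line check against the comparison $\sum_{j\le J}2^{-j}\sqrt{\log\cN_\infty(\G,2^{-j},n)}\le 2\int_{2^{-(J+1)}}^{1}\sqrt{\log\cN_\infty(\G,\delta,n)}\,d\delta$.
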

The above lemma yields a one-sided control on the size of the supremum of the sequential Rademacher process, as required for our oracle-type inequalities.

Next, we turn our attention to an offset Rademacher process, where the supremum is taken over a collection of negative-mean random variables. The behavior of this offset process was shown to govern the optimal rates of convergence for online nonparametric regression \cite{onr2014}. Such a one-sided control of the supremum will be necessary for some of the data-dependent upper bounds we develop.
\begin{lemma}\label{lem:proboffset}
	Let $\z$ be a $\cZ$-valued tree of depth $n$, and let $\G\subseteq \reals^\cZ$. For any $\gamma\geq 1/n$ and $\alpha >0$,
{\small 
\begin{align*}
P&\left(\sup_{\substack{g \in \G}} \sum_{t=1}^n \left(\epsilon_t g(\z_t(\epsilon)) - 2 \alpha  g^2(\z_t(\epsilon)) \right) -  \frac{ \log \cN_2(\G, \gamma, \z)}{\alpha}  - 12\sqrt{2} \int_{1/n}^{\gamma} \sqrt{n \log \cN_2(\G,\delta, \z)} d \delta  - 1> \tau\right) \\
&~~~~~~~ \le \Gamma \exp\left(-  \frac{\tau^2}{2\sigma^2}  \right) +  \exp\left(- \frac{\alpha \tau}{2}\right),
\end{align*}}
where $\Gamma\geq \sum_{j=1}^{\log_{2}(2n\gamma)} \cN_2(\G, 2^{-j}\gamma, \z)^{-2}$ and $\sigma= 12\int_{\frac{1}{n}}^{\gamma}  \sqrt{n \log \cN_2(\G,\delta,\z)} d\delta$.
\end{lemma}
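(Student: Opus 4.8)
The plan is to run the sequential Dudley chaining behind Lemma~\ref{lem:dudley_probability} along the tree $\z$, but to peel off the coarsest level of the chain so that the offset term $-2\alpha\sum_{t}g^{2}(\z_{t}(\epsilon))$ can be spent there; the offset is what substitutes for the boundedness hypothesis of Lemma~\ref{lem:dudley_probability}. Fix the geometric grid of scales $2^{-j}\gamma$, $j=0,1,\dots,J$, with $J=\ceil{\log_{2}(2n\gamma)}$, so that $2^{-J}\gamma$ is of order $1/n$; this is exactly the range of $j$ in the definition of $\Gamma$. For $g\in\G$ and a path $\epsilon$ let $\pi_{j}(g)$ be a nearest element of a minimal $\ell_{2}$-cover of $\G$ on $\z$ at scale $2^{-j}\gamma$, so $\sum_{t}(g(\z_{t}(\epsilon))-\pi_{j}(g)(\z_{t}(\epsilon)))^{2}\le n(2^{-j}\gamma)^{2}$, and use the telescoping identity $g=\pi_{0}(g)+\sum_{j=1}^{J}\bigl(\pi_{j}(g)-\pi_{j-1}(g)\bigr)+\bigl(g-\pi_{J}(g)\bigr)$ to split $\sum_{t}\epsilon_{t}g(\z_{t}(\epsilon))$ into a base term at scale $\gamma$, $J$ chaining increments, and a residual below scale $1/n$. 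The fact that the cover, hence each $\pi_{j}(g)$, is chosen per-path is dealt with exactly as in \cite{RakSriTew14ptrf}.

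First, the residual is bounded deterministically: Cauchy--Schwarz and the finest-scale cover estimate give $\bigl|\sum_{t}\epsilon_{t}(g-\pi_{J}(g))(\z_{t}(\epsilon))\bigr|\le\sqrt{n}\cdot\sqrt{n}\cdot 2^{-J}\gamma\le 1$ uniformly in $g,\epsilon$, which is the ``$-1$.'' Next, at chaining level $j$ the increment $\pi_{j}(g)-\pi_{j-1}(g)$ takes at most $\cN_{2}(\G,2^{-j}\gamma,\z)^{2}$ values, each a tree whose path-wise $\ell_{2}$-norm is at most $r_{j}:=3\sqrt{n}\,2^{-j}\gamma$; since $t\mapsto\epsilon_{t}w(\z_{t}(\epsilon))$ is a martingale-difference sequence with conditional moment generating functions bounded by $\exp(\lambda^{2}w^{2}(\z_{t}(\epsilon))/2)$ (the elementary bound $\cosh(y)\le e^{y^{2}/2}$, so no boundedness of $\G$ is needed), Azuma's inequality gives $P(\sum_{t}\epsilon_{t}w(\z_{t}(\epsilon))>u)\le\exp(-u^{2}/(2r_{j}^{2}))$ for each fixed tree $w$ in the level-$j$ set. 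Taking $u=2\sqrt{2}\,r_{j}\sqrt{\log\cN_{2}(\G,2^{-j}\gamma,\z)}+u_{j}$ and union-bounding over the level, the $j$-th increment is at most its first part plus $u_{j}$ off an event of probability $\le\cN_{2}(\G,2^{-j}\gamma,\z)^{-2}\exp(-u_{j}^{2}/(2r_{j}^{2}))$. Summing the first parts over $j$ and comparing the Riemann-type sum $\sum_{j}2^{-j}\gamma\sqrt{\log\cN_{2}(\G,2^{-j}\gamma,\z)}$ to $\int_{1/n}^{\gamma}\sqrt{\log\cN_{2}(\G,\delta,\z)}\,d\delta$ (monotonicity of $\delta\mapsto\cN_{2}(\G,\delta,\z)$) reproduces the Dudley term $12\sqrt{2}\int_{1/n}^{\gamma}\sqrt{n\log\cN_{2}(\G,\delta,\z)}\,d\delta$; taking $u_{j}$ suitably (essentially proportional to $r_{j}$, which is where the precise constant in the definition of $\sigma$ is used) makes the bad events sum to $\bigl(\sum_{j}\cN_{2}(\G,2^{-j}\gamma,\z)^{-2}\bigr)\exp(-\tau^{2}/(2\sigma^{2}))\le\Gamma\exp(-\tau^{2}/(2\sigma^{2}))$ while keeping $\sum_{j}u_{j}$ within the deviation budget.

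The base term $\sum_{t}\epsilon_{t}\pi_{0}(g)(\z_{t}(\epsilon))$ is handled with the offset. For a fixed coarse-cover tree $v$, the process $M_{t}=\exp\bigl(\alpha\sum_{s\le t}\epsilon_{s}v(\z_{s}(\epsilon))-\tfrac{\alpha^{2}}{2}\sum_{s\le t}v^{2}(\z_{s}(\epsilon))\bigr)$ is a supermartingale with $\En M_{n}\le 1$ (again from $\cosh(y)\le e^{y^{2}/2}$, again with no boundedness), so Markov gives the self-normalized tail $P\bigl(\sum_{t}\epsilon_{t}v(\z_{t}(\epsilon))-\tfrac{\alpha}{2}\sum_{t}v^{2}(\z_{t}(\epsilon))>s\bigr)\le e^{-\alpha s}$; a union bound over the $\cN_{2}(\G,\gamma,\z)$ elements $v$ with $s=\tfrac{\log\cN_{2}(\G,\gamma,\z)}{\alpha}+\tfrac{\tau}{2}$ produces the term $\tfrac{\log\cN_{2}(\G,\gamma,\z)}{\alpha}$ and the second tail $e^{-\alpha\tau/2}$. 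The only remaining point is to relate the self-normalizer $\tfrac{\alpha}{2}\sum_{t}v^{2}(\z_{t}(\epsilon))$, with $v=\pi_{0}(g)$, to the genuine offset $2\alpha\sum_{t}g^{2}(\z_{t}(\epsilon))$: whenever $\sum_{t}v^{2}(\z_{t}(\epsilon))>4n\gamma^{2}$, the reverse triangle inequality together with $\sum_{t}(g-v)^{2}(\z_{t}(\epsilon))\le n\gamma^{2}$ forces $\sum_{t}g^{2}(\z_{t}(\epsilon))\ge\tfrac14\sum_{t}v^{2}(\z_{t}(\epsilon))$, so the offset pays for the self-normalizer; and whenever $\sum_{t}v^{2}(\z_{t}(\epsilon))\le 4n\gamma^{2}$ the sum $\sum_{t}\epsilon_{t}v(\z_{t}(\epsilon))$ is $2\sqrt{n}\gamma$-sub-Gaussian, so a plain Azuma--union bound over the coarse cover contributes only $O\bigl(\sqrt{n}\gamma\sqrt{\log\cN_{2}(\G,\gamma,\z)}\bigr)$, which folds into the Dudley term.

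Adding the residual, chaining, and base bounds and union-bounding the (two kinds of) failure events gives the claimed inequality. The main obstacle I anticipate is exactly this last reconciliation --- making the single offset $-2\alpha\sum_{t}g^{2}(\z_{t}(\epsilon))$ simultaneously self-normalize the unbounded base term and absorb the $\ell_{2}$- (rather than $\ell_{\infty}$-) approximation bias it introduces, while calibrating the chaining slacks so that only the two advertised tails, with the advertised $\sigma$ and $\Gamma$ and no residual variance term, survive; everything else is the standard bookkeeping of sequential Dudley chaining.
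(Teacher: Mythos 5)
Your proposal follows the same sequential Dudley chaining strategy as the paper's proof: the same multi-scale decomposition of $g$ into a coarse cover element, chaining increments, and a residual below scale $1/n$; the same Cauchy--Schwarz bound of the residual by $1$; the same Hoeffding--Azuma plus union-bound treatment of the chaining increments (the paper makes the budget split across levels explicit via a probability vector $p_k\propto\beta_k\sqrt{n\log\cN_2(\beta_k)}$, which you leave implicit in ``taking $u_j$ suitably''); and the same self-normalized Chernoff / supermartingale bound for the coarse base term, yielding the sub-exponential tail.

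The one place where you genuinely deviate is the conversion of the offset $-2\alpha\sum_t g^2(\z_t(\epsilon))$ into a self-normalizer for the coarse cover element $\v^0$. The paper asserts directly that $-2\alpha\sum_t g^2 \le -\alpha\sum_t(\v^0_t)^2$, but for a sequential $\ell_2$ cover this holds only up to an additive error: $\sum_t(\v^0_t)^2 \le 2\sum_t g^2 + 2\sum_t(g-\v^0_t)^2 \le 2\sum_t g^2 + 2n\gamma^2$, so one actually gets $-2\alpha\sum_t g^2 \le -\alpha\sum_t(\v^0_t)^2 + 2\alpha n\gamma^2$, and the leftover $2\alpha n\gamma^2$ does not obviously fit into the lemma's fixed slack of $1$ for arbitrary $\alpha>0$ and $\gamma\ge 1/n$. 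Your case split on $\sum_t(\v^0_t)^2 \gtrless 4n\gamma^2$ sidesteps this: in the large-norm case the reverse triangle inequality gives $\sum_t g^2 \ge \tfrac14\sum_t(\v^0_t)^2$ so the offset does pay for a $\tfrac{\alpha}{2}\sum_t(\v^0_t)^2$ normalizer; in the small-norm case you drop the offset and rely on a plain Azuma--union bound, contributing $O(\sqrt{n}\gamma\sqrt{\log\cN_2(\gamma)})$, which you fold into the Dudley integral (valid since $\gamma\sqrt{\log\cN_2(\gamma)}\le 2\int_{1/n}^\gamma\sqrt{\log\cN_2(\delta)}\,d\delta$ for $\gamma\ge 2/n$ by monotonicity). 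This is more careful than the paper's step and appears to repair a small gap. The only caveat is that the small-norm case's sub-Gaussian contribution and the chaining slacks now share the same $12\sqrt{2}\int$ budget, so the constant accounting is more delicate than you carry out; you flag this honestly but do not verify it, and a complete write-up would need to confirm that all pieces fit within the advertised constants $\sigma$ and $\Gamma$ and within the two advertised tails.
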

We observe that the probability of deviation has both subgaussian and subexponential components.

Using the above result and Proposition~\ref{prop:main} leads to useful bounds on the quantities in Lemma \ref{lem:sym} for specific types of adaptive rates. Given a tree $\z$, we obtain a bound on the expected size of the sequential Rademacher process when we subtract off the data-dependent $\ell_2$-norm of the function on the tree $\z$, adjusted by logarithmic terms.
\begin{corollary}
	\label{cor:small_loss_supervised}
	Suppose $\G\subseteq [-1,1]^\cZ$, and let $\z$ be any $\cZ$-valued tree of depth $n$. Assume $\log \cN_2(\G,\delta,n)\leq \delta^{-p}$ for some $p<2$. Then
 {\small\begin{align*}
\En\sup_{g \in \G, \gamma} \Biggl\{ \sum_{t=1}^n \epsilon_t g(\z_t(\epsilon))- 4 \sqrt{2(\log n) \log \cN_2(\G,\gamma/2,\z) \left( \sum_{t=1}^n g^2(\z_t(\epsilon)) +1 \right) } & \\- 24\sqrt{2} \log n\int_{1/n}^{\gamma} \sqrt{n \log \cN_2(\G,\delta,\z)}& d \delta \Biggr\}  \le 7+ 2\log n~.
\end{align*}}
\end{corollary}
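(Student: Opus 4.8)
The plan is to combine Lemma~\ref{lem:proboffset} (the tail bound for the offset Rademacher process) with the maximal inequality of Proposition~\ref{prop:main}, using a discretization over the scale parameter $\gamma$. The supremum in the corollary is taken jointly over $g \in \G$ and over $\gamma$; the role of Proposition~\ref{prop:main} is precisely to collapse the union over countably many scales $\gamma$ into a single expectation bounded by a constant, while the role of Lemma~\ref{lem:proboffset} is to provide, for each fixed scale, a two-component (subgaussian plus subexponential) tail bound of exactly the form~\eqref{eq:tails}.

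First I would reduce the continuous range of $\gamma$ to a dyadic grid $\gamma_i = 2^{-i}$ (intersected with $[1/n, 1]$, say), noting that the integrand $\sqrt{n\log\cN_2(\G,\delta,\z)}$ and the correction terms are monotone enough in $\gamma$ that passing from an arbitrary $\gamma$ to the nearest grid point costs only a constant factor — this is where the assumption $\log\cN_2(\G,\delta,n)\le \delta^{-p}$ with $p<2$ enters, since it makes the entropy integral $\int_{1/n}^{\gamma}\sqrt{n\log\cN_2(\G,\delta,\z)}\,d\delta$ converge (up to the $\sqrt{n}$ factor) as the lower limit shrinks, and also controls the sum $\sum_j \cN_2(\G,2^{-j}\gamma,\z)^{-2}$ so that the quantity $\Gamma$ in Lemma~\ref{lem:proboffset} is bounded by an absolute constant uniformly in $\gamma$. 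For each grid point I then instantiate Lemma~\ref{lem:proboffset} with $\alpha = \alpha_i$ chosen to balance the $\log\cN_2(\G,\gamma_i/2,\z)/\alpha$ term against the $\alpha\sum_t g^2$ term: taking $\alpha_i \asymp \sqrt{\log\cN_2(\G,\gamma_i/2,\z)/(\sum_t g^2(\z_t(\epsilon))+1)}$ turns the combination $2\alpha\sum_t g^2 + \log\cN_2/\alpha$ into the $4\sqrt{2(\dots)(\sum_t g^2+1)}$ expression appearing in the corollary. This gives me, for each $i$, a random variable $X_i \defeq \sup_{g}\{\sum_t \epsilon_t g(\z_t(\epsilon)) - (\text{the }\gamma_i\text{-terms})\}$ obeying $P(X_i - B_i > \tau)\le \Gamma e^{-\tau^2/2\sigma_i^2} + e^{-\alpha_i\tau/2}$ with $B_i$ absorbing the leftover ``$-1$'' and small additive constants, $\sigma_i$ the entropy-integral quantity, and $s_i \asymp \alpha_i$.

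Next I apply Proposition~\ref{prop:main} with this family $(X_i)$, choosing $\bar\sigma$ and $\bar s$ appropriately (e.g. $\bar\sigma \asymp 1$, $\bar s \asymp \alpha_1$) so that $3C_1\bar\sigma + 2C_2\bar s^{-1}$ is $O(1)$. The key point is to check that the dilation factor $\theta_i$ from Proposition~\ref{prop:main} — which multiplies $B_i$ by a $\sqrt{\log i}$-type and a $\log i$-type correction — is, after the substitutions above, absorbed into the $\log n$ factors that already appear in the corollary's statement: the $\sqrt{2\log n}$ inside the square root and the $\log n$ multiplying the entropy integral are exactly there to soak up $\theta_i$. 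Since only $O(\log n)$ dyadic scales lie in $[1/n,1]$, the index $i$ ranges over at most $O(\log n)$ values, so $\log i = O(\log\log n)$, which is dominated by the $\log n$ slack; this is what makes the final bound $7 + 2\log n$ rather than something larger. Reading off the constants from~\eqref{eq:maximal_inequality} and the reductions then yields the claimed $7 + 2\log n$.

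\textbf{The main obstacle} I anticipate is the bookkeeping in the balancing step: the optimal $\alpha_i$ depends on $\sum_t g^2(\z_t(\epsilon))$, which is itself inside the supremum over $g$, so one cannot literally plug a data-dependent $\alpha$ into Lemma~\ref{lem:proboffset} (which is stated for a fixed $\alpha$). The standard fix is a second, inner discretization — over the dyadic values of $\sum_t g^2(\z_t(\epsilon)) \in [0,n]$, of which there are again only $O(\log n)$ — applying Lemma~\ref{lem:proboffset} with a fixed $\alpha$ tuned to each dyadic level and then taking a union over this grid as well (either folding it into the same application of Proposition~\ref{prop:main}, now indexed by pairs $(i,j)$, or via a crude union bound since there are only $O(\log^2 n)$ pairs). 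Verifying that this double discretization still produces tails of the clean form~\eqref{eq:tails} with the stated $\sigma_i, s_i$, and that all the resulting logarithmic corrections genuinely fit under the $2\log n$ budget, is the delicate part; everything else is a matter of carefully matching constants.
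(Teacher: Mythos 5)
Your proposal takes essentially the same route as the paper: rewrite the $4\sqrt{2(\log n)\cdot\log\cN_2\cdot(\sum_t g^2+1)}$ term as an infimum over $\alpha$ of a sum of a $\log\cN_2/\alpha$ piece and an $\alpha\sum_t g^2$ piece, discretize both $\gamma$ and $\alpha$ (equivalently, the dyadic level of $\sum_t g^2 \in [0,n]$) on $O(\log n)$-sized grids, invoke Lemma~\ref{lem:proboffset} to get a subgaussian-plus-subexponential tail for each $(i,j)$ pair, and then apply Proposition~\ref{prop:main} to the $O(\log^2 n)$-indexed family, with the $\log n$ slack in the statement absorbing the dilation factor $\theta_{i,j}$. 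The one small misattribution worth noting: the hypothesis $\log\cN_2(\G,\delta,n)\le\delta^{-p}$ with $p<2$ is used in the paper to guarantee $\sigma_j \lesssim \sqrt{n}$ (hence $\log(\sigma_j/\sigma_1)\le\log n$ fits under the budget), not to make $\Gamma$ an absolute constant — the paper simply bounds $\Gamma \le \log(2n)$ crudely and compensates by taking $\bar\sigma = \min\{1/\Gamma,\sigma_1\}$ in Proposition~\ref{prop:main}, so that $3\Gamma\bar\sigma$ stays $O(1)$.
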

The next corollary yields slightly faster rates than Corollary \ref{cor:small_loss_supervised} when $\abs{\G} < \infty$.
\begin{corollary}
	\label{cor:small_loss_finite}
	Suppose $\G\subseteq [-1,1]^\cZ$ with $\abs{\G} = N$, and let $\z$ be any $\cZ$-valued tree of depth $n$. Then
	\begin{small}
	\[
\En\sup_{g\in{}\G}\crl*{\sum_{t=1}^{n}\eps_{t}g(\z_{t}(\eps)) - 2\log\prn*{\log{}N\sum_{t=1}^{n}g^2(\z(\eps)) + e}\sqrt{32\prn*{\log{}N\sum_{t=1}^{n}g^2(\z(\eps)) + e}}} \leq{} 1.
	\]
	\end{small}
\end{corollary}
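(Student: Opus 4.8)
The plan is to instantiate Lemma~\ref{lem:proboffset} in the finite-class regime and then feed the resulting tail bound into Proposition~\ref{prop:main}, much as in the proof of Corollary~\ref{cor:small_loss_supervised} but tracking constants more carefully since $\G$ is finite. First I would note that when $\abs{\G}=N$ the covering numbers trivialize: $\cN_2(\G,\delta,\z)\leq N$ for every $\delta>0$, and moreover for $\delta$ slightly below the diameter the cover can be taken to be $\G$ itself, so the Dudley-type integral $\int_{1/n}^{\gamma}\sqrt{n\log\cN_2(\G,\delta,\z)}\,d\delta$ is just $\sqrt{n\log N}\,(\gamma-1/n)\leq \gamma\sqrt{n\log N}$. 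Choosing $\gamma$ of constant order (say $\gamma=1$, or more precisely the diameter bound $\gamma = 2$, which is admissible since $\gamma\geq 1/n$) collapses Lemma~\ref{lem:proboffset} to a statement about the single offset process
\[
Z \;=\; \sup_{g\in\G}\sum_{t=1}^{n}\bigl(\epsilon_t g(\z_t(\epsilon)) - 2\alpha\, g^2(\z_t(\epsilon))\bigr),
\]
with the identification $B \asymp \alpha^{-1}\log N + \sqrt{n\log N}$ (the constant terms), $\sigma\asymp\sqrt{n\log N}$, $s\asymp\alpha$, and $\Gamma\leq \log_2(2n\gamma)\cdot 1 \leq \log(en^2)$ since $N\geq 1$ makes each summand $\cN_2(\cdot)^{-2}\leq 1$; actually one can do better and bound $\Gamma$ by a constant when $N$ is moderate, but $\log(en^2)$ suffices.

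The key idea, as in the supervised-learning corollary, is that the offset $2\alpha\sum_t g^2(\z_t(\epsilon))$ inside $Z$ lets us \emph{optimize over $\alpha$ pointwise in $g$}: the expression $\frac{\log N}{\alpha} + 2\alpha\sum_t g^2(\z_t(\epsilon))$ is minimized at $\alpha^\star(g) = \sqrt{\log N/(2\sum_t g^2 + \text{const})}$, giving the term $\asymp\sqrt{\log N(\sum_t g^2 + e)}$ that appears in the statement. The logarithmic prefactor $2\log(\log N\sum_t g^2 + e)$ and the slack "$+e$" are exactly what Proposition~\ref{prop:main} produces: we apply it with index set running over a geometric grid of values of $\alpha$ (equivalently, a grid of putative values of $\sum_t g^2$), set $B_i$ to the corresponding $\sqrt{\log N(\cdot)}$-scale, and read off $\theta_i = \Theta(\log i)$; since the grid point index $i$ scales like $\log(\log N \sum_t g^2)$, the product $B_i\theta_i$ is the claimed quantity. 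The right-hand side $3C_1\bar\sigma + 2C_2\bar s^{-1}$ is then a constant once $\bar\sigma$ and $\bar s$ are chosen at the finest scale, yielding the bound $\leq 1$ after bookkeeping of the absolute constants ($C_1 = \Gamma \leq \log(en^2)$, but absorbed because $\bar\sigma$ is taken small; $C_2 = 1$).

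The main obstacle I anticipate is purely bookkeeping: getting the absolute constant down to exactly $1$. Lemma~\ref{lem:proboffset} carries factors like $12\sqrt2$ and $24\sqrt2$, and Proposition~\ref{prop:main} contributes the "$+1$" in $\theta_i$ and the $\sqrt{2\log(\sigma_i/\bar\sigma)+4\log i}$ factor; making all of this combine into the clean constants $2$, $32$, $e$ in the statement requires choosing the $\alpha$-grid ratio, $\bar\sigma$, and $\bar s$ quite deliberately and then verifying the final arithmetic. I would handle this by being generous at each inequality (the class is finite so we have room to spare), checking that the subgaussian term governs for the relevant range of $\tau$ and the subexponential term only for the tail, and confirming that $\theta_i\leq 2\log(\cdots)$ holds with the displayed argument. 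A secondary, minor subtlety is the joint supremum over $g$ \emph{and} the implicit $\alpha$ (or $\gamma$): one must confirm the grid over $\alpha$ is fine enough that the discretization error is absorbed into the "$+e$" slack, which it is since the objective is smooth in $\log\alpha$. Everything else follows the template of Corollary~\ref{cor:small_loss_supervised}, just with $\cN_2$ replaced by the constant $N$ and the Dudley integral replaced by its trivial bound.
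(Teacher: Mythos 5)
Your overall template is the right one and matches the paper's: express the subtracted penalty as an infimum over $\alpha$, discretize $\alpha$ geometrically, apply Lemma~\ref{lem:proboffset} to each grid point to get a one-sided tail on $X_i = \sup_{g}\sum_t(\epsilon_t g(\z_t(\epsilon)) - \tfrac{\alpha_i}{2}g^2(\z_t(\epsilon)))$, and finish with Proposition~\ref{prop:main}. But your choice of $\gamma$ is a genuine error that would prevent the claimed bound from coming out.

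You propose to take $\gamma$ of constant order (e.g.\ $\gamma\approx 2$, the diameter), arguing this ``collapses'' Lemma~\ref{lem:proboffset}. It does not: the Dudley integral $\int_{1/n}^\gamma\sqrt{n\log\cN_2(\G,\delta,\z)}\,d\delta$ with $\gamma=\Theta(1)$ is $\Theta(\sqrt{n\log N})$, and so both $B_i$ and $\sigma_i$ in your instantiation acquire an additive $\Theta(\sqrt{n\log N})$ term that is independent of $\alpha_i$. This is fatal: Proposition~\ref{prop:main} will then force $B_i\theta_i\gtrsim\sqrt{n\log N}$ for every $i$, but the penalty $2\log(\log N\sum_t g^2+e)\sqrt{32(\log N\sum_t g^2+e)}$ is only $O(\sqrt{\log N})$ for functions with $\sum_t g^2(\z_t(\epsilon))$ small, and the entire point of this corollary is to have a penalty that \emph{beats} $\sqrt{n\log N}$ in exactly that regime. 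The correct move --- which is what the paper does --- is to take $\gamma = 1/n$. Since $\abs{\G}=N$, the cover at scale $1/n$ is already trivial ($\cN_2(\G,1/n,\z)\leq N$), so nothing is lost; the integral vanishes, $\sigma = 0$, the subgaussian branch disappears entirely, and Lemma~\ref{lem:proboffset} yields a purely subexponential tail $P(X_i - B_i > \tau)\leq\exp(-\alpha_i\tau/8)$ with $B_i\asymp\log N/\alpha_i$ and no spurious $\sqrt{n\log N}$. Proposition~\ref{prop:main} is then applied with $C_1=0$, $s_i=\alpha_i/8$, $\bar s = s_1$.

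A secondary imprecision: with a subexponential tail and a geometric grid $\alpha_i = d_u 2^{-(i-1)}$, the multiplier is $\theta_i=(B_is_i)^{-1}\log\bigl(i^2(\bar s/s_i)\bigr)+1$, and since $\bar s/s_i = 2^{i-1}$ this is $\Theta(i)$, not $\Theta(\log i)$ as you wrote. The arithmetic still closes because $i$ is $O(\log(\log N\sum_t g^2))$ at the optimal grid index, but getting $\theta_i$ right matters when you try to match the paper's exact prefactor $2\log(\log N\sum_t g^2 + e)$ against $\frac{C}{4}\log(\sqrt{C}\log N/\alpha_i)$ with $C=32$.
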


\section{Achievable Bounds}
In this section we use Lemma \ref{lem:sym} along with the probabilistic tools from the previous section to obtain an array of achievable adaptive bounds for various online learning problems. We subdivide the section into one subsection for each category of adaptive bound described in Section~\ref{sec:framework}.

\subsection{Adapting to Data}
Here we consider adaptive rates of the form $\B_n(x_{1:n},y_{1:n})$,  uniform over $f \in \F$. We show the power of the developed tools on the following example. 

\begin{example}[\textbf{Online Linear Optimization in $\reals^d$}]
Consider the problem of online linear optimization where $\F = \{x\mapsto\tri*{w,x}:w\in\R^{d},\nrm*{w}_{2}\leq{}1\}$, $\X = \{x\in\R^{d}:\nrm*{x}_{2}\leq{}1\}$, and $\ls$ is a $1$-Lipschitz loss. The following adaptive rate is achievable:
\begin{small}\[
\B_n(x_{1:n}) = 16 \sqrt{d} \log(n) \norm{\left({\textstyle\sum_{t=1}^n}  x_t x_t^\top\right)^{1/2}}_\sigma + 16 \sqrt{d} \log(n), 
\]
\end{small}where $\norm{\cdot}_\sigma$ is the spectral norm. Let us deduce this result from  Corollary~\ref{cor:small_loss_supervised}. First, observe that
\begin{small}
\[
\norm{\left({\textstyle\sum_{t=1}^n}   x_t x_t^\top\right)^{1/2}}_\sigma 
= \sup_{w : \norm{w}_2 \le 1} \sqrt{w^\top {\textstyle\sum_{t=1}^n}   x_t x_t^\top w} = \sup_{f \in \F} \sqrt{{\textstyle\sum_{t=1}^n}  f^{2}(x_t)}.
\]
\end{small}The linear function class $\F$ can be covered point-wise at any scale $\delta$ with $(3/\delta)^{d}$ balls and thus $\cN(\ell \circ \F, 1/(2n),\z) \le (6n)^{d}$ for any $\Y$-valued tree $\z$. We apply Corollary \ref{cor:small_loss_supervised} with $\gamma = 1/n$ and the integral term in the corollary vanishes, yielding the claimed statement.

\end{example}

\subsection{Model Adaptation}
In this subsection we focus on achievable rates for oracle inequalities and model selection, but without dependence on data. The form of the rate is therefore $\B_n(f)$. Assume we have a class $\F = \bigcup_{R \geq 1} \F(R)$, with the property that $\F(R) \subseteq \F(R')$ for any $R \le R'$. If we are told by an oracle that regret will be measured with respect to those hypotheses $f \in \F$ with $R(f) \deq \inf\{R : f \in \F(R)\} \le R^*$, then using the minimax algorithm one can guarantee a regret bound of at most the sequential Rademacher complexity $\Rad_{n}(\F(R^*))$. On the other hand, given the optimality of the sequential Rademacher complexity for online learning problems for commonly encountered losses, we can argue that for any $f \in \F$ chosen in hindsight, one cannot expect a regret better than order $\Rad_n(\F(R(f)))$. In this section we show that simultaneously for all $f \in \F$, one can attain an adaptive upper bound of $O\left( \Rad_n(\F(R(f))) \sqrt{\log \left(\Rad_n(\F(R(f))) \right)}\log^{3/2}n\right)$. That is, we may predict as if we knew the optimal radius, at the price of a logarithmic factor. This is the price of adaptation.

\begin{corollary}
\label{cor:generic_rademacher}
For any class of predictors $\F$ with $\F(1)$ non-empty, if one considers the supervised learning problem with $1$-Lipschitz loss $\ell$, the following rate is achievable:
\begin{small}
\begin{align*}
\B_{n}(f) &= \log^{3/2} n\left(K_1 \Rad_n(\F(2R(f))) \left( 1 +  \sqrt{\log\left( \frac{\log(2R(f)) \cdot\Rad_n(\F(2R(f)))}{\Rad_n(\F(1))} \right)}\right) + K_2 \Gamma \Rad_n(\F(1))\right),
\end{align*}
\end{small}for absolute constants $K_1,K_2$, and $\Gamma$ defined in Lemma~\ref{lem:dudley_probability}.
\end{corollary}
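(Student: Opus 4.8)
The plan is to verify that $\A_n(\F,\B_n)\le 0$; by the characterization stated right after the definition of $\A_n$, this is equivalent to achievability of $\B_n$. Since the loss is convex and $1$-Lipschitz in its first argument, bound~\eqref{eq:convlip} of Lemma~\ref{lem:sym} reduces the problem to showing
\[
\sup_{\x}\,\Es{\epsilon}{\sup_{f\in\F}\crl*{2\sum_{t=1}^n\epsilon_t f(\x_t(\epsilon))-\B_n(f)}}\le 0
\]
(the supremum over outcome trees $\y$ being vacuous, as $\B_n$ depends only on $f$). Fix a tree $\x$. I would peel $\F$ into the dyadic sub-classes $\F(2^i)$, $i=1,2,\dots$: for any $f\in\F$ the index $i(f)\defeq\max\{1,\lceil\log_2 R(f)\rceil\}$ satisfies $R(f)\le 2^{i(f)}\le 2R(f)$, so $f\in\F(2^{i(f)})$ and, by nestedness, $\Rad_n(\F(2^{i(f)}))\le\Rad_n(\F(2R(f)))$. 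Writing $X_i\defeq 2\sup_{f\in\F(2^i)}\abs*{\sum_{t=1}^n\epsilon_t f(\x_t(\epsilon))}$ and letting $\B^{(i)}_n$ be the value of $\B_n(\cdot)$ on the $i$-th shell, it suffices to prove $\Es{\epsilon}{\sup_{i\ge1}\{X_i-\B^{(i)}_n\}}\le 0$; the outer supremum over $\x$ is then automatic because every estimate below is phrased through $\Rad_n(\cdot)=\sup_\z\Rad_n(\cdot,\z)$.

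For the tail of each $X_i$ I would rescale $\F(2^i)$ by its (finite) sup-norm radius so that it lies in $[-1,1]^\X$ --- the scaling drops out of both $\Rad_n$ and the deviation level --- and apply Lemma~\ref{lem:dudley_probability}, whose mild hypotheses (notably a single $\Gamma$ bounding $\sum_j\cN_\infty(\F(2^i),2^{-j},n)^{-1}$ for all $i$ at once, exactly the $\Gamma$ appearing in the corollary) we take as given. This yields, for every $\theta>\sqrt{12/n}$,
\[
P_\epsilon\prn*{X_i>16\prn*{1+\theta\sqrt{8n\log^3(en^2)}}\Rad_n(\F(2^i))}\le 2\Gamma e^{-n\theta^2/4}.
\]
Putting $B_i\defeq 16\Rad_n(\F(2^i))$ and solving for $\theta$ in terms of $\tau=16\theta\sqrt{8n\log^3(en^2)}\Rad_n(\F(2^i))$ recasts this as the subgaussian tail~\eqref{eq:tails} with $C_2=0$, $C_1$ an absolute constant times $\Gamma$, and $\sigma_i$ an absolute constant times $\log^{3/2}(n)\,\Rad_n(\F(2^i))$. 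The constraint $\theta>\sqrt{12/n}$ only excludes $\tau$ below a fixed multiple of $\sigma_i$, where the trivial bound $P_\epsilon(\cdot)\le1$ already fits~\eqref{eq:tails} after enlarging $C_1$ by an absolute constant, and the leftover additive slack is swallowed by the ``$+1$'' in $\theta_i$.

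Now I would invoke Proposition~\ref{prop:main} on the index set $\{1,2,\dots\}$ with $\bar\sigma=\sigma_1$ (assuming $\Rad_n(\F(1))>0$, the degenerate case being trivial) and with the subexponential branch inactive since $C_2=0$. Here $\sigma_i/B_i$ is an absolute constant times $\log^{3/2}n$, independent of $i$; $\sigma_i/\bar\sigma$ is an absolute constant times $\Rad_n(\F(2^i))/\Rad_n(\F(1))$; and $\log i(f)$ is, up to an absolute additive constant, at most $\log\log(2R(f))$. Hence the dilated radius $B_i\theta_i=\sigma_i\sqrt{2\log(\sigma_i/\bar\sigma)+4\log i}+B_i$ is, up to absolute constants,
\[
\Rad_n(\F(2^{i}))\log^{3/2}n\prn*{1+\sqrt{\log\tfrac{\Rad_n(\F(2^{i}))}{\Rad_n(\F(1))}+\log\log(2R(f))}},
\]
while the right side of~\eqref{eq:maximal_inequality} equals $3C_1\bar\sigma$, an absolute constant times $\Gamma\,\Rad_n(\F(1))\log^{3/2}n$. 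Using $\Rad_n(\F(2^{i(f)}))\le\Rad_n(\F(2R(f)))$ and $\log^{3/2}n\ge1$, one checks that for suitable absolute $K_1,K_2$ the stated $\B_n(f)$ dominates $B_{i(f)}\theta_{i(f)}+3C_1\bar\sigma$ for every $f$; therefore $\Es{\epsilon}{\sup_{i\ge1}\{X_i-\B^{(i)}_n\}}\le 3C_1\bar\sigma-K_2\Gamma\Rad_n(\F(1))\log^{3/2}n\le0$, and the claim follows.

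The main obstacle is purely the bookkeeping: propagating the constants and powers of $\log n$ through Lemma~\ref{lem:dudley_probability} and Proposition~\ref{prop:main} so that $B_i\theta_i$ lines up with the advertised $\B_n(f)$, and in particular confirming that the sub-constant-$\tau$ regime of Lemma~\ref{lem:dudley_probability} together with the various ``$+1$'' slacks contribute nothing beyond the lower-order $\Gamma\Rad_n(\F(1))\log^{3/2}n$ term. The conceptual skeleton --- symmetrize via~\eqref{eq:convlip}, peel into dyadic shells, and apply the one-sided maximal inequality --- is a direct assembly of the tools from Sections~2 and~3.
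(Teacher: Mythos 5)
Your proposal follows essentially the same route as the paper's proof: symmetrize via Eq.~\eqref{eq:convlip}, peel $\F$ into dyadic shells $\F(R_i)$ with $R_i = 2^i$, apply Lemma~\ref{lem:dudley_probability} on each shell to get a one-sided subgaussian tail for $X_i = \sup_{f\in\F(R_i)}\bigl|\sum_t \epsilon_t f(\x_t(\epsilon))\bigr|$, and then invoke Proposition~\ref{prop:main} with $C_2=0$ to obtain the dilated radii $B_i\theta_i$ matching the claimed $\B_n(f)$. The only point you raise that the paper passes over silently is the need to rescale $\F(R_i)$ into $[-1,1]^{\X}$ before applying Lemma~\ref{lem:dudley_probability}, and the requirement that a single $\Gamma$ bound the covering-number sums uniformly over $i$; you flag these and take them as given, which matches the paper's level of rigor.
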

In fact, this statement is true more generally with $\F(2R(f))$ replaced by $\ell \circ \F(2R(f))$. It is tempting to attempt to prove the above statement with the exponential weights algorithm running as an aggregation procedure over the solutions for each $R$. In general, this approach will fail for two reasons. 
First, if function values grow with $R$, the exponential weights bound will scale linearly with this value. Second, an experts bound yields only a slower $\sqrt{n}$ rate.

As a special case of the above lemma, we obtain an \emph{online PAC-Bayesian theorem}. We postpone this example to the next sub-section where we get a \emph{data-dependent} version of this result. We now provide a bound for online linear optimization in $2$-smooth Banach spaces that automatically adapts to the norm of the comparator. To prove it, we use the concentration bound from \cite{Pinelis} (Lemma \ref{lem:pinelis}) within the proof of the above corollary to remove the extra logarithmic factors.

\begin{example}[\textbf{Unconstrained Linear Optimization}]
\label{eg:linopt_bound}
Consider linear optimization with $\Y$ being the unit ball of some reflexive Banach space with norm $\norm{\cdot}_*$. Let $\F = \D$ be the dual space and the loss $\ell(\hat{y},y) = \ip{\hat{y}}{y}$ (where we are using $\ip{\cdot}{\cdot}$ to represent the linear functional in the first argument to the second argument). Define $\F(R) = \crl*{f \mid{} \nrm{f}\leq{}R}$ where $\norm{\cdot}$ is the norm dual to $\norm{\cdot}_*$. If the unit ball of $\Y$ is $(2,D)$-smooth, then the following rate is achievable for all $f$ with $\nrm{f}\geq{}1$:
\[
\B(f) = D\sqrt{n}\prn*{8\nrm{f}\prn*{
1 + \sqrt{\log(2\nrm{f}) + \log\log(2\nrm{f})
}
}
+12}.
\]
For the case of a Hilbert space, the above bound was achieved by \cite{McMahan2014}.
\end{example}


\subsection{Adapting to Data and Model Simultaneously}
We now study achievable bounds that perform online model selection in a data-adaptive way. Of specific interest is our online optimistic PAC-Bayesian bound. This bound should be compared to \cite{LuoSch15,Koolen15}, with the reader noting that it is independent of the number of experts, is algorithm-independent, and depends quadratically on the expected loss of the expert we compare against.


\begin{example}[\textbf{Generalized Predictable Sequences (Supervised Learning)}]
Consider an online supervised learning problem with a convex $1$-Lipschitz loss. Let $(M_t)_{t\ge1}$ be any predictable sequence that the learner can compute at round $t$ based on information provided so far, including $x_t$ (One can think of the predictable sequence $M_t$ as a prior guess for the hypothesis we would compare with in hindsight). Then the following adaptive rate is achievable:
\begin{small}
\begin{align*}
\B_n(f;x_{1:n}) =  \inf_{\gamma}\Biggl\{ K_1 \sqrt{\log n \cdot \log \cN_{2}(\F,\gamma/2, n) \cdot  \left(\sum_{t=1}^n \left(f(x_t) - M_t\right)^2+1\right) }&
\\ + K_2 \log n\int_{1/n}^\gamma \sqrt{n \log \cN_{2}(\F,\delta, n)} d \delta  + 2 \log n + &7\Biggr\},
\end{align*}
\end{small}for constants $K_1=4\sqrt{2},K_2=24\sqrt{2}$ from Corollary \ref{cor:small_loss_supervised}. The achievability is a direct consequence of Eq. \eqref{eq:convlip} in Lemma~\ref{lem:sym}, followed by Corollary \ref{cor:small_loss_supervised} (one can include any predictable sequence in the Rademacher average part because $\sum_{t} M_t \epsilon_t$ is zero mean). Particularly, if we assume that the sequential covering of class $\F$ grows as $\log \cN_2(\F,\epsilon, n) \le \epsilon^{-p}$ for some $p < 2$, we get that
\begin{small}\[
\B_n(f) = \tilde{O}\left(  \left(\sqrt{\textstyle{}\sum_{t=1}^n \left(f(x_t) - M_t\right)^2 + 1 }\right)^{1 - \frac{p}{2}} \left(\sqrt{n} \right)^{p/2} \right).
\]
\end{small}As $p$ gets closer to $0$, we get full adaptivity and replace $n$ by $\sum_{t=1}^n \left(f(x_t) - M_t\right)^2 + 1$. On the other hand, as $p$ gets closer to $2$ (i.e.  more complex function classes), we do not adapt and get a uniform bound in terms of $n$. For $p\in(0,2)$, we attain a natural interpolation. 
\end{example}

\begin{example}[\textbf{Regret to Fixed Vs Regret to Best (Supervised Learning)}]
Consider an online supervised learning problem with a convex $1$-Lipschitz loss and let $\abs{\F} = N$. Let $f^{\star}\in{}\F$ be a fixed expert chosen in advance. The following bound is achievable:
\begin{small}\[
\B_n(f, \xr[n]) = 4\log\prn*{\log{}N\sum_{t=1}^{n}(f(x_t) - f^{\star}(x_t))^{2} + e}\sqrt{32\prn*{\log{}N\sum_{t=1}^{n}(f(x_t) - f^{\star}(x_t))^{2} + e}} + 2.
\]
\end{small}In particular, against $f^{\star}$ we have $\B_n(f^{\star}, \xr[n]) = O(1)$, and against an arbitrary expert we have $\B_n(f, \xr[n]) = O\prn*{\sqrt{n\log{}N}\prn*{\log{}(n \cdot\log{}N})}$.
This bound follows from Eq. \eqref{eq:convlip} in Lemma~\ref{lem:sym} followed by Corollary \ref{cor:small_loss_finite}. 
This extends the study of \cite{even2008regret} to supervised learning and general class of experts $\F$.
\end{example}

\begin{example}[\textbf{Optimistic PAC-Bayes}] \label{eg:pacb}
Assume that we have a countable set of experts and that the loss for each expert on any round is non-negative and bounded by $1$. The function class $\F$ is the set of all distributions over these experts, and $\X = \{0\}$. This setting can be formulated as online linear optimization where the loss of mixture $f$ over experts, given instance $y$, is $\ip{f}{y}$, the expected loss under the mixture. The following adaptive bound is achievable:
\begin{small}
\begin{align*}
\B_{n}(f; y_{1:n}) &=  \sqrt{50 \left(\mrm{KL}(f|\pi)+\log(n)\right) \sum_{t=1}^n \En_{i\sim{}f}\ip{e_i}{y_t}^2} + 50 \left(\mrm{KL}(f|\pi) + \log(n)\right)   + 10.
\end{align*}
\end{small}This adaptive bound is an \textbf{online PAC-Bayesian bound}. The rate adapts not only to the KL divergence of $f$ with fixed prior $\pi$ but also replaces $n$ with $\sum_{t=1}^n \En_{i\sim{}f}\ip{e_i}{y_t}^2$. Note that we have $\sum_{t=1}^n \En_{i\sim{}f}\ip{e_i}{y_t}^2 \leq \sum_{t=1}^n \ip{f}{y_t}$, yielding the small-loss type bound described earlier. This is an improvement over the bound in \cite{LuoSch15} in that the bound is independent of number of experts, and so holds even for countably infinite sets of experts. The KL term in our bound may be compared to the MDL-style term in the bound of \cite{Koolen15}. If we have a large (but finite) number of experts and take $\pi$ to be uniform, the above bound provides an improvement over both \cite{chaudhuri2009parameter}\footnote{See \cite{LuoSch15} for a comparison of $\KL$-based bounds and quantile bounds.} and \cite{LuoSch15}. \\[1mm]
Evaluating the above bound with a distribution $f$ that places all its weight on any one expert appears to address the open question posed by \cite{cesa2007improved} of obtaining algorithm-independent oracle-type variance bounds for experts.
The proof of achievability of the above rate is shown in the appendix because it requires a slight variation on the symmetrization lemma specific to the problem.
\end{example}


\section{Relaxations for Adaptive Learning}
To design algorithms for achievable rates, we extend the framework of online relaxations from \cite{RSS12}. A relaxation $\mathbf{Rel}_n : \bigcup_{t =0}^n \X^{t} \times \Y^{t} \to \reals$ that satisfies the \emph{initial condition},
\begin{small}\begin{equation}
\label{eq:initial}
\Relaxn(\xr[n], \yr[n]) \geq -\inf_{f\in\F}\crl*{
\sum\limits_{t=1}^{n}\ls(f(x_t), y_t) + \B_n(f;\xr[n],\yr[n])
},
\end{equation}
\end{small}and the \emph{recursive condition},
\begin{small}\begin{equation}
\label{eq:admissibility}
\Relaxn(\xr[t-1], \yr[t-1]) \geq
\sup_{x_t\in{}\X}\inf_{q_t\in{}\Delta(\D)}\sup_{y_t\in\Y}\En_{\hat{y}\sim{}q_t}\brk*{
\ls(\hat{y}_t,y_t) + \Relaxn(\xr[t], \yr[t])
},
\end{equation}
\end{small}is said to be \emph{admissible for the adaptive rate $\B_n$}.
The relaxation's corresponding strategy is \linebreak $\hat{q}_t = \argmin_{q_t\in{}\Delta(\D)}\sup_{y_t\in\Y}\En_{\hat{y}\sim{}q_t}\brk*{
\ls(\hat{y}_t,y_t) + \Relaxn(\xr[t], \yr[t])
 }$,
which enjoys the adaptive bound
\begin{small}
\[
\sum\limits_{t=1}^{n}\ls(\hat{y}_t, y_t)
-\inf_{f\in\F}\crl*{
\sum\limits_{t=1}^{n}\ls(f(x_t), y_t) + \B_n(f;\xr[n],\yr[n])} \leq \Relaxn(\cdot)\quad\forall{}\xr[n],\yr[n].
\]
\end{small}It follows immediately that the strategy achieves the rate $\B_{n}(f;\xr[n], \yr[n]) + \Relaxn(\cdot)$. Our goal is then to find relaxations for which the strategy is computationally tractable and $\Relaxn(\cdot)\leq{}0$ or at least has smaller order than $\B_n$. Similar to \cite{RSS12}, conditional versions of the offset minimax values $\A_n$ yield admissible relaxations, but solving these relaxations may not be computationally tractable. 

\begin{example}[\textbf{Online PAC-Bayes}]
\label{eg:kl_alg}
Consider the experts setting in Example \ref{eg:pacb} with:
\[
\B_{n}(f) = 3\sqrt{2n\max\crl*{\KL(f\mid{}\pi),1}} + 4\sqrt{n}.
\]
Let $R_{i}=2^{i-1}$ and let $q_{t}^{R}(y)$ denote the exponential weights distribution with learning rate $\sqrt{R/n}$: $
q^{R}(\yr[t])_{k} \propto \pi_{k}\exp\prn*{-\sqrt{R/n}(\sum_{s=1}^{t}y_t)_{k} }$.
The following is an admissible relaxation achieving $\B_n$:
\begin{small}
\[
\Bay(\yr[t]) = \inf_{\lambda>0}\brk*{
\frac{1}{\lambda}\log\prn*{\sum_{i}\exp\prn*{
    -\lambda\brk*{\sum_{s=1}^{t}\tri*{q^{R_i}(\yr[s-1]),y_s} +\sqrt{nR_{i}}
}
}
}
+ 2\lambda(n-t)
}.
\]
\end{small}
Let $q_{t}^{\star}$ be a distribution with
$
(q_{t}^{\star})_{i}\propto \exp\prn*{-\frac{1}{\sqrt{n}}\brk*{\sum_{s=1}^{t-1}\tri*{q^{R_i}(\yr[s-1]),y_s} -
      \sqrt{nR_{i}}
}}.
$ We predict by drawing $i$ according to $q_{t}^{\star}$, then drawing an expert according to $q^{R_i}(\yr[t-1])$.
\end{example}
While in general the problem of obtaining an efficient adaptive relaxation might be hard, one can ask the question,  ``If and efficient relaxation $\Relaxn^{R}$ is available for each $\F(R)$, can one obtain an adaptive model selection algorithm for all of $\F$?''. To this end for supervised learning problem with convex Lipschitz loss we delineate a meta approach which utilizes existing relaxations for each $\F(R)$. 

\begin{lemma} 
\label{lemma:ada}
Let $q_{t}^{R}(y_1,\ldots,y_{t-1})$ be the randomized strategy corresponding to  $\Relaxn^{R}$, obtained after observing outcomes $y_1,\ldots,y_{t-1}$, and let $\theta:\R\to{}\R$ be nonnegative. The following relaxation is admissible for the rate $\B_n(R) = \Relaxn^{R}(\cdot)\theta(\Relaxn^{R}(\cdot))$:
{\small 
\begin{align*}
&\Ada(\xr[t], \yr[t]) =\\ &\sup_{\x, \y, \y'} \En_{\eps_{t+1:n}}\sup_{R\ge 1}\brk*{
\Relaxn^{R}(\xr[t], \yr[t]) - \Relaxn^{R}(\cdot)\theta(\Relaxn^{R}(\cdot)) + 2\sum\limits_{s=t+1}^{n} \eps_{s}\En_{\hat{y}_{s}\sim{}q_{s}^{R}(\yr[t],\y'_{t+1:s-1}(\eps))}\ls(\hat{y}_{s}, \y_{s}(\eps))
}.
\end{align*}}
\end{lemma}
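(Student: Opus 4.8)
I will verify the two defining properties of an admissible relaxation: the initial condition \eqref{eq:initial} and the recursive condition \eqref{eq:admissibility}, for $\B_n(R)=\Relaxn^{R}(\cdot)\theta(\Relaxn^{R}(\cdot))$, viewed as the comparator-dependent rate $\B_n(f)=\B_n(R(f))$ with $R(f)=\inf\{R:f\in\F(R)\}$. A preliminary remark that streamlines everything: the summand $-\Relaxn^{R}(\cdot)\theta(\Relaxn^{R}(\cdot))=-\B_n(R)$ is an $R$-indexed \emph{constant}, and it appears verbatim inside $\sup_{R}$ in $\Ada(\xr[t],\yr[t])$ at every $t$; it therefore rides unchanged through the whole verification and plays no role beyond bookkeeping (its purpose is downstream, to keep $\sup_{R\ge1}$ controllable when $\Ada$ is bounded via Proposition~\ref{prop:main}). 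The initial condition is then immediate: at $t=n$ the playout sum and all trees collapse, so $\Ada(\xr[n],\yr[n])=\sup_{R\ge1}\{\Relaxn^{R}(\xr[n],\yr[n])-\B_n(R)\}$; each $\Relaxn^{R}$ is admissible for $\F(R)$, so $\Relaxn^{R}(\xr[n],\yr[n])\ge-\inf_{f\in\F(R)}\sum_t\ls(f(x_t),y_t)$, and choosing $R=R(f)$ for a given $f\in\F$ (so $f\in\F(R(f))$ and $\B_n(f)=\B_n(R(f))$) yields $\Ada(\xr[n],\yr[n])\ge-\sum_t\ls(f(x_t),y_t)-\B_n(f)$; taking $\sup_{f\in\F}$ gives \eqref{eq:initial}.

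\textbf{Recursive condition: minimax and unfolding.} Fix $t$, a prefix $\xr[t-1],\yr[t-1]$, and $x_t$. By the minimax theorem (valid under the regularity assumptions underlying \lemref{lem:sym}) it suffices to show, for each $p_t\in\Delta(\Y)$,
\[
\inf_{\hat y_t\in\D}\En_{y_t\sim p_t}\brk*{\ls(\hat y_t,y_t)+\Ada(\xr[t],\yr[t])}\ \le\ \Ada(\xr[t-1],\yr[t-1]),
\]
where convexity of $\ell$ in its first argument lets the learner use a deterministic $\hat y_t$. Expanding $\Ada(\xr[t],\yr[t])$ and applying, for each $R$, the recursive condition \eqref{eq:admissibility} of $\Relaxn^{R}$ together with the fact that $q_t^{R}$ attains its inner infimum, we get, for every $y_t$, $\Relaxn^{R}(\xr[t],\yr[t])\le\Relaxn^{R}(\xr[t-1],\yr[t-1])-\En_{\tilde y\sim q_t^{R}}\ls(\tilde y(R),y_t)$. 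Substituting, the only round-$t$ excess remaining inside $\sup_{R}$ is $\ls(\hat y_t,y_t)-\En_{\tilde y\sim q_t^{R}}\ls(\tilde y(R),y_t)$, and the task reduces to absorbing this into the single extra level of playout present in $\Ada(\xr[t-1],\yr[t-1])$ but absent in $\Ada(\xr[t],\yr[t])$ --- the root term of the depth-$(n-t+1)$ trees, namely $2\eps_t\En_{\tilde y\sim q_t^{R}}\ls(\tilde y(R),\y_t)$.

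\textbf{Symmetrization and folding.} This is done exactly as in the proof of \lemref{lem:sym}, eq.~\eqref{eq:convlip}. First linearize: by Jensen (to pull $\En_{\tilde y}$ inside $\ell$) and convexity, $\ls(\hat y_t,y_t)-\En_{\tilde y\sim q_t^{R}}\ls(\tilde y(R),y_t)\le z_t\bigl(\hat y_t-\En_{\tilde y\sim q_t^{R}}\tilde y(R)\bigr)$ with $z_t=\partial_1\ls(\hat y_t,y_t)$, $\abs{z_t}\le L$. Choose $\hat y_t$ to minimize $\En_{y_t\sim p_t}\ls(\cdot,y_t)$, so $\En_{y_t\sim p_t}z_t=0$; this disposes of the $z_t\hat y_t$ part. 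For the surviving part $-z_t\En_{\tilde y\sim q_t^{R}}\tilde y(R)$, introduce an independent ghost outcome $y_t'\sim p_t$; symmetrizing $y_t$ against $y_t'$ creates the Rademacher sign $\eps_t$ and exhibits this term, in expectation, as $2\eps_t$ times a quantity of the form $\En_{\tilde y\sim q_t^{R}}\ls(\tilde y(R),y_t')$ --- precisely the $s=t$ playout term, with $y_t'$ playing the role of the root of the fictitious-outcome tree $\y'$. Finally, the residual dependence of the $s\ge t+1$ playout on $y_t$ through the sub-strategies $q_s^{R}(\yr[t],\cdots)$ is handled by the identity $\yr[t]=(\yr[t-1],y_t)$: $y_t$ is exactly the root of $\y'$ in $\Ada(\xr[t-1],\yr[t-1])$, and since that root ranges over all of $\Y$ in the outer supremum, replacing it by $\En_{y_t\sim p_t}$ can only decrease the value; the remaining trees, the signs $\eps_{t+1:n}$, and the playout from $s=t+1$ match what is already there, and the whole expression collapses to $\Ada(\xr[t-1],\yr[t-1])$.

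\textbf{Main obstacle.} The genuinely delicate point is the interaction of the adversarial $\sup_{R\ge1}$ sitting \emph{inside} $\Ada$ with the learner's having to fix $\hat y_t$ before $R$ is chosen --- one cannot take $\hat y_t=\En_{\tilde y\sim q_t^{R}}\tilde y(R)$ for the ``right'' $R$. The Lipschitz linearization is exactly what breaks the deadlock: it replaces the $R$-dependent loss by a term with $R$-\emph{independent} slope $z_t$, after which zeroing $\En_{y_t}z_t$ makes the round-$t$ step valid uniformly over $R$, while the leftover $R$-dependence multiplies a mean-zero $\eps_t$ and so reattaches freely to the playout under $\sup_R$. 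This is also the only place where the supervised / convex-$L$-Lipschitz hypothesis is essential, mirroring \eqref{eq:convlip}. A secondary, standard technical point is justifying the ghost-sample/tree identification for uncountable $\Y$ (the support of $p_t$ is contained in $\Y$, over which the outer supremum ranges).
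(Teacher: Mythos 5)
There is a genuine gap in the ``Symmetrization and folding'' step, and it traces back to the choice to emulate the proof of \eqref{eq:convlip} rather than the much more elementary route the $\Ada$ relaxation calls for.

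After your linearization the round-$t$ excess inside $\sup_R$ is $-z_t\,\En_{\tilde y\sim q_t^R}\tilde y(R)$, with $z_t=\partial_1\ls(\hat y_t^\star,y_t)$ a scalar of magnitude at most $L$. Symmetrizing $y_t$ against a ghost $y_t'$, passing to the Rademacher $\eps_t$, and replacing the sign-difference $\ls'(\hat y_t^\star,y_t')-\ls'(\hat y_t^\star,y_t)$ by $2Ls_t$ in the usual way leaves you with $2L\eps_t\,\En_{q_t^R}\tilde y(R)$: a linear functional of the mean prediction. But the $s=t$ term of the playout in $\Ada(\xr[t-1],\yr[t-1])$ is $2\eps_t\,\En_{\hat y_t\sim q_t^R(\yr[t-1])}\ls(\hat y_t,\y_t(\eps))$, which is expressed through the loss itself, not a linear functional of $\hat y_t$. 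Your claim that the linearized term is ``exhibited as $2\eps_t$ times a quantity of the form $\En_{\tilde y\sim q_t^R}\ls(\tilde y(R),y_t')$'' is the gap: for a general convex $L$-Lipschitz loss there is no $y\in\Y$ with $L\tilde y\leq\ls(\tilde y,y)$ (try $\ls(\tilde y,y)=(\tilde y-y)^2/2$ on $[0,1]$), so the linearized term cannot be re-absorbed into the loss-valued playout. In \eqref{eq:convlip} the linearization removes the loss from the \emph{final} bound (it becomes a Rademacher average of $f$); here you still need the loss on the right-hand side because $\Ada$ is defined with $\ls$ in its playout.

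The obstacle you flag --- that the learner must commit to $\hat y_t$ before $R$ is revealed --- is real, but the paper's fix is simpler and does not need convexity or Lipschitzness. After the minimax swap, the learner's contribution $\inf_{\hat y_t}\En_{y_t'\sim p_t}\ls(\hat y_t,y_t')$ is a scalar that does not depend on $R$ (or on the trees), so it can be moved inside $\sup_R$; it is then upper bounded by plugging in the (now $R$-dependent) strategy $q_t^R$ as a feasible choice, giving $\En_{y_t'\sim p_t}\En_{\hat y_t\sim q_t^R}\ls(\hat y_t,y_t')$. Adding and subtracting $\En_{\hat y_t\sim q_t^R}\ls(\hat y_t,y_t)$, invoking admissibility of $\Rd^R$, moving the playout's dependence on $y_t$ to a worst-case $y_t''$, and then symmetrizing $y_t$ against $y_t'$ produces precisely $2\eps_t\En_{\hat y_t\sim q_t^R}\ls(\hat y_t,y_t)$ inside $\sup_R$, which is the $s=t$ playout term with $y_t$ as the root of $\y$ and $y_t''$ as the root of $\y'$. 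The loss survives intact all the way through, and the recursion closes. Your initial-condition argument and the minimax-swap setup are fine; the repair is to drop the linearization and substitute $q_t^R$ inside $\sup_R$ directly, as above.
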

Playing according to the strategy for $\Ada$ will guarantee a regret bound of $\B_n(R)+\Ada(\cdot)$, and $\Ada(\cdot)$ can be bounded using Proposition \ref{prop:main} when the form of $\theta$ is as in that proposition.

We remark that the above strategy is not necessarily obtained by running a high-level experts algorithm over the discretized values of $R$. It is an interesting question to determine the cases when such a strategy is optimal. More generally, when the adaptive rate $\B_n$ depends on data, it is not possible to obtain the rates we show non-constructively in this paper using the exponential weights algorithm with meta-experts as the required weighting over experts would be data dependent (and hence is not a prior over experts). Further, the bounds from exponential-weights-type algorithms are akin to having sub-exponential tails in Proposition \ref{prop:main}, but for many problems we may have sub-gaussian tails.  

Obtaining computationally efficient methods from the proposed framework is an interesting research direction. Proposition \ref{prop:main} provides a useful non-constructive tool to establish achievable adaptive bounds, and a natural question to ask is if one can obtain a constructive counterpart for the proposition.

\subsubsection*{References}
\begingroup
\renewcommand{\section}[2]{}
\bibliographystyle{unsrt}
\small{
\bibliography{refs}
}
\endgroup

\newpage
\appendix
\section{Adaptive Rates and Achievability}
\begin{small}
\begin{proof}[\textbf{Proof of Lemma~\ref{lem:sym}}]
We first prove Eq. \eqref{eq:general}. We start from the definition of $\A_n(\F)$. Our proof proceeds ``inside out" by starting with the $n^{th}$ term and then working backwards by repeatedly applying the minimax theorem. To this end on similar lines as in \cite{RakSriTew14ptrf,onr2014,StatNotes2012}, we start with the inner most term as,
\begin{small}
  \begin{align}
    \sup_{x_n\in{}\X}&\inf_{q_n\in{}\Delta(\D)}\sup_{y_n\in{}\Y} \left(\Es{\hat{y}_n \sim q_n}{\ls(\hat{y}_n, y_n) -\inf_{f\in{}\F}\left\{\sum_{t=1}^{n}\ls(f(x_t), y_t) + \B_n(f ; \xr[n], \yr[n])\right\}}\right) \notag\\
    &= \sup_{x_n\in{}\X}\inf_{q_n\in{}\Delta(\D)}\sup_{p_n\in{}\Delta(\Y)} \left(\Es{\substack{\hat{y}_n \sim q_n\\ y_n \sim p_n}}{\sum_{t=1}^{n}\ls(\hat{y}_t, y_t) -\inf_{f\in{}\F}\left\{\sum_{t=1}^{n}\ls(f(x_t), y_t) + \B_n(f ; \xr[n], \yr[n])\right\}}\right)  \notag\\
    &= \sup_{x_n\in{}\X}\sup_{p_n\in{}\Delta(\Y)} \inf_{q_n\in{}\Delta(\D)}\left(\Es{\substack{\hat{y}_n \sim q_n\\ y_n \sim p_n}}{\sum_{t=1}^{n}\ls(\hat{y}_t, y_t) -\inf_{f\in{}\F}\left\{\sum_{t=1}^{n}\ls(f(x_t), y_t) + \B_n(f ; \xr[n], \yr[n])\right\}}\right) \notag\\
    &= \sup_{x_n\in{}\X}\sup_{p_n\in{}\Delta(\Y)} \inf_{\hat{y}_n \in{} \D}\left(\Es{y_n \sim p_n}{\sum_{t=1}^{n}\ls(\hat{y}_t, y_t) -\inf_{f\in{}\F}\left\{\sum_{t=1}^{n}\ls(f(x_t), y_t) + \B_n(f ; \xr[n], \yr[n])\right\}}\right) \notag\\
    &= \sup_{x_n\in{}\X}\sup_{p_n\in{}\Delta(\Y)} \left( \Es{y_n \sim p_n}{\sup_{f\in{}\F}\left\{ \inf_{\hat{y}_n \in{} \D} \Es{y_n \sim p_n}{\sum_{t=1}^{n}\ls(\hat{y}_t, y_t)} - \sum_{t=1}^{n}\ls(f(x_t), y_t) - \B_n(f ; \xr[n], \yr[n])\right\}}\right)\notag.
  \end{align}
\end{small}To apply the minimax theorem in step 3 above, we note that the term in the round bracket is linear in $q_n$ and in $p_n$ (as it is an expectation). Hence under mild assumptions on the sets $\D$ and $\Y$, the losses, and the adaptive rate $\B_n$, one can apply a generalized version of the minimax theorem to swap $\sup_{p_n}$ and $\inf_{q_n}$. Compactness of the sets and lower semi-continuity of the losses and $\B_n$ are sufficient, but see \cite{RakSriTew14ptrf,StatNotes2012} for milder conditions. Proceeding backward from $n$ to $1$ in a similar fashion we end up with the following quantity:
\begin{small}
  \begin{align}
    &\A_{n}(\F) \notag\\
    &= \dtri*{\sup_{x_t\in{}\X}\inf_{q_t\in{}\Delta(\D)}\sup_{y_t\in{}\Y}\underset{\hat{y}_t\sim{}q_t}{\En}}_{t=1}^{n}
                 \brk*{
                 \sum_{t=1}^{n}\ls(\hat{y}_t, y_t) -\inf_{f\in{}\F}\left\{\sum_{t=1}^{n}\ls(f(x_t), y_t) + \B_n(f ; \xr[n], \yr[n])\right\}
                 } \notag\\
               &= \dtri*{\sup_{x_t\in{}\X}\sup_{p_t\in{}\Delta(\Y)}\underset{y_t\sim{}p_t}{\En}}_{t=1}^{n}
                 \brk*{
                 \sup_{f\in{}\F}\left\{ \sum_{t=1}^{n} \inf_{\hat{y}_t \in \D} \Es{y_t \sim p_t}{\ls(\hat{y}_t, y_t)} -\sum_{t=1}^{n}\ls(f(x_t), y_t) - \B_n(f ; \xr[n], \yr[n])\right\}}\notag\\
               &\le \dtri*{\sup_{x_t\in{}\X}\sup_{p_t\in{}\Delta(\Y)}\underset{y_t\sim{}p_t}{\En}}_{t=1}^{n}
                 \brk*{ \sup_{f\in{}\F}\left\{ \sum_{t=1}^{n} \Es{y'_t \sim p_t}{\ls(f(x_t), y'_t)} - \ls(f(x_t), y_t) - \B_n(f ; \xr[n], \yr[n])\right\}}.\label{eq:inter}
  \end{align}
\end{small}See \cite{StatNotes2012} for more details of the steps involved in obtaining the above equality. To proceed, we use Jensen's inequality to pull out the expectations w.r.t. $y'_t$s, which gives
\begin{small}
  \begin{align*}
    & \le \dtri*{\sup_{x_t\in{}\X}\sup_{p_t\in{}\Delta(\Y)}\underset{y_t, y'_t \sim{}p_t}{\En}}_{t=1}^{n}
      \brk*{ \sup_{f\in{}\F}\left\{ \sum_{t=1}^{n} \ls(f(x_t), y'_t) - \ls(f(x_t), y_t) -  \B_n(f; \xr[n], \yr[n]) \right\} } \\
    & \le \dtri*{\sup_{x_t\in{}\X}\sup_{p_t\in{}\Delta(\Y)}\underset{y_t, y'_t \sim{}p_t}{\En} \sup_{y''_t \in \Y}}_{t=1}^{n}
      \brk*{ \sup_{f\in{}\F}\left\{ \sum_{t=1}^{n} \ls(f(x_t), y'_t) - \ls(f(x_t), y_t)-  \B_n(f; \xr[n], \yr[n]'')  \right\} } \\
    & = \dtri*{\sup_{x_t\in{}\X}\sup_{p_t\in{}\Delta(\Y)}\underset{y_t, y'_t \sim{}p_t}{\En} \En_{\epsilon_t} \sup_{y''_t \in \Y}}_{t=1}^{n}
      \brk*{ \sup_{f\in{}\F}\left\{ \sum_{t=1}^{n} \epsilon_t \left(\ls(f(x_t), y'_t) - \ls(f(x_t), y_t)\right) -  \B_n(f; \xr[n], \yr[n]'') \right\}  } \\
    & \le \dtri*{\sup_{x_t\in{}\X}\sup_{y_t, y'_t \in \Y} \En_{\epsilon_t} \sup_{y''_t \in \Y}}_{t=1}^{n}
      \brk*{ \sup_{f\in{}\F}\left\{ \sum_{t=1}^{n} \epsilon_t \left(\ls(f(x_t), y'_t) - \ls(f(x_t), y_t)\right) -  \B_n(f; \xr[n], \yr[n]'') \right\}  } \\
    & \le \dtri*{\sup_{x_t\in{}\X}\sup_{y_t \in \Y} \En_{\epsilon_t} \sup_{y''_t \in \Y}}_{t=1}^{n}
      \brk*{ \sup_{f\in{}\F}\left\{ \sum_{t=1}^{n} 2 \epsilon_t \ls(f(x_t), y_t) -  \B_n(f; \xr[n], \yr[n]'') \right\}  } \\
    & = \sup_{\x , \y, \y'}\Es{\epsilon}{ \sup_{f\in{}\F}\left\{ 2 \sum_{t=1}^{n} \epsilon_t \ls(f(\x_t(\epsilon)), \y_t(\epsilon)) - \B_n(f; \x_{1:n}(\epsilon),\y'_{2:n+1}(\epsilon)) \right\} },
  \end{align*}
\end{small}where in the last step we switch to tree notation, but keep in mind that each $y''_t$ is picked after drawing $\epsilon_t$, and thus the tree $\y'$ appears with one index shifted.\\

We now proceed to prove inequality \eqref{eq:convlip}. Here, we employ the convexity assumption $\ell(\hat{y}_t,y_t) - \ell(f(x_t),y_t) \le \ell'(\hat{y}_t,y_t) (\hat{y}_t - f(x_t))$, where the derivative is with respect to the first argument. As before, applying the minimax theorem, 
\begin{small}
  \begin{align*}
    \A_{n}(\F) &= \dtri*{\sup_{x_t\in{}\X}\inf_{q_t\in{}\Delta(\D)}\sup_{y_t\in{}\Y}\underset{\hat{y}_t\sim{}q_t}{\En}}_{t=1}^{n}
                 \brk*{\sum_{t=1}^{n}\ls(\hat{y}_t, y_t) -\inf_{f\in{}\F}\left\{\sum_{t=1}^{n}\ls(f(x_t), y_t) + \B_n(f ; \xr[n], \yr[n])\right\}}\\ 
               & = \dtri*{\sup_{x_t\in{}\X}\sup_{p_t\in{}\Delta(\Y)} \inf_{\hat{y}_t\in{}\D} \underset{y_t\sim{}p_t}{\En}}_{t=1}^{n}
                 \brk*{\sum_{t=1}^{n}\ls(\hat{y}_t, y_t) -\inf_{f\in{}\F}\left\{\sum_{t=1}^{n}\ls(f(x_t), y_t) + \B_n(f ; \xr[n], \yr[n])\right\}}\\
               & \le \dtri*{\sup_{x_t\in{}\X}\sup_{p_t\in{}\Delta(\Y)} \inf_{\hat{y}_t\in{}\D} \underset{y_t\sim{}p_t}{\En}}_{t=1}^{n}
                 \brk*{\sup_{f\in{}\F}\left\{ \sum_{t=1}^{n} \ls'(\hat{y}_t, y_t)(\hat{y}_t - f(x_t)) - \B_n(f ; \xr[n], \yr[n])\right\}}.
  \end{align*}
\end{small}We may now pick $\hat{y}_t = \hat{y}^*_t(p_t) \defeq \argmin_{\hat{y}} \Es{y_t \sim p_t}{\ell(\hat{y}_t,y_t)}$. By convexity (and assuming the loss allows swapping of derivative and expectation), $\Es{y_t \sim p_t}{\ls'(\hat{y}_t, y_t)} = 0$.
This (sub)optimal strategy yields an upper bound of
\begin{small}
  \begin{align*}
    &\dtri*{\sup_{x_t\in{}\X}\sup_{p_t\in{}\Delta(\Y)}  \underset{y_t\sim{}p_t}{\En}}_{t=1}^{n}
      \brk*{\sup_{f\in{}\F}\left\{ \sum_{t=1}^{n} \left(\ls'(\hat{y}^*_t, y_t) - \Es{y'_t \sim p_t}{\ls'(\hat{y}^*_t, y'_t)} \right)(\hat{y}^*_t - f(x_t)) - \B_n(f ; \xr[n], \yr[n])\right\}}.
  \end{align*}
\end{small}Since $\left(\ls'(\hat{y}^*_t, y_t) - \Es{y'_t \sim p_t}{\ls'(\hat{y}^*_t, y'_t)} \right) \hat{y}^*_t$ is independent of $f$ and has expected value of $0$, the above quantity is equal to
\begin{small}
  \begin{align*}
    &\dtri*{\sup_{x_t\in{}\X}\sup_{p_t\in{}\Delta(\Y)}  \underset{y_t\sim{}p_t}{\En}}_{t=1}^{n}
      \brk*{\sup_{f\in{}\F}\left\{ \sum_{t=1}^{n} \left( \Es{y'_t \sim p_t}{\ls'(\hat{y}^*_t, y'_t)} - \ls'(\hat{y}^*_t, y_t) \right)f(x_t) - \B_n(f ; \xr[n], \yr[n])\right\}}\\
    & \le \dtri*{\sup_{x_t\in{}\X}\sup_{p_t\in{}\Delta(\Y)}  \underset{y_t, y'_t\sim{}p_t}{\En}}_{t=1}^{n}
      \brk*{\sup_{f\in{}\F}\left\{ \sum_{t=1}^{n} \left(\ls'(\hat{y}^*_t, y'_t) -  \ls'(\hat{y}^*_t, y_t) \right)f(x_t) - \B_n(f ; \xr[n], \yr[n])\right\}}\\
    & = \dtri*{\sup_{x_t\in{}\X}\sup_{p_t\in{}\Delta(\Y)}  \underset{y_t, y'_t\sim{}p_t}{\En} \En_{\epsilon_t}}_{t=1}^{n}
      \brk*{\sup_{f\in{}\F}\left\{ \sum_{t=1}^{n} \epsilon_t \left(\ls'(\hat{y}^*_t, y'_t) - \ls'(\hat{y}^*_t, y_t) \right)f(x_t) - \B_n(f ; \xr[n], \yr[n])\right\}}.
  \end{align*}
\end{small}Replacing $\left(\ls'(\hat{y}^*_t, y'_t) - \ls'(\hat{y}^*_t, y_t) \right)$ by $2Ls_t$ for $s_t \in [-1,1]$ and taking supremum over $s_t$ we get,
\begin{small}
  \begin{align*}
    & \le \dtri*{\sup_{x_t\in{}\X}\sup_{p_t\in{}\Delta(\Y)}  \underset{y_t, y'_t\sim{}p_t}{\En} \sup_{s_t \in [-1,1]}\En_{\epsilon_t}}_{t=1}^{n}
      \brk*{\sup_{f\in{}\F}\left\{ \sum_{t=1}^{n} 2 L \epsilon_t s_t f(x_t) - \B_n(f ; \xr[n], \yr[n])\right\}}\\
    & \le  \dtri*{\sup_{x_t\in{}\X} \sup_{y_t} \sup_{s_t \in [-1,1]}\En_{\epsilon_t}}_{t=1}^{n}
      \brk*{\sup_{f\in{}\F}\left\{ \sum_{t=1}^{n} 2 L \epsilon_t s_t f(x_t) - \B_n(f ; \xr[n], \yr[n])\right\}}.
  \end{align*}
\end{small}Since the suprema over $s_t$ are achieved at $\{\pm1\}$ by convexity, the last expression is equal to
\begin{small}
  \begin{align*}
    &\dtri*{\sup_{x_t\in{}\X} \sup_{y_t} \sup_{s_t \in \{-1,1\}}\En_{\epsilon_t}}_{t=1}^{n}
      \brk*{\sup_{f\in{}\F}\left\{ \sum_{t=1}^{n} 2 L \epsilon_t s_t f(x_t) - \B_n(f ; \xr[n], \yr[n])\right\}}\\
    & = \dtri*{\sup_{x_t\in{}\X} \sup_{y_t} \En_{\epsilon_t}}_{t=1}^{n}
      \brk*{\sup_{f\in{}\F}\left\{ \sum_{t=1}^{n} 2 L \epsilon_t f(x_t) - \B_n(f ; \xr[n], \yr[n])\right\}}\\
    & = \sup_{\x,\y}\Es{\epsilon}{\sup_{f\in{}\F}\left\{ \sum_{t=1}^{n} 2 L \epsilon_t f(\x_t(\epsilon)) - \B_n(f ; \x_{1:n}(\epsilon), \y_{1:n}(\epsilon))\right\}}.
  \end{align*}
\end{small}In the last but one step we removed $s_t$, since for any function $\Psi$, and any $s \in \{\pm1\}$,
 $\E{\Psi(s \epsilon)} = \tfrac{1}{2}\left(\Psi(s)  + \Psi(-s)\right) = \tfrac{1}{2}\left(\Psi(1)  + \Psi(-1)\right) =  \E{\Psi(\epsilon)}$.
\end{proof}

\begin{proof}[\textbf{Proof of Proposition~\ref{prop:main}}]
	Define $Z_i = \left[X_{i} - B_i \theta_i\right]_+$. As long as $\theta_i \ge 1$, for any strictly positive $\tau$ we have the tail behavior
$$
P(Z_i \ge t) = P(X_{i} - B_i \theta_i  \ge \tau)  \le C_1 \exp\left(- \frac{(B_i(\theta_i-1)+\tau)^2}{2 \sigma_i^2} \right) +  C_2 \exp\left(- (B_i(\theta_i-1)+\tau) s_i \right).
$$
Note that for any positive sequence $(\delta_i)_{i\in I}$ with $\delta=\sum_{i\in I}\delta_i$,
\begin{align*}
 \E{\sup_{i \in I}\{X_i - B_i \theta_i\}} &\le \E{\sup_{i \in I}Z_i} \le \sum_{i\in I} \E{Z_i} \le \delta + \sum_{i\in I} \int_{\delta_i}^\infty P(Z_i \ge \tau) d\tau.
 \end{align*}
The sum of the integrals above is equal to 
 \begin{align*}
  \sum_{i\in I}& \int_{\delta_i}^\infty  P(X_i-B_i\theta_i \ge \tau) d\tau \\
  &\le  C_1 \sum_{i\in I} \int_{0}^\infty  \exp\left(- \frac{\left(B_i(\theta_i -1) + \tau\right)^2}{2\sigma_i^2} \right) dt + C_2 \sum_{i \in I} \int_{0}^\infty  \exp\left(- \left(B_i(\theta_i -1) + \tau\right)s_i \right) d\tau\\
  & \le C_1 \sum_{i \in I} \exp\left(- \frac{1}{2}\left(\frac{B_i}{\sigma_i}\right)^2\left(\theta_i -1\right)^2\right) \int_{0}^\infty  e^{ - \frac{\tau^2}{2 \sigma_i^2}}d\tau + C_2 \sum_{i \in I} \exp\left(- B_i s_i \left(\theta_i -1\right)\right) \int_{0}^\infty  e^{ - \tau s_i}d\tau \\
  &\leq \sqrt{\frac{\pi}{2}} C_1 \sum_{i \in I} \sigma_i \exp\left(- \frac{1}{2}\left(\frac{B_i}{\sigma_i}\right)^2\left(\theta_i -1\right)^2\right)  +  C_2 \sum_{i \in I} s_i^{-1} \exp\left(- B_i s_i \left(\theta_i -1\right)\right) \\
  & \le  \frac{\pi^2 \sqrt{\pi}}{6\sqrt{2}} C_1  \bar{\sigma} + \frac{\pi^2}{6} C_2 (\bar{s})^{-1},
\end{align*}
where the last step is obtained by plugging in 
$$\theta_i  =  \max \left\{ \frac{\sigma_i}{B_i}\sqrt{2\log(\sigma_i/\bar{\sigma}) + 4 \log(i)}  ,  (B_i s_i)^{-1}\log\left(i^2 (\bar{s}/s_i)\right)\right\} +1$$ 
and using as an upper bound $\frac{\sigma_i}{B_i}\sqrt{2 \log(i^{2} \sigma_i/\bar{\sigma})} +1$ for $\theta_i$ in the sub-gaussian part and $(B_i s_i)^{-1}\log\left(i^2 \bar{s}/s_i\right)+ 1$ for $\theta_i$ in the sub-exponential part. Since $\delta$ can be chosen arbitrarily small, we may over-bound the above constant and obtain the result.
\end{proof}

\begin{proof}[\textbf{Proof of Lemma~\ref{lem:proboffset}}]

Fix $\gamma>0$. For $j\geq 0$, let $V_j$ be a minimal sequential cover of $\G$ on $\z$ at scale $\beta_j = 2^{-j}\gamma$ and with respect to empirical $\ell_2$ norm. Let $\v^j[g,\epsilon]$ be an element guaranteed to be $\beta_j$-close to $f$ at the $j$-th level, for the given $\epsilon$. Choose $N = \log_{2}(2 \gamma n)$, so that $\beta_N n \le 1$. Let us use the shorthand $\cN_2(\gamma) \deq \cN_2(\G,\gamma,\z)$.

For any $\epsilon\in\{\pm1\}^n$ and $g\in \G$, 
$$\sum_{t=1}^n \epsilon_t g(\z_t(\epsilon)) - 2 \alpha  g(\z_t(\epsilon))^2 $$
can be written as
\begin{align*}
&\sum_{t=1}^n \left(\epsilon_t (g(\z_t(\epsilon)) -\v_t^0[g,\epsilon](\epsilon))\right) + \sum_{t=1}^n\left( \epsilon_t \v_t^0[g,\epsilon](\epsilon) - 2 \alpha  g(\z_t(\epsilon))^2 \right) \\
& \le  \sum_{t=1}^n \left(\epsilon_t (g(\z_t(\epsilon)) -\v_t^0[g,\epsilon](\epsilon))\right) + \sum_{t=1}^n\left( \epsilon_t \v_t^0[g,\epsilon](\epsilon) - \alpha  \v_t^0[g,\epsilon](\epsilon)^2 \right) \\
& = \sum_{t=1}^n \left(\epsilon_t (g(\z_t(\epsilon)) -\v_t^N[g,\epsilon](\epsilon)\right)  + \sum_{t=1}^n \sum_{k=1}^N  \epsilon_t \left(\v_t^{k}[g,\epsilon](\epsilon)-\v_t^{k-1}[g,\epsilon](\epsilon)\right) \\
&~~~~~~~~~ + \sum_{t=1}^n\left( \epsilon_t \v_t^0[g,\epsilon](\epsilon) - \alpha  \v^0_t[g,\epsilon](\epsilon)^2 \right).
\end{align*}
By Cauchy-Schwartz, the first term is upper bounded by $n\beta_N\leq 1$. The second term above is upper bounded by  
\begin{align*}
& \sum_{k=1}^N \sum_{t=1}^n   \epsilon_t \left(\v_t^k[g,\epsilon](\epsilon)-\v_t^{k-1}[g,\epsilon](\epsilon)\right)  
 \le \sum_{k=1}^N \sup_{\w^k \in W_k}\sum_{t=1}^n   \epsilon_t \w_t^k(\epsilon),
\end{align*}
where $W_k$ is a set of differences of trees for levels $k$ and $k-1$ (see \cite[Proof of Theorem 3]{RakSriTew14ptrf}). Finally, the third term is controlled by 
\begin{align*}
 \sum_{t=1}^n\left( \epsilon_t \v_t^0[g,\epsilon](\epsilon) - \alpha  \v_t^0[g,\epsilon](\epsilon)^2 \right) 
& \le \sup_{\v \in V_0}\sum_{t=1}^n\left( \epsilon_t \v_t(\epsilon) - \alpha  \v^2_t(\epsilon) \right).
\end{align*}
The probability in the statement of the Lemma can now be upper bounded by
{\small \begin{align*}
&P\left( \sum_{k=1}^N \sup_{\w^k \in W_k}\sum_{t=1}^n   \epsilon_t \w_t^k(\epsilon)   + \sup_{\v \in V_0}\sum_{t=1}^n\left( \epsilon_t \v_t(\epsilon) - \alpha  \v^2_t(\epsilon) \right)  -  \frac{\log \cN_{2}(\gamma)}{\alpha}  - 12\sqrt{2} \int_{1/n}^{\gamma} \sqrt{n \log \cN_{2}(\delta)} d \delta > \tau\right).
\end{align*}}In view of
\[
\sqrt{72} \sum_{k=1}^N \beta_k \sqrt{n \log \cN_{2}(\beta_k)} \le  12\sqrt{2} \int_{1/n}^{\gamma} \sqrt{n \log \cN_{2}(\delta)} d\delta
\]
this probability can be further upper bounded by
\begin{small}
\begin{align*}
& P\left( \sum_{k=1}^N \sup_{\w^k \in W_k}\sum_{t=1}^n   \epsilon_t \w_t^k(\epsilon)   + \sup_{\v \in V_0}\sum_{t=1}^n\left( \epsilon_t \v_t(\epsilon) - \alpha  \v^2_t(\epsilon) \right)  - \frac{\log \cN_{2}(\gamma)}{\alpha}  - \sqrt{72} \sum_{k=1}^N \beta_k \sqrt{n \log \cN_{2}(\beta_k)} > \tau\right).
\end{align*}
\end{small}Define a distribution $p$ on $\{1,\ldots,N\}$ by
$ p_k = \frac{\beta_k  \sqrt{n \log \cN_{2}(\beta_k)}}{\sum_{k=1}^N \beta_j \sqrt{n \log \cN_{2}(\beta_j)}}.
$
Then the above probability can be upper bounded by
\begin{align*}
& P\left( \exists k \in [N] \textrm{ s.t. } \sup_{\w^k \in W_k}\sum_{t=1}^n  \epsilon_t \w_t^k(\epsilon)   -  \sqrt{72}\beta_k \sqrt{n \log \cN_{2}(\beta_k)} > \frac{\tau p_k}{2}   \right.\\
&\left.~~~~~~~~ \vee~~ \sup_{\v \in V_0}\sum_{t=1}^n\left( \epsilon_t \v_t(\epsilon) - \alpha  \v^2_t(\epsilon) \right)  - \frac{\log \cN_{2}(\gamma)}{\alpha} > \frac{\tau}{2}  \right)\\
&\le  \sum_{k=1}^N P\left(\sup_{\w^k \in W_k}\sum_{t=1}^n  \epsilon_t \w_t^k(\epsilon)   -  \sqrt{72} \beta_k \sqrt{n \log \cN_{2}(\beta_k)} > \frac{\tau p_k}{2} \right) \\
& + P\left(\sup_{\v \in V_0}\sum_{t=1}^n\left( \epsilon_t \v_t(\epsilon) - \alpha  \v^2_t(\epsilon) \right)  - \frac{\log \cN_{2}(\gamma)}{\alpha} > \frac{\tau}{2}  \right).
\end{align*}
The second term can be upper bounded using Chernoff method by 
\begin{align*}
	&\sum_{\v \in V_0} P\left(\sum_{t=1}^n\left( \epsilon_t \v_t(\epsilon) - \alpha  \v^2_t(\epsilon) \right)  - \frac{\log \cN_{2}(\gamma)}{\alpha} > \frac{\tau}{2}  \right)\\
	&\leq \cN_{2}(\gamma) \exp\left(- \frac{\alpha \tau}{2} - \log \cN_{2}(\gamma)\right)
	\leq \exp\left(- \frac{\alpha \tau}{2}\right)
\end{align*}
while the first sum of probabilities can be upper bounded by
\begin{align}
	\label{eq:interm1}
\sum_{k=1}^N \sum_{\w^k \in W_k} P\left(\sum_{t=1}^n  \epsilon_t \w_t^k(\epsilon)   - \sqrt{72} \beta_k \sqrt{n \log \cN_{2}(\beta_k)} > \frac{\tau \beta_k  \sqrt{n \log \cN_{2}(\beta_k)}}{2\sum_{k=1}^N \beta_k \sqrt{n \log \cN_{2}(\beta_k)}} \right).
\end{align}
For any $k$, the tail probability above is controlled by Hoeffding-Azuma inequality as
\begin{align*}
	&P\left(\sum_{t=1}^n  \epsilon_t \w_t^k(\epsilon) >   \beta_k \sqrt{n \log \cN_{2}(\beta_k)}\left(6\sqrt{2} +  \frac{\tau}{2\sum_{k=1}^N \beta_k \sqrt{n \log \cN_{2}(\beta_k)}} \right)^2 \right) \\
&\leq \exp\left(-  \frac{1}{18} \log \cN_{2}(\beta_k)\left(6\sqrt{2} +  \frac{\tau}{2\sum_{k=1}^N \beta_k \sqrt{n \log \cN_{2}(\beta_k)}} \right)^2 \right) \\
&\leq \exp\left( -4\log \cN_{2}(\beta_k)\right) \exp\left( -\frac{\tau^2}{18\left(2\sum_{k=1}^N \beta_k \sqrt{n \log \cN_{2}(\beta_k)}\right)^2}\right),
\end{align*}
because $\frac{1}{n}\sum_{t=1}^n \w_t^k(\epsilon)^2 \leq 3\beta_k^2$ for any $\epsilon$ by triangle inequality (see \cite{RakSriTew14ptrf}). Then the double sum in \eqref{eq:interm1} is upper bounded by
\begin{align*}
\Gamma \exp\left( -\frac{\tau^2}{18\left(2\sum_{k=1}^N \beta_k \sqrt{n \log \cN_{2}(\beta_k)}\right)^2}\right),
\end{align*}
where $\Gamma \geq \sum_{k=1}^N \cN_{2}(\beta_k)^{-2}$. 
This upper bound can be further relaxed to
\begin{align*}
 \Gamma \exp\left(-  \frac{\tau^2}{ 2\left(12\int_{1/n}^{\gamma} \sqrt{n \log \cN_{2}(\delta)} d\delta\right)^2} \right).
\end{align*}
Since $N=\log_{2}(2\gamma n)$, we may take 
$$\Gamma =\sum_{k=1}^{\log_{2}(2\gamma n)} \cN_{2}(\gamma 2^{-k})^{-2}.$$
\end{proof}
\begin{proof}[\textbf{Proof of Corollary~\ref{cor:small_loss_supervised}}]
Let $\cN_{2}(\gamma)\deq \cN_{2}(\G,\gamma,\z)$. Observe that
{\small $$2 \sqrt{2(\log n) \left(\log \cN_{2}(\gamma/2) \right)\left( \sum_{t=1}^n g^2(\z_t(\epsilon))+1\right) } = \inf_{\alpha}\left\{ \frac{(\log n) \left(\log \cN_{2}(\gamma/2) \right)}{\alpha} + 2\alpha \left( \sum_{t=1}^n g^2(\z_t(\epsilon))+1\right) \right\}$$}and, furthermore, the optimal $\alpha$ is
$$\sqrt{\frac{(\log n) \left(\log \cN_{2}(\gamma/2) \right)}{2(\sum_{t=1}^n g^2(\z_t(\epsilon))+1)}}$$
which is a number between $d_\ell = \sqrt{\frac{(\log n) \left(\log \cN_{2}(\gamma/2) \right)}{2(n+1)}}$ and $d_u = \sqrt{(\log n) \left(\log \cN_{2}(\gamma/2) \right)}$ 
 as long as $\cN_{2}(\gamma/2)>1$. With this we get
 \begin{small}
   \begin{align}
      &\begin{aligned}
      \sup_{\substack{g \in \G\\ \gamma\in [n^{-1}, 1]}} \biggr[\sum_{t=1}^n \epsilon_t g(\z_t(\epsilon))  - 4 \sqrt{2(\log n) \left(\log \cN_{2}(\gamma/2) \right)\left( \sum_{t=1}^n g^2(\z_t(\epsilon))+1\right) }& \\
     - 24\sqrt{2}\log n  \int_{1/n}^{\gamma} \sqrt{n \log \cN_{2}(\delta)} d \delta  + 2\log n &\biggl] 
     \end{aligned} \notag\\
      &\begin{aligned}
      \leq \sup_{\substack{g \in \G\\ \gamma\in [n^{-1},1] , \alpha \in[d_\ell,d_u]}}\biggr[ \sum_{t=1}^n \epsilon_t g(\z_t(\epsilon)) - \frac{ 2(\log n) \left(\log \cN_{2}(\gamma/2) \right)}{\alpha} - 4 \alpha \sum_{t=1}^n g^2(\z_t(\epsilon))& \\ 
      - 24\sqrt{2} \log n \int_{1/n}^{\gamma} \sqrt{n \log \cN_{2}(\delta)} d \delta  - 2\log n&\biggl].
      \end{aligned}\label{eq:split1}
   \end{align}
 \end{small}The case of $\gamma\in[1/n, 2/n)$ will be considered separately. Let us assume $\gamma\geq 2/n$.
We now discretize both $\alpha$ and $\gamma$ by defining $\alpha_i = 2^{-(i-1)} d_u$ and $\gamma_j = 2^{j} n^{-1}$, $i,j\geq 1$. We go to an upper bound by mapping each $\alpha$ to $\alpha_i$ or $\alpha_i/2$, depending on the direction of the sign. Similarly, we map $\gamma$ to either $\gamma_i$ or $2\gamma_i$. The upper bound becomes
\begin{small}
\begin{align*}
          &\max_{i,j} \sup_{g \in \G} \sum_{t=1}^n \left(\epsilon_t g(\z_t(\epsilon)) - 2 \alpha_i  g^2(\z_t(\epsilon)) \right)- (2\log n)\left( \frac{\log \cN_{2}(\gamma_j)}{\alpha_i}  + 12\sqrt{2} \int_{1/n}^{\gamma_j} \sqrt{n \log \cN_{2}(\delta)} d \delta  + 1\right).
\end{align*}
\end{small}Given the doubling nature of $\alpha_i$ and $\gamma_j$, the indices $i,j$ are upper bounded by $O(\log n)$. Now define a collection of random variables indexed by $(i,j)$
$$X_{i,j} = \sup_{g \in \G} \sum_{t=1}^n \epsilon_t g(\z_t(\epsilon)) - 2 \alpha_i  g^2(\z_t(\epsilon))  $$
and constants $$B_{i,j} = \frac{\log \cN_{2}(\gamma_j)}{\alpha_i}  + 12\sqrt{2} \int_{1/n}^{\gamma_j} \sqrt{n \log \cN_{2}(\delta)} d \delta  + 1.$$
Lemma \ref{lem:proboffset} establishes that 
$$P(X_{i,j}-B_{i,j} > \tau) \leq  \Gamma \exp\left(-  \frac{\tau^2}{ 2\sigma_j^2} \right) +  \exp\left(- \frac{\alpha_i \tau}{2}\right)$$
where $\sigma_j = 12\sqrt{2}\int_{\frac{1}{n}}^{\gamma_j}  \sqrt{n \log \cN_{2}(\delta)} d\delta$  and $\Gamma$ as specified in Lemma \ref{lem:proboffset}. Whenever $\delta$-entropy grows as $\delta^{-p}$, $\sigma_j \leq 12\sqrt{2}  \sqrt{n}$, ensuring $\log(\sigma_j/\sigma_1) \leq \log(n)$. Further, we can take $1\leq \Gamma\leq \log(2n)$.

Proposition \ref{prop:main} is used with a sequence of random variables, but we can easily put the pairs $(i,j)$ into a vector of size at most $\log_{2}(n)^2$. Observe that $s_i=\alpha_i/2$, $(B_{i,j}s_i)^{-1} \leq 2$, $\sigma_j/ B_{i,j}\leq 1$, $s_1/s_i\leq \sqrt{2(n+1)}$. Then, by taking $\bar{\sigma}=\min\{1/\Gamma,\sigma_1\}$ and $\bar{s}=s_1$,
	\begin{align*}
		\theta_{k_{i,j}} &=  \max\left\{ \frac{\sigma_j}{B_{i,j}}\sqrt{2\log(\sigma_j/\bar{\sigma}) + 4 \log(k_{i,j})}  ,  (B_{i,j} s_i)^{-1}\log\left(k_{i,j}^2 (\bar{s}/s_i)\right)\right\} +1\\
		&\leq \max\left\{ \sqrt{2\log(n) + 2\log (\log(2n)) + 4 \log(k_{i,j})}  ,  2\log\left(k_{i,j}^2 \sqrt{2(n+1)}\right)\right\} +1
	\end{align*}
	where $k_{i,j}=(\log n)\cdot (i-1)+j$. This choice of the multiplier ensures
	$$\En \max_{i,j} \left\{ X_{i,j} - \theta_{k_{i,j}} B_{i,j} \right\} \leq 3\Gamma\bar{\sigma} + 4 \alpha_1^{-1}\leq 7$$
and $\theta_{i,j}$ is shown to be upper bounded by $2\log n$. Hence
 {\small\begin{align*}
&\E{\sup_{g \in \G, \gamma} \sum_{t=1}^n \epsilon_t g(\z_t(\epsilon))- 4 \sqrt{2\log n \log \cN_{2}(\gamma/2) \left( \sum_{t=1}^n g^2(\z_t(\epsilon)) +1 \right) } - 24\sqrt{2} \log n\int_{1/n}^{\gamma} \sqrt{n \log \cN_{2}(\delta)} d \delta} \\ &\le 7 + 2\log n.
\end{align*}}Now, consider the case $\gamma\in[1/n, 2/n)$. We upper bound \eqref{eq:split1} by
\begin{align*}
          &\max_{i} \sup_{g \in \G} \sum_{t=1}^n \left(\epsilon_t g(\z_t(\epsilon)) - 2 \alpha_i  g^2(\z_t(\epsilon)) \right)- (2\log n)\left( \frac{\log \cN_{2}(1/n)}{\alpha_i}    + 1\right),
\end{align*}
which is controlled by setting $\gamma=1/n$ in Lemma \ref{lem:proboffset}. This case is completed by invoking Proposition \ref{prop:main} as before.
\end{proof}
\begin{proof}[\textbf{Proof of Corollary~\ref{cor:small_loss_finite}}]
Assume $N > e$ and let $C>0$. We first note that
\begin{small}
\begin{align*}
&\inf_{\alpha > 0}\crl*{
\frac{C\log{}\prn*{\frac{\sqrt{C}\log{}N}{\alpha}}\log{}N}{\alpha} + \alpha\prn*{\sum_{t=1}^{n}g^{2}(\z_t(\eps)) + \frac{e}{\log{}N}}
}\\ &\leq
2\log\prn*{\log{}N\sum_{t=1}^{n}g^2(\z(\eps)) + e}\sqrt{C\prn*{\log{}N\sum_{t=1}^{n}g^2(\z(\eps)) + e}}
\end{align*}\end{small}with the inequality obtained using $\alpha^{\star} = \sqrt{\frac{C\log{}N}{\sum_{t=1}^{n}g^{2}(\z(\eps)) + e/\log{}N}
}$,
which is a number between $d_{\ls}\defeq{}\sqrt{\frac{C\log{}N}{n + e/log{}N}}$ and $d_{u} \defeq \sqrt{\frac{C}{e}}\log{}N$. 
Subsequently,
\begin{align*}
&\sup_{g\in{}\G}\sum_{t=1}^{n}\eps_{t}g(\z_{t}(\eps)) - 2\log\prn*{\log{}N\sum_{t=1}^{n}g^2(\z(\eps)) + e}\sqrt{C\prn*{\log{}N\sum_{t=1}^{n}g^2(\z(\eps)) + e}}\\
&\leq \sup_{\substack{g\in{}\G\\\alpha\in{}\brk{d_{\ls}, d_{u}}}}\brk*{\sum_{t=1}^{n}\eps_{t}g(\z_{t}(\eps)) 
- \alpha\sum_{t=1}^{n}g^{2}(\z_t(\eps))
-\frac{C\log{}N}{\alpha}\log{}\prn*{\frac{\sqrt{C}\log{}N}{\alpha}}}.
\end{align*}
Let $L = \ceil*{\log_{2}\prn*{\sqrt{\frac{n\log{}N}{e}+1}}+1}$. We discretize the range of $\alpha$ by defining $\alpha_{i} = d_{u}2^{-(i-1)}$ for $i\in{}\brk{L}$. The following upper bound holds:
\[
\sup_{\substack{g\in{}\G\\i\in{}\brk{L}}}\brk*{\sum_{t=1}^{n}\eps_{t}g(\z_{t}(\eps)) 
- \frac{\alpha_{i}}{2}\sum_{t=1}^{n}g^{2}(\z_t(\eps))
-\frac{C\log{}N}{\alpha_i}\log{}\prn*{\frac{\sqrt{C}\log{}N}{\alpha_i}}}.
\]
Define a collection of random variables indexed by $i\in{}\brk{L}$ with
\[
X_{i} = \sup_{g\in{}\G}\brk*{\sum_{t=1}^{n}\eps_{t}g(\z_{t}(\eps)) 
- \frac{\alpha_{i}}{2}\sum_{t=1}^{n}g^{2}(\z_t(\eps))}
\]
and let $B_{i} = \frac{4\log{}N}{\alpha_{i}}$. Applying Lemma \ref{lem:proboffset} with $\gamma=1/n$ establishes
\[
P(X_{i} - B_{i} > \tau) \leq \exp\prn*{-\frac{\alpha_{i}\tau}{8}}.
\]
We now set $s_{i} = \alpha_i/8$ and $\bar{s} = s_{1}$, and apply Proposition \ref{prop:main}, yielding
\[
\En\crl*{X_{i} - B_{i}\theta_{i}} \leq \frac{16\sqrt{e}}{C}.
\]
It remains to relate this quantity to the rate we are trying to achieve. Note that our bound on $P(X_{i} - B_{i} > \tau)$ has a pure exponential tail, so we only need to consider $\theta_{i} = (B_{i}s_{i})^{-1}\log(i^{2}(\bar{s}/s_{i}))+1$. Taking $C\geq{}32$ and observing that $(B_{i}s_{i})^{-1}\leq{}2$, we obtain
\begin{align*}
&\theta{}_i = (B_{i}s_{i})^{-1}\log(i^{2}(\bar{s}/s_{i}))+1 \leq {} 2\log(i^{2}(\bar{s}/s_{i}))+1 = 2\log(i^{2}2^{i-1})+1 \leq 2\log{}(i^{2}2^{i}) \\
& \leq \frac{C}{4}\log\prn*{\frac{\sqrt{C}\log{}N}{\alpha{}_i}}.
\end{align*}
Finally, we have
\[
\sup_{\substack{g\in{}\G\\i\in{}\brk{L}}}\brk*{\sum_{t=1}^{n}\eps_{t}g(\z_{t}(\eps)) 
- \frac{\alpha_{i}}{2}\sum_{t=1}^{n}g^{2}(\z_t(\eps))
-\frac{32\log{}N}{\alpha_i}\log{}\prn*{\frac{\sqrt{32}\log{}N}{\alpha_i}}} \leq {} \En\crl*{X_{i} - B_{i}\theta_{i}} \leq \frac{\sqrt{e}}{2} \leq{} 1.
\]
\end{proof}
\end{small}
\begin{small}
\begin{proof}[\textbf{Proof of Corollary~\ref{cor:generic_rademacher}}]
We prove the corollary for convex Lipschitz loss where we remove the loss function using the symmetrization lemma shown earlier. However even if we consider non-convex classes, the loss is readily removed in the step in the proof below where we apply Lemma \ref{lem:dudley_probability} where the Lipchitz constant is removed when we move to covering numbers. However this is a well known technique and to make the proof simpler we simply assume convexity of loss as well. Our starting point to proving the bounds is Lemma \ref{lem:sym}, Eq. \eqref{eq:general}. To show achievability it suffices to show that 
{\small \begin{align*}
\En_{\epsilon} \sup_{f \in \F} &\sum_{t=1}^n \epsilon_t f(\x_t(\epsilon)) - K_1 \Rad_n(\F(2R(f))) \log^{3/2} n \left( 1 +  \sqrt{\log\left( \frac{\Rad_n(\F(2R(f)))}{\Rad_n(\F(R(1)))} \right) + \log(\log(2R(f)))}\right) \\
&\le K_2 \Gamma \Rad_n(\F(1)) \log^{3/2} n 
\end{align*}}
where $\Gamma$ is the constant that will be inherited from Lemma \ref{lem:dudley_probability}. Define $R_i = 2^i$ and note that since the Rademacher complexity of the class $\F(R)$ is non-decreasing with $R$,
\begin{small}
\begin{align}
&\sup_{f \in \F}  \sum_{t=1}^n \epsilon_t f(\x_t(\epsilon)) - K_1 \Rad_n(\F(2R(f))) \log^{3/2} n \left( 1 +  \sqrt{\log\left( \frac{\Rad_n(\F(2R(f)))}{\Rad_n(\F(1))} \right) + \log(\log(2R(f)))}\right) \notag\\
& = \sup_{R \ge 1} \sup_{f \in \F(R)}  \sum_{t=1}^n \epsilon_t f(\x_t(\epsilon)) - K_1 \Rad_n(\F(2R)) \log^{3/2} n \left( 1 +  \sqrt{\log\left( \frac{\Rad_n(\F(2R))}{\Rad_n(\F(1))} \right) + \log(\log(2R))}\right) \notag\\
& \le \max_{i \in \mathbb{N}} \sup_{f \in \F(R_i)}  \sum_{t=1}^n \epsilon_t f(\x_t(\epsilon)) - K_1 \Rad_n(\F(R_i)) \log^{3/2} n \left( 1 +  \sqrt{\log\left( \frac{\Rad_n(\F(R_i))}{\Rad_n(\F(1))} \right) + \log(\log(R_i))}\right)\label{eq:disc}.
\end{align}\end{small}Denote a shorthand $C_n = \sqrt{96 \log^{3}(en^2) }$ and $D_n^i = \Rad_n(\F(R_i))$. Now note that by Lemma \ref{lem:dudley_probability} we have that for every $i$ and every $\theta > 
1$, 
\[
P_\epsilon\left( \sup_{f\in\F(R_i)} \left|\sum_{t=1}^n \epsilon_t f(\x_t(\epsilon))\right| > 8\left(1+\theta C_n\right)\cdot D_n^i) \right) \leq 2\Gamma e^{- 3 \theta^2} \ .
\]
Let $X_i =  \sup_{f\in\F(R_i)} \left|\sum_{t=1}^n \epsilon_t f(\x_t(\epsilon))\right|$ and let $B_i = 8\left(1+ C_n \right)\cdot D_n^i$. In this case rewriting the above one sided tail bound appropriately (with $\theta = 1+ \tau/(8C_n D_n^i)$) we see that for any $\tau > 0$,
\[
P(X_i - B_i > \tau) \le \frac{2\Gamma}{e^3} \exp\left(- \frac{\tau^2}{2^{8} \log^{3}(en^2) \Rad^2_n(\F(R_i)) } \right).
\]
This establishes one-sided subgaussian tail behavior. Now applying Proposition \ref{prop:main} and setting $\theta_i$ as suggested by the proposition we conclude that
\begin{small}
\begin{align*}
&\Es{\epsilon}{\max_{i \in \mathbb{N}} \sup_{f \in \F(R_i)}  \sum_{t=1}^n \epsilon_t f(\x_t(\epsilon) - K_1 \Rad_n(\F(R_i)) \log^{3/2} n \left( 1 +  \sqrt{\log\left( \frac{\Rad_n(\F(R_i))}{\Rad_n(\F(1))} \right) + \log(\log(R_i))}\right)
} \\
&\le K_2 \Gamma \Rad_n(\F(1)) \log^{3/2} n .
\end{align*}
\end{small}
This concludes the proof by appealing to Eq. \eqref{eq:disc}.
\end{proof}

\begin{proof}[\textbf{Proof of Achievability for Example \ref{eg:linopt_bound}}]
\begin{lemma}
The following bound is achievable in the setting of Example \ref{eg:linopt_bound}:
\[
\B(f) = D\sqrt{n}\prn*{8\nrm{f}\prn*{
1 + \sqrt{\log(2\nrm{f}) + \log\log(2\nrm{f})
}
}
+12}.
\]
\end{lemma}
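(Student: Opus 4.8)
The plan is to follow the proof of Corollary~\ref{cor:generic_rademacher} almost verbatim, but to exploit the linearity of the loss so that the relevant offset Rademacher process becomes a scalar multiple of the norm of a Banach-space-valued martingale; at that point Lemma~\ref{lem:pinelis} (Pinelis) replaces the Dudley chaining bound Lemma~\ref{lem:dudley_probability}, which is exactly what removes the $\log^{3/2}n$ factors present in the generic corollary. First I would invoke Lemma~\ref{lem:sym}, Equation~\eqref{eq:general}. Since $\X=\crl{0}$ the instances are vacuous and $f(x_t)=f$; since $\B(f)$ depends on neither instances nor outcomes, the shifted outcome tree $\y'$ drops out; and $\ell(f,y)=\ip{f}{y}$. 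Using reflexivity, so that $\nrm{\cdot}_*$ is dual to $\nrm{\cdot}$ and hence $\sup_{\nrm{f}\le R}\ip{f}{v}=R\nrm{v}_*$, one obtains
\[
\A_n \;\le\; \sup_{\y}\,\E{\sup_{R\ge1}\crl*{\,2R\,\Bigl\|\sum_{t=1}^{n}\eps_t\y_t(\eps)\Bigr\|_* - \B(R)\,}},
\]
where $\B(R)$ denotes the value of $\B$ at any $f$ of norm $R$; the minimax interchanges in Lemma~\ref{lem:sym} are legitimate because $\Y$ is weakly compact (reflexivity), $\ell$ is bilinear hence continuous, and $\B$ is continuous in $\nrm{f}$. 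It suffices to treat $\nrm{f}\ge1$, since for $\nrm{f}<1$ monotonicity of $\B$ reduces matters to its value at $1$.

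Next I would peel the comparator norm dyadically: with $R_i=2^{i-1}$, on the shell $\crl{R_i\le\nrm{f}\le 2R_i}$ the linear term is at most $4R_i\bigl\|\sum_t\eps_t\y_t(\eps)\bigr\|_*$, and by monotonicity of $\B$ the subtracted penalty is at least $\B(R_i)$. The structural point is that only $O(\log R)$ shells are needed to reach norm $R$, so the shell index obeys $\log i \asymp \log\log(2R_i)$, which is precisely the inner logarithm of the target rate. Writing $X_i := 4R_i\bigl\|\sum_t\eps_t\y_t(\eps)\bigr\|_*$, Lemma~\ref{lem:pinelis} applies because $\Y$ is the unit ball of a $(2,D)$-smooth space: $P(\bigl\|\sum_t\eps_t\y_t(\eps)\bigr\|_*>\tau)\le 2\exp(-\tau^2/(8D^2n))$ for $\tau\le 4D^2n$, while for $\tau>4D^2n\ge n$ (using $D\ge1$, since no Banach space is smoother than Hilbert space) the triangle inequality already forces $\bigl\|\sum_t\eps_t\y_t(\eps)\bigr\|_*\le n<\tau$, so the tail vanishes. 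Hence each $X_i$ satisfies a one-sided subgaussian bound $P(X_i-B_i>\tau)\le 2\exp(-\tau^2/(2\sigma_i^2))$ with $\sigma_i$ of order $D\sqrt n R_i$, $C_1=2$, $C_2=0$, and \emph{no} polylog-in-$n$ inflation of the variance --- this is the sole point where the argument departs from Corollary~\ref{cor:generic_rademacher}, and it is the source of the improvement.

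Finally I would apply Proposition~\ref{prop:main} with $C_2=0$, a cutoff $\bar\sigma$ of order $D\sqrt n$, and $B_i$ of order $D\sqrt n R_i$, to get $\E{\max_i\crl{X_i-B_i\theta_i}}\le 3C_1\bar\sigma=O(D\sqrt n)$, which becomes the additive constant term of $\B$. Unwinding $\theta_i$: with $\bar\sigma$ proportional to $\sigma_1$ one has $\log(\sigma_i/\bar\sigma)=\log R_i+O(1)$, matching $\log(2R_i)$ up to an additive constant, and $\log i$ matching $\log\log(2R_i)$ up to an additive constant, so $B_i\theta_i = \sigma_i\sqrt{2\log(\sigma_i/\bar\sigma)+4\log i}+B_i$ takes the form $cD\sqrt n R_i\bigl(1+\sqrt{\log(2R_i)+\log\log(2R_i)}\bigr)$, the sub-exponential branch of the $\max$ in $\theta_i$ contributing only lower-order terms because $B_i=\Omega(D\sqrt n)$. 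Since $R(f)\le 2R_i$ on shell $i$, a careful choice of the constants (balancing the factor of two from symmetrization, the factor of two from the dyadic shell, and the ratio $\sigma_i/B_i$) makes $B_i\theta_i+3C_1\bar\sigma\le\B(R_i)$ with exactly the stated coefficients $8$ and $12$; feeding this back into the display of the first paragraph gives $\A_n\le 0$, i.e.\ achievability.

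The genuinely non-routine ingredient is the recognition carried out in the first two paragraphs: that linearity of the loss collapses the entire Dudley/chaining apparatus of Corollary~\ref{cor:generic_rademacher} into a single martingale-norm concentration estimate, so that Pinelis applies with no logarithmic overhead. Everything afterward is bookkeeping, and the delicate parts are (i) the constant-chasing in the last step needed to land exactly on the coefficients $8$ and $12$ and on the form $\sqrt{\log(2\nrm f)+\log\log(2\nrm f)}$ rather than something with worse constants inside the root, and (ii) correctly disposing of the range restriction $n>\tau/(4D^2)$ in Lemma~\ref{lem:pinelis} via the deterministic bound $\bigl\|\sum_t\eps_t\y_t(\eps)\bigr\|_*\le n$.
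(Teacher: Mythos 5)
Your proposal matches the paper's proof essentially step for step: symmetrize via Lemma~\ref{lem:sym}, reduce $\sup_{\|f\|\le R}\langle f,\cdot\rangle$ to $R\|\cdot\|_*$, peel $R$ dyadically, apply the Pinelis bound (Lemma~\ref{lem:pinelis}) to get a purely sub-gaussian tail with $\sigma_i\asymp DR_i\sqrt{n}$, and finish with Proposition~\ref{prop:main} with $C_2=0$. The only (inconsequential) departures are that you invoke \eqref{eq:general} where the paper cites \eqref{eq:convlip}, and you explicitly dispose of the regime $\tau>4D^2n$ in Pinelis via the deterministic bound $\|\sum_t\eps_t\y_t(\eps)\|_*\le n$, a detail the paper does not spell out.
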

This proof specializes the proof of Corollary \ref{cor:generic_rademacher} to the regime where Lemma \ref{lem:pinelis} applies.

Recall our parameterization of $\F$: $\F(R) = \crl*{f\in{}\F:\nrm{f}\leq{}R}$. It was shown in \cite{RSS12} that $\mathcal{C}_n(\F(R))\defeq 2RD\sqrt{n}$ is an upper bound for $\Rad_n(\F(R))$. We consider the rate
\[
\B_n(f) = 2\C_n(\F(2R(f)))\prn*{1 + \sqrt{\log\prn*{\frac{\C_n(\F(2R(f)))}{\C_n(\F(1))}} + \log\log_{2}(2R(f))} }.
\]
We begin by applying Lemma \ref{lem:sym}, equation \eqref{eq:convlip}, yielding
\begin{small}
\[
\A_n \leq \sup_{\y}\En_{\eps}\sup_{f}
2\sum_{t=1}^{n}\eps_{t}\tri*{f, \y_{t}(\eps)} - 2\C_n(\F(2R(f)))\prn*{1 + \sqrt{\log\prn*{\frac{\C_n(\F(2R(f)))}{\C_n(\F(1))}} + \log\log_{2}(2R(f))} }.
\]
\end{small}We now discretize the range of $R$ via $R_i = 2^{i}$. By analogy with the proof of Corollary \ref{cor:generic_rademacher} we get the upper bound,
\begin{align*}
&\sup_{\y}\En_{\eps}\sup_{i\in{}\N}\brk*{
\sup_{f\in{}\F(R_{i})}2\sum_{t=1}^{n}\eps_{t}\tri*{f, \y_{t}(\eps)} - 2\C_n(\F(R_{i}))\prn*{1 + \sqrt{\log\prn*{\frac{\C_n(\F(R_i))}{\C_n(\F(1))}} + \log\log_{2}(R_i)} }
} \\
&= 
\sup_{\y}\En_{\eps}\sup_{i\in{}\N}\brk*{
2R_{i}\nrm*{\sum_{t=1}^{n}\eps_{t}\y_{t}(\eps)}_{\star} - 4D\sqrt{n}R_{i}\sqrt{\log\prn*{R_i} + \log(i)}
}.
\end{align*}
Fix a $\Y$-valued tree $\y$ and define a set of random variables $X_{i} = 2R_i\nrm*{\sum_{t=1}^{n}\eps_{t}\y_{t}(\eps)}_{\star}$. Let $B_i = 2D\sqrt{n}R_i$. Lemma \ref{lem:pinelis} shows that
\[
P\prn*{X_i - B_i \geq {}\tau } \leq {}2\exp\prn*{-\frac{\tau^2}{8D^2R_{i}^{2}n}}.
\]
So we have $\sigma_i = 2DR_{i}\sqrt{n}$, and it will be sufficient to set $\bar{\sigma} = 2D\sqrt{n}$. Since our tail bound is purely sub-gaussian, we apply Proposition \ref{prop:main} with $\theta_i = \frac{\sigma_i}{B_i}\sqrt{2\log(\sigma_i/\bar{\sigma}) + 4\log(i)} + 1$, yielding the following bound:
\begin{align*}
\sup_{\y}\En_{\eps}\sup_{i\in{}\N}\brk*{
2R_{i}\nrm*{\sum_{t=1}^{n}\eps_{t}\y_{t}(\eps)}_{\star} - 4D\sqrt{n}R_{i}\sqrt{\log\prn*{R_i} + \log(i)}
} \leq 12D\sqrt{n}.
\end{align*}
\end{proof}
\begin{proof}[\textbf{Proof of Achievability for Example \ref{eg:pacb}}]
Unfortunately, the general symmetrization proof in Lemma \ref{lem:sym} does not suffice for this problem. In what follows we use a more specialized symmetrization technique to prove the lemma.
\begin{lemma}
For any countable class of experts, when we consider $\F$ to be the class of all distributions over the set of experts, the following adaptive bound is achievable:
$$
\B_{n}(f; y_{1:n}) =  \sqrt{50 \left(\mrm{KL}(f|\pi)+\log(n)\right) \sum_{t=1}^n \ip{f}{y_t}} + 50 \left(\mrm{KL}(f|\pi) + \log(n)\right)   + 1.
$$ 
\end{lemma}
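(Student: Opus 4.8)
The goal is to verify $\A_n(\F,\B_n)\le 0$, which by the criterion of Section~2 is exactly achievability. The plan is to follow the skeleton of the proof of Lemma~\ref{lem:sym}---start from the definition of $\A_n$ and strip off rounds with the minimax theorem---but to replace the generic handling of the data-dependence of $\B_n$ by a problem-specific argument, since the generic bound \eqref{eq:general} is vacuous here: it subtracts $\B_n(f;\y'_{2:n+1}(\eps))$ for a ghost tree $\y'$ chosen adversarially and \emph{independently} of the tree $\y$ in the Rademacher sum, and because $\B_n$ reads the data only through the comparator's cumulative loss $\sum_{t}\langle f,y_t\rangle$, the adversary simply sets $\y'\equiv 0$, collapsing $\B_n$ to $50(\KL(f\|\pi)+\log n)+1$---far too small to cancel the $\Theta(\sqrt n)$ sequential Rademacher complexity of the experts. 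Instead I would exploit linearity of the loss: against a mixed adversary move $p_t$ the learner's optimal play is the expectation-minimizing single expert, so that $\sum_t\ell(\hat y_t,y_t)-\sum_t\ell(f,y_t)\le\sum_t\langle f,\,\mathbb{E}_{p_t}y_t-y_t\rangle+\sum_t\xi_t$ with $(\xi_t)$ a bounded, conditionally mean-zero martingale-difference sequence; introducing a tangent (ghost) sequence $y_t'\sim p_t$ and symmetrizing by a within-pair swap---which can be arranged so as to leave the tree feeding $\B_n$ equal to the one realized along the Rademacher path, precisely because $\B_n$ only sees $\sum_t\langle f,y_t\rangle$---should yield
\[
\A_n \;\le\; \sup_{\y}\;\mathbb{E}_{\eps}\;\sup_{f\in\F}\Bigl\{\,2\sum_{t=1}^{n}\eps_t\langle f,\y_t(\eps)\rangle-\B_n\bigl(f;\y_{1:n}(\eps)\bigr)\Bigr\}.
\]

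Fix a $\Y$-valued tree $\y$ and write $g_{i,t}=\langle e_i,\y_t(\eps)\rangle\in[0,1]$, $c(f)=\KL(f\|\pi)+\log n$, and $L_n(f)=\sum_{t=1}^{n}\langle f,\y_t(\eps)\rangle=\mathbb{E}_{i\sim f}\sum_t g_{i,t}$. I would linearize the square root via $\sqrt{50\,c(f)L_n(f)}=\inf_{\lambda>0}\{25\lambda c(f)+L_n(f)/(2\lambda)\}$, so that $-\B_n(f)=\sup_{\lambda>0}\{-25\lambda c(f)-L_n(f)/(2\lambda)-50c(f)-1\}$, and exchange the two suprema. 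For a fixed $\lambda$, writing $\beta=25\lambda+50$, the inner supremum over the simplex is, by the Donsker--Varadhan / Gibbs variational principle,
\[
\sup_{f\in\F}\Bigl\{\mathbb{E}_{i\sim f}\bigl[2\sum_t\eps_t g_{i,t}-\tfrac{1}{2\lambda}\sum_t g_{i,t}\bigr]-\beta\KL(f\|\pi)\Bigr\}\;=\;\beta\log\mathbb{E}_{i\sim\pi}\exp\Bigl(\tfrac{1}{\beta}\bigl[2\sum_t\eps_t g_{i,t}-\tfrac{1}{2\lambda}\sum_t g_{i,t}\bigr]\Bigr),
\]
and after subtracting the fixed quantity $\beta\log n+1$ this removes the supremum over the (possibly countably infinite) set of experts at the cost of only the prior $\pi$.

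It then remains to bound $\mathbb{E}_{\eps}\sup_{\lambda}\{\beta\log\mathbb{E}_{i\sim\pi}\exp(\cdots)-\beta\log n-1\}$. The crucial ``small-loss'' computation is that, since $g_{i,t}\in[0,1]$ gives $g_{i,t}^2\le g_{i,t}$, and using $\cosh x\le e^{x^2/2}$,
\[
\mathbb{E}_{\eps}\,\mathbb{E}_{i\sim\pi}\exp\Bigl(\tfrac{1}{\beta}\bigl[2\sum_t\eps_t g_{i,t}-\tfrac{1}{2\lambda}\sum_t g_{i,t}\bigr]\Bigr)\;\le\;\mathbb{E}_{i\sim\pi}\exp\Bigl(\bigl(\tfrac{2}{\beta^2}-\tfrac{1}{2\lambda\beta}\bigr)\sum_t g_{i,t}\Bigr)\;\le\;1,
\]
where the last step holds because $\beta=25\lambda+50\ge 4\lambda$ makes the coefficient nonpositive (the negative drift $-\tfrac{1}{2\lambda}\sum_t g_{i,t}$ dominates the subgaussian variance proxy $\sum_t g_{i,t}^2$). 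Hence, writing $Y_\lambda$ for the log-moment term, $\mathbb{E}_{\eps}e^{Y_\lambda/\beta}\le 1$, so a Chernoff bound yields the purely sub-exponential tail $P(Y_\lambda>\tau)\le e^{-\tau/\beta}$. Discretizing $\lambda$ on a dyadic grid of size $O(\log n)$---the relevant range being bounded because $L_n(f)\le n$ and because $50c(f)\ge 2n$ makes $\B_n$ vacuous---and invoking either a union bound (the scale $\beta=O(\sqrt{\log n})$ is dwarfed by the slack $\beta\log n$) or Proposition~\ref{prop:main} for tight constants, one gets $\mathbb{E}_{\eps}\sup_\lambda\{Y_\lambda-\beta\log n-1\}\le 0$, hence $\A_n\le 0$. (A boundary case with $L_n(f)$ below grid resolution is handled as in the proof of Corollary~\ref{cor:small_loss_supervised}, and the same argument with $L_n(f)$ replaced by $\sum_t\mathbb{E}_{i\sim f}g_{i,t}^2$ gives the sharper form stated in Example~\ref{eg:pacb}.)

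The main obstacle is the first step---making the ``specialized symmetrization'' rigorous, and in particular justifying the within-pair swap in tree notation when the adversary's moves $p_t$ may depend on the realized history, so that the data feeding $\B_n$ stays coupled to the Rademacher path rather than decoupling into an adversarial ghost tree as in \eqref{eq:general}. A secondary nuisance is bookkeeping of constants, so that the logarithmic price of the supremum over $\lambda$ (together with the $\theta_\lambda$ multipliers, if Proposition~\ref{prop:main} is used) is genuinely absorbed by the $50(\KL(f\|\pi)+\log n)$ term and the additive constant in $\B_n$.
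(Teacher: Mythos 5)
Your proposal gets the broad architecture right and matches the paper's downstream machinery: after symmetrization one applies the Donsker--Varadhan/Gibbs variational formula for $\mathrm{KL}(\cdot\|\pi)$ to trade the $\sup_{f\in\Delta}$ for a log-moment generating function under the prior, then exploits that the negative drift term dominates the sub-Gaussian variance proxy (using $g_{i,t}^2\le g_{i,t}$) to obtain a one-sided sub-exponential tail in $\lambda$ (or $\alpha$), then discretizes the scale on a dyadic grid of size $O(\log n)$ and invokes Proposition~\ref{prop:main}. You also correctly diagnose why the generic bound~\eqref{eq:general} is vacuous here.

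The genuine gap is exactly the step you flag as ``the main obstacle'': the intermediate inequality
$\A_n\le\sup_{\y}\mathbb{E}_{\eps}\sup_{f}\bigl\{2\sum_t\eps_t\langle f,\y_t(\eps)\rangle-\B_n(f;\y_{1:n}(\eps))\bigr\}$
is not established, and it does not follow from the symmetrization machinery you invoke. The within-pair swap that decouples the Rademacher sum relies on $\B_n$ being either $f$-independent (so that the averaging trick of \eqref{eq:data} applies) or a sum over rounds (so that each round's contribution to $\B_n$ can be symmetrized alongside the loss). Your $\B_n$ is neither: it is $f$-dependent and enters as a square root of a sum. When you swap $y_t\leftrightarrow y_t'$, the argument of $\B_n$ becomes a mixed sequence $(\tilde y_t^{\eps_t})_t$ with entries from both the realized and ghost draws, and this does not collapse to a single tree $\y_{1:n}(\eps)$ that also feeds the Rademacher sum with coefficient exactly $2$. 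The paper resolves this precisely by reversing the order you propose: it linearizes \emph{first} via $\sqrt{ab}=\inf_\alpha\{a/2\alpha+\alpha b/2\}$, which turns the data-dependent part of $\B_n$ into a genuine sum $\tfrac{\alpha}{2}K\sum_t\mathbb{E}_{i\sim f}\langle e_i,y_t\rangle^2$; it then symmetrizes this sum by adding and subtracting its expectation under the ghost sample. The payoff is that the resulting Rademacher functional carries an extra $\eps_t\cdot\tfrac{K\alpha}{2}\mathbb{E}_{i\sim f}\langle e_i,\y_t(\eps)\rangle^2$ term in the sum, alongside the always-negative offset $-\tfrac{K\alpha}{2}\sum_t\mathbb{E}_{i\sim f}\langle e_i,\y_t(\eps)\rangle^2$ --- a form your proposed intermediate bound cannot produce, because the clean $2\sum_t\eps_t\langle f,\y_t(\eps)\rangle$ has no such $\alpha$-dependent term. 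Everything after this point in your sketch (the variational identity for KL, the Chernoff bound using $g_{i,t}^2\le g_{i,t}$, the condition $\beta\ge 4\lambda$, the dyadic grid, Proposition~\ref{prop:main}) aligns with the paper's argument; the correction needed is to perform the $\inf_\lambda$ linearization before symmetrizing, inside the interleaved $\dtri{\sup_{p_t}\mathbb{E}_{y_t}}$ operator, and then symmetrize the (now additive) data-dependent penalty separately.
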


To show that the rate is achievable we need to show that $\A_n \le 0$. Since each $\hat{y}_t$ is a distribution over experts and we are in the linear setting, we do not need to randomize in the definition of the minimax value. Let us use the shorthand 
$$C(f) = \mrm{KL}(f|\pi)+\log(n),$$
and take constants $K_1,K_2$ to be determined later. Define
\begin{align*}
\A_n &=  \dtri*{\inf_{\hat{y}_t\in{}\Delta}\sup_{y_t\in{}\Y}}_{t=1}^{n}\Bigg[
\sum_{t=1}^{n} \ip{\hat{y}_t}{y_t} -\inf_{f\in{}\Delta}\Bigg\{\sum_{t=1}^{n}\ip{f}{ y_t} + \sqrt{K C(f) \sum_{t=1}^n \En_{i\sim{}f}\ip{e_{i}}{y_t}^2} + \sqrt{K'} C(f) \Bigg\} \Bigg].
\end{align*}
Using repeated minimax swap, this expression is equal to
\begin{align*}
&\dtri*{\sup_{p_t\in{}\Delta(\Y)} \inf_{\hat{y}_t\in{}\Delta}}_{t=1}^{n}\Bigg[
\sum_{t=1}^{n} \ip{\hat{y}_t}{y_t} -\inf_{f\in{}\Delta}\Bigg\{\sum_{t=1}^{n}\ip{f}{ y_t} + \sqrt{K C(f) \sum_{t=1}^n \En_{i\sim{}f}\ip{e_{i}}{y_t}^2} +  \sqrt{K'} C(f) \Bigg\} \Bigg]\\
 &\begin{aligned}=
\dtri*{\sup_{p_t\in{}\Delta(\Y)} \En_{y_t \sim p_t} }_{t=1}^{n}\Bigg[&
\sum_{t=1}^{n} \inf_{\hat{y}_t\in{}\Delta} \Es{y_t \sim p_t}{\ip{\hat{y}_t}{y_t}} 
\\&-\inf_{f\in{}\Delta}\Bigg\{\sum_{t=1}^{n}\ip{f}{ y_t} + \sqrt{K C(f) \sum_{t=1}^n \En_{i\sim{}f}\ip{e_{i}}{y_t}^2} + \sqrt{K'} C(f) \Bigg\} \Bigg].
\end{aligned}
\end{align*}
By sub-additivity of square-root we pass to an upper bound,
\begin{align*}
\multiminimax{\sup_{p_t} \En_{y_t \sim p_t}}_{t=1}^n \Bigg[ \sup_{f \in \F}& \sum_{t=1}^n \inf_{\hat{y}_t \in \Delta} \Es{y_t\sim{}p_t}{\ip{\hat{y}_t}{y_t}}- \Es{e_i \sim f}{\ip{e_i}{y_t}}\\  &- \sqrt{C(f) \left(K \sum_{t=1}^n {\En_{i\sim{}f}\brk*{\ip{e_{i}}{y_t}^2}} + K' C(f)\right)  } \Bigg].
\end{align*}
We now split the square root according to the formula $\sqrt{ab} = \inf_{\alpha > 0} \left\{ a/2\alpha + \alpha b/2\right\}$ and note the range of the optimal value:
\begin{align}
	\label{eq:alpha_value_KL}
	\frac{1}{\sqrt{n}}\leq \alpha^* = \sqrt{\frac{C(f)}{  \left(K \sum_{t=1}^n \En_{i\sim{}f}\brk*{\ip{e_{i}}{y_t}^2} + K' C(f)\right)} } \le \frac{1}{\sqrt{K'}}.
\end{align}
Let us discretize the interval by setting $\alpha_i = \frac{1}{\sqrt{K'}} 2^{-(i-1)}$ for $i=1,\ldots,N$ and note that we only need to take $N=O(\log(n))$ elements. Write $I=\{\alpha_1,\ldots,\alpha_N\}$. Observe that 
$$\sqrt{ab} = \inf_{\alpha > 0} \left\{ a/2\alpha + \alpha b/2\right\} \geq \min_{\alpha \in I} \left\{ a/4\alpha + \alpha b/2\right\}.$$
For the rest of the proof, the maximum over $\alpha$ is taken within the set $I$. We have
\begin{align}
\A_n \leq \multiminimax{\sup_{p_t} \En_{y_t \sim p_t}}_{t=1}^n \Bigg[& \sup_{f \in  \Delta , \alpha} \sum_{t=1}^n \inf_{\hat{y}_t \in \Delta(\F)} \Es{y_t}{\ip{\hat{y}_t}{y_t}}- \Es{e_i \sim f}{\ip{e_i}{y_t}}  \notag \\ 
&- \frac{\alpha}{2}  \left(K\sum_{t=1}^n \En_{i\sim{}f}\brk*{\ip{e_{i}}{y_t}^2} + K'C(f)\right) -  \frac{C(f)}{4\alpha}\Bigg].\label{eq:setting_alpha_KL}
\end{align}
Dropping some negative terms, we upper bound the last expression by
\begin{align*}
& \multiminimax{\sup_{p_t} \En_{y_t \sim p_t}}_{t=1}^n \Bigg[ \sup_{f \in \F , \alpha} \sum_{t=1}^n \ip{f}{\E{y'_t} - y_t}  -  \frac{K\alpha}{2}  \sum_{t=1}^n \En_{i\sim{}f}\brk*{\ip{e_{i}}{y_t}^2} -  \frac{C(f)}{4\alpha} \Bigg].
\end{align*}
Adding and subtracting $\frac{\alpha }{4} \sum_{t=1}^n \Es{y'_t}{\En_{i\sim{}f}\brk*{\ip{e_{i}}{y'_t}^2}}$, 
\begin{align*}
 \le \multiminimax{\sup_{p_t} \En_{y_t \sim p_t}}_{t=1}^n \Bigg[ \sup_{f \in \F , \alpha} \sum_{t=1}^n \ip{f}{\E{y'_t} - y_t}  - \frac{K\alpha }{4} \sum_{t=1}^n \En_{i\sim{}f}\brk*{\ip{e_{i}}{y_t}^2} - \frac{K\alpha}{4} \sum_{t=1}^n \Es{y'_t}{\En_{i\sim{}f}\brk*{\ip{e_{i}}{y'_t}^2}}& \\
 + \frac{K\alpha}{4} \left(\sum_{t=1}^n \Es{y'_t}{\En_{i\sim{}f}\brk*{\ip{e_{i}}{y'_t}^2}}  - \En_{i\sim{}f}\brk*{\ip{e_{i}}{y_t}^2}  \right) -  \frac{C(f)}{4\alpha}& \Bigg].
\end{align*}
Using Jensen's inequality to pull out expectations, we obtain an upper bound,
\begin{align*}
\multiminimax{\sup_{p_t} \En_{y_t,y'_t \sim p_t}}_{t=1}^n \Bigg[ \sup_{f \in \F , \alpha} \sum_{t=1}^n \ip{f}{y'_t - y_t}  - \frac{K\alpha }{4} \sum_{t=1}^n \En_{i\sim{}f}\brk*{\ip{e_{i}}{y_t}^2} - \frac{K\alpha}{4} \sum_{t=1}^n \En_{i\sim{}f}\brk*{\ip{e_{i}}{y'_t}^2}& \\
+ \frac{K\alpha}{4} \left(\sum_{t=1}^n \En_{i\sim{}f}\brk*{\ip{e_{i}}{y'_t}^2}  - \En_{i\sim{}f}\brk*{\ip{e_{i}}{y_t}^2}  \right) -  \frac{C(f)}{4\alpha}& \Bigg].
\end{align*}
Next, we introduce Rademacher random variables:
\begin{align*}
& \multiminimax{\sup_{p_t} \En_{y_t,y'_t \sim p_t}\En_{\eps_t}}_{t=1}^n \Bigg[ \sup_{f \in \F , \alpha} \sum_{t=1}^n \eps_{t}\prn*{\ip{f}{y'_t - y_t} + \frac{K\alpha}{4}\prn*{\En_{i\sim{}f}\brk*{\ip{e_{i}}{y'_t}^2}  - \En_{i\sim{}f}\brk*{\ip{e_{i}}{y_t}^2}  } } \\
&~~~~~~~~~~~~~~~~~~~~~~~~~ ~~~~~  - \frac{K\alpha }{4} \sum_{t=1}^n \En_{i\sim{}f}\brk*{\ip{e_{i}}{y_t}^2} - \frac{K\alpha}{4} \sum_{t=1}^n \En_{i\sim{}f}\brk*{\ip{e_{i}}{y'_t}^2} -  \frac{C(f)}{4\alpha} \Bigg]\\
&\leq \multiminimax{\sup_{y_t} \En_{\eps_t}}_{t=1}^n \Bigg[ \sup_{f \in \F , \alpha} \sum_{t=1}^n \eps_{t}\prn*{2\ip{f}{y_t} + \frac{K\alpha}{2}\En_{i\sim{}f}\brk*{\ip{e_{i}}{y_t}^2} } - \frac{K\alpha }{2} \sum_{t=1}^n \En_{i\sim{}f}\brk*{\ip{e_{i}}{y_t}^2} -  \frac{C(f)}{4\alpha} \Bigg].
\end{align*}
Moving to the tree notation, we have
\begin{align*}
	\sup_{\y}\En_{\epsilon}\sup_{f \in \F , \alpha}\biggl[& \sum_{t=1}^n \eps_{t}\prn*{2\ip{f}{\y_t(\eps)} + \frac{K\alpha}{2}\En_{i\sim{}f}\brk*{\ip{e_{i}}{\y_t(\eps)}^2} } \\
	&- \frac{K\alpha }{2} \sum_{t=1}^n \En_{i\sim{}f}\brk*{\ip{e_{i}}{\y_t(\eps)}^2}  -  \frac{\mrm{KL}(f | \pi)}{4\alpha} - \frac{ \log{}n}{4\alpha}\biggr].
\end{align*}
Noting that the convex conjugate of $\frac{1}{\alpha}\mrm{KL}(f \| \pi)$ is given by $\Psi^*(X) = \frac{1}{\alpha}\log\left( \En_{i \sim \pi}{\exp\left(\alpha \ip{e_i}{X} \right)} \right)$, we express the last quantity as
\begin{align*}
& \sup_{\y}\En_{\epsilon}{\max_{\alpha} \frac{1}{4\alpha} \log\left(\En_{i \sim \pi} \exp\left(  \sum_{t=1}^n \eps_{t}\prn*{8\alpha\tri{e_i, \y_t(\eps)} + 2K\alpha^{2}\tri{e_i, \y_t(\eps)}^2 }  -  2K\alpha^2 \ip{e_i}{\y_t(\epsilon)}^2 \right)\right) - \frac{ \log{}n}{4 \alpha} }.
\end{align*}
Define a random variable indexed by $\alpha$:
\[
\textstyle X_\alpha  = \frac{1}{4\alpha} \log\left(\Es{i \sim \pi}{ \exp\left(  \sum_{t=1}^n \eps_{t}\prn*{8\alpha\tri{e_i, \y_t(\eps)} + 2K\alpha^{2}\tri{e_i, \y_t(\eps)}^2 }  -  2K\alpha^2 \ip{e_i}{\y_t(\epsilon)}^2 \right)}\right).
\] 
Our goal is to bound $\E{\max_{\alpha}\{ X_\alpha - \log(n)/4\alpha\}}$. Now notice that
\begin{align*}
 P(X_\alpha > t) &\le \inf_{\lambda} \E{e^{\lambda X_\alpha - \lambda t}}\\
 & = \inf_{\lambda}\left\{ \En_{\epsilon}{\left(\En_{i \sim \pi}{ \exp\left(  \sum_{t=1}^n \eps_{t}\prn*{8\alpha\tri{e_i, \y_t(\eps)} + 2K\alpha^{2}\tri{e_i, \y_t(\eps)}^2 }  -  2K\alpha^2 \ip{e_i}{\y_t(\epsilon)}^2 \right)}\right)^{\frac{\lambda}{4\alpha}} }e^{-\lambda t}\right\}\\
  & \le  \En_{\epsilon}{\En_{i \sim \pi}{ \exp\left(  \sum_{t=1}^n \eps_{t}\prn*{8\alpha\tri{e_i, \y_t(\eps)} + 2K\alpha^{2}\tri{e_i, \y_t(\eps)}^2 }  - 2K\alpha^2 \ip{e_i}{\y_t(\epsilon)}^2 \right)} }e^{-4\alpha t}\\
    & \le  \En_{\epsilon}{\En_{i \sim \pi}{ \exp\left(  \sum_{t=1}^n \prn*{8\alpha\tri{e_i, \y_t(\eps)} + 2K\alpha^{2}\tri{e_i, \y_t(\eps)}^2 }^2  - 2K\alpha^2 \ip{e_i}{\y_t(\epsilon)}^2 \right)} }e^{-4\alpha t}\\
     & \le  \En_{\epsilon}{\En_{i \sim \pi}{ \exp\left(  \sum_{t=1}^n 4\dfedit{\alpha^2{}}\prn*{4 + K\alpha}^2\tri{e_i, \y_t(\eps)}^2  - 2K\alpha^2 \ip{e_i}{\y_t(\epsilon)}^2 \right)} }e^{-4\alpha t}.
\end{align*}
The above term is upper bounded by $\exp(-4\alpha t)$ as soon as $4\alpha^2(4+K\alpha)^2\leq 2K\alpha^2$, which happens when 
\begin{align}
	\label{eq:desired_alpha_1}
	0 < \alpha\leq (\sqrt{K/2}-4)/K.
\end{align}
In view of \eqref{eq:alpha_value_KL}, we know that
$
\alpha \le \frac{1}{\sqrt{K'}}. 
$
Thus, to ensure \eqref{eq:desired_alpha_1}, it is sufficient to take $K=50$ and $K'=50^2$. Other choices lead to a different balance of constants. We thus have
\begin{align*}
 P(X_\alpha > t) &\le \exp\left( - 4\alpha t\right).
\end{align*}
Now that we have the tail bound, we appeal to Proposition \ref{prop:main}. Setting $s_i=4\alpha_i$ and $B_i=1/4\alpha_i$, we obtain that
$$
 \E{\max_{i=1,\ldots, N}\left\{ X_{\alpha_i} - \frac{\log(n)}{4\alpha} \right\}}  \le 10.
$$
\end{proof}
\section{Relaxations and Algorithms}
\label{app:algo}
\begin{proof}[\textbf{Proof of Admissibility for Example \ref{eg:kl_alg}}]
\begin{lemma}
The following bound is achievable in the setting given in example \ref{eg:kl_alg}:
\begin{equation}
\label{eq:kl_alg_bound}
\B_{n}(f) = 3\sqrt{2n\max\crl*{\KL(f\mid{}\pi),1}} + 4\sqrt{n}.
\end{equation}
\end{lemma}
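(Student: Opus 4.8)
The plan is to verify that the relaxation $\Bay$ displayed in Example~\ref{eg:kl_alg} is admissible, in the sense of \eqref{eq:initial}--\eqref{eq:admissibility}, for the slightly smaller rate $\B_n^{-}(f) \defeq 3\sqrt{2n\max\crl*{\KL(f\mid\pi),1}} + 2\sqrt{n}$, and then to invoke the general guarantee that the induced strategy attains $\B_n^{-}(f) + \Bay(\cdot)$, where $\Bay(\cdot)$ is the value at the empty history. Since $\Bay(\cdot) = \inf_{\lambda>0}\brk*{\tfrac1\lambda\log\sum_i e^{-\lambda\sqrt{nR_i}} + 2\lambda n} \le 2\sqrt{n}$ (take $\lambda = 1/\sqrt{n}$ and use $\sum_{i\ge1}e^{-\sqrt{2^{i-1}}} < 1$, so the log term is negative), this gives regret at most $\B_n^{-}(f) + 2\sqrt{n} = \B_n(f)$. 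Throughout write $M(f) \defeq \max\crl*{\KL(f\mid\pi),1}$, $R_i = 2^{i-1}$, $\eta_i = \sqrt{R_i/n}$, and let $\Phi_t(\lambda)$ denote the bracketed expression in $\Bay$ so that $\Bay(\yr[t]) = \inf_{\lambda>0}\Phi_t(\lambda)$; note $\tri*{q,y_t} \in [0,1]$ for every distribution $q$ over experts, since each coordinate of $y_t$ lies in $[0,1]$, and in this linear setting ($\X = \crl{0}$) no randomization is needed, so $q_t$ may be replaced by a point $\hat{y}_t \in \Delta$.

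\textbf{Initial condition.} At $t = n$ the term $2\lambda(n-t)$ vanishes, and since $\tfrac1\lambda\log\sum_i e^{-\lambda a_i} \ge -\min_i a_i$ for every $\lambda > 0$, we get $\Bay(\yr[n]) \ge \sup_i\crl*{-\sum_{s=1}^n\tri*{q^{R_i}(\yr[s-1]),y_s} - \sqrt{nR_i}}$. Fix a comparator $f$ and pick the index $i$ with $R_i \in [M(f), 2M(f))$. The standard exponential-weights (aggregating-algorithm) regret bound with prior $\pi$, learning rate $\eta_i$, and losses in $[0,1]$ gives $\sum_{s=1}^n\tri*{q^{R_i}(\yr[s-1]),y_s} \le \sum_{s=1}^n\tri*{f,y_s} + \KL(f\mid\pi)/\eta_i + \eta_i n/8$. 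Because $\KL(f\mid\pi)/\eta_i \le \sqrt{nM(f)}$ and $\eta_i n/8 + \sqrt{nR_i} = \tfrac98\sqrt{nR_i} \le \tfrac98\sqrt{2nM(f)}$, the right-hand side is at most $\sum_{s=1}^n\tri*{f,y_s} + (1+\tfrac98\sqrt2)\sqrt{nM(f)} \le \sum_{s=1}^n\tri*{f,y_s} + 3\sqrt{2nM(f)}$. Hence for the chosen $i$ we have $-\sum_s\tri*{q^{R_i}(\yr[s-1]),y_s} - \sqrt{nR_i} \ge -\sum_s\tri*{f,y_s} - \B_n^{-}(f)$, and taking the supremum over $f$ yields \eqref{eq:initial}.

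\textbf{Recursive condition.} Fix $\lambda > 0$ and abbreviate $w_i^{(t)} = \exp(-\lambda[\sum_{s\le t}\tri*{q^{R_i}(\yr[s-1]),y_s} + \sqrt{nR_i}])$, so $w_i^{(t)} = w_i^{(t-1)}e^{-\lambda\tri*{q^{R_i}(\yr[t-1]),y_t}}$; let $p_i(\lambda) = w_i^{(t-1)}/\sum_j w_j^{(t-1)}$ be the ``high-level'' distribution and $\hat{y}_t^{(\lambda)} \defeq \sum_i p_i(\lambda)\,q^{R_i}(\yr[t-1]) \in \Delta$, which is exactly the two-level prediction described in the example. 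A direct calculation gives $\Phi_{t-1}(\lambda) - \Phi_t(\lambda) = 2\lambda - \tfrac1\lambda\log\En_{i\sim p(\lambda)}\big[e^{-\lambda\tri*{q^{R_i}(\yr[t-1]),y_t}}\big]$, and by Hoeffding's lemma (the random variable $\tri*{q^{R_i}(\yr[t-1]),y_t}$ lies in $[0,1]$) the log term is at most $-\lambda\tri*{\hat{y}_t^{(\lambda)},y_t} + \lambda^2/8$, so $\Phi_{t-1}(\lambda) \ge \tri*{\hat{y}_t^{(\lambda)},y_t} + \Phi_t(\lambda) + \tfrac{15}{8}\lambda \ge \tri*{\hat{y}_t^{(\lambda)},y_t} + \Phi_t(\lambda)$ for all $y_t$. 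Now for any $\hat{y}_t$, any $\mu>0$, and any $y_t$ we have $\tri*{\hat{y}_t,y_t} + \inf_\lambda\Phi_t(\lambda) \le \tri*{\hat{y}_t,y_t} + \Phi_t(\mu)$, hence $\inf_{\hat{y}_t}\sup_{y_t}\crl*{\tri*{\hat{y}_t,y_t} + \Bay(\yr[t])} \le \inf_{\hat{y}_t}\sup_{y_t}\crl*{\tri*{\hat{y}_t,y_t} + \Phi_t(\mu)} \le \sup_{y_t}\crl*{\tri*{\hat{y}_t^{(\mu)},y_t} + \Phi_t(\mu)} \le \Phi_{t-1}(\mu)$. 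The left-hand side does not depend on $\mu$, so taking the infimum over $\mu$ gives $\inf_{\hat{y}_t}\sup_{y_t}\crl*{\tri*{\hat{y}_t,y_t} + \Bay(\yr[t])} \le \inf_\mu\Phi_{t-1}(\mu) = \Bay(\yr[t-1])$, which is \eqref{eq:admissibility}.

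\textbf{Main obstacle.} The two delicate points are: routing the per-comparator choice of discretization index $i$ through the log-sum-exp so that the two $\sqrt{nR_i}$-type contributions combine into a single $3\sqrt{2nM(f)}$ (this is where the constant $3\sqrt2$ and the prior-weight penalty $\sqrt{nR_i}$ inside $\Bay$ are calibrated); and handling the variational parameter $\lambda$, since the one-step drop $\Phi_{t-1}(\lambda) \ge \tri*{\hat{y}_t^{(\lambda)},y_t} + \Phi_t(\lambda)$ holds only with a $\lambda$-dependent strategy, so $\lambda$ cannot be frozen in advance. The resolution of the latter is the elementary exchange of infima above, which also carries the pleasant interpretation that the learner effectively re-optimizes the aggregation temperature each round; everything else (the one-step Hoeffding/log-partition recursion, the empty-history estimate, and the textbook exponential-weights bound) is routine.
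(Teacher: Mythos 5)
Your proof is correct and follows essentially the same route as the paper: the same two-level softmax relaxation, the same ``high-level'' distribution over complexity indices, and the same verification strategy (initial condition via a per-$i$ exponential-weights regret bound, admissibility via a one-step Hoeffding/log-partition recursion, and a final bound on the empty-history value).

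The differences are cosmetic and all go in your favor. For the initial condition you invoke the textbook exponential-weights bound $\KL(f\mid\pi)/\eta_i + \eta_i n/8$ directly, whereas the paper routes through its Lemma 5.7 (a convex-duality/strong-smoothness argument that delivers the cruder $2\sqrt{nR_i}$); both suffice once the discretization index $i$ is chosen so that $R_i \in [M(f), 2M(f))$. For admissibility, your exchange-of-infima argument --- show the one-step inequality $\Phi_{t-1}(\mu) \geq \langle \hat{y}_t^{(\mu)}, y_t\rangle + \Phi_t(\mu)$ for every fixed $\mu$, bound $\inf_{\hat{y}_t}\sup_{y_t}\{\ldots + \Bay(\yr[t])\}$ by $\Phi_{t-1}(\mu)$ for each $\mu$, and then infimize over $\mu$ --- is slightly cleaner than the paper's version, which fixes $\lambda_t^\star = \argmin_\lambda\Phi_{t-1}(\lambda)$ up front and then works the inequality into equality with $\Bay(\yr[t-1])$ at the end; both arguments are sound and produce the same strategy $\hat{y}_t = \sum_i p_i\, q^{R_i}(\yr[t-1])$. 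Your sharper estimate $\Bay(\cdot)\leq 2\sqrt{n}$ (via $\sum_i e^{-\sqrt{2^{i-1}}}<1$ at $\lambda = 1/\sqrt{n}$) instead of the paper's $4\sqrt{n}$ is correct, and you compensate for it by targeting the smaller rate $\B_n^{-}(f) = \B_n(f) - 2\sqrt{n}$ in the admissibility check, so the final guarantee $\B_n(f)$ matches the statement.
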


 This algorithm can be interpreted as running a ``low-level'' instance of the exponential weights algorithm for each complexity radius $R_{i}$, then combining the predictions of these algorithms with a ``high-level'' instance. The high-level distribution $q_t^{\star}$ differs slightly from the usual exponential weights distribution in that it incorporates a prior whose weight decreases as the complexity radius increases. The prior distribution prevents the strategy from incurring a penalty that depends on the range of values the complexity radii take on, which would happen if the standard exponential weights distribution were used. 
 
Following the analysis style of Corollary \ref{cor:generic_rademacher}, we directly consider an upper bound based on $\KL(f\mid{}\pi)$ but instead use a complexity-radius-based upper bound with the KL divergence controlling the complexity radius: $\F(R) = \crl*{f:\KL(f\mid{}\pi)\leq{}R}$. Concretely, we move from \eqref{eq:kl_alg_bound}
to the bound 
\[\B_{n}(i) = 3\sqrt{nR_{i}} + 4\sqrt{n}\] 
for $R_{i}=2^{i-1}$ with $i\in{}\N$. To keep the analysis as tidy as possible, we will study the achievability of $\B_n(i) = D\sqrt{R_{i}n}$, setting $D$ and including additive constants only when we reach a point in the analysis where it becomes necessary to do so. The relaxation we consider is
\begin{align*}
\Bay(\yr[t]) = \inf_{\lambda>0}\brk*{
\frac{1}{\lambda}\log\prn*{\sum_{i}\exp\prn*{
    -\lambda\brk*{\sum_{s=1}^{t}\tri*{q_{s}^{R_i}(\yr[s-1]),y_s} -  2\sqrt{nR_{i}} + \B_{n}(i)
}
}
}
+ 2\lambda(n-t)
}.
\end{align*}
\paragraph{Initial Condition:} This inequality follows from Lemma \ref{lemma:fixed_r} and an application of the softmax function as an upper bound on the supremum over $i$:
  \begin{align*}
&-\inf_{i}\brk*{
    \inf_{f\in{}\F(R_{i})}\sum_{t=1}^{n}\ls(f,y_t) + \B_n(i)
}\\
&\dfedit{\leq} \sup_{i}\brk*{-\sum_{s=1}^{t}\tri*{q_{s}^{R_{i}}(\yr[s-1]),y_s} +
      2\sqrt{nR_{i}}
  - \B_n(i)
}
\\
&\leq \inf_{\lambda{}>0}\frac{1}{\lambda}\log\prn*{\sum_{i}\exp\prn*{-\lambda{}\brk*{\sum_{s=1}^{t}\tri*{q_{s}^{R_i}(\yr[s-1]),y_s} -
      2\sqrt{nR_{i}}
  + \B_n(i)
}}}
\\
&= \Bay(\yr[n]).
  \end{align*}
\paragraph{Admissibility Condition:}
Define a strategy $q_{t}^{\star}$ via
  \begin{equation*}
    (q_{t}^{\star})_{i} = \frac{\exp\prn*{-\lambda{}_{t}^{\star}\brk*{\sum_{s=1}^{t-1}\tri*{q_{s}^{R_i}(\yr[s-1]),y_s} -
      2\sqrt{nR_{i}}
	 + \B_n(i)
}}}{\sum_{j}\exp\prn*{-\lambda{}_{t}^{\star}\brk*{\sum_{s=1}^{t-1}\tri*{q_{s}^{R_j}(\yr[s-1]),y_s} -
      2\sqrt{nR_{j}}
  + \B_n(R_{j})
}}},
\end{equation*}
where we have set
\begin{align*}
\lambda_{t}^{\star} = \argmin_{\lambda>0}\brk*{\frac{1}{\lambda}\log\prn*{\sum_{i}\exp\prn*{-\lambda{}\brk*{\sum_{s=1}^{t-1}\tri*{q_{s}^{R_i}(\yr[s-1]),y_s} -
      2\sqrt{nR_{i}}
	 + \B_n(i)
}}}
+ 2\lambda(n-t + 1)}.
\end{align*}
We proceed to demonstrate admissibility:
\begin{align*}
&\inf_{q_t}\sup_{y_t}\brk*{
\tri*{q_t, y_t} + 
    \Bay(\yr[t])
}    \\
&= 
\inf_{q_t}\sup_{y_t}\brk*{
\tri*{q_t, y_t} + 
 \inf_{\lambda{}>0}\brk*{\frac{1}{\lambda}\log\prn*{\sum_{i}\exp\prn*{-\lambda{}\brk*{\sum_{s=1}^{t}\tri*{q_{s}^{R_i}(\yr[s-1]),y_s} -
      2\sqrt{nR_{i}}
  + \B_n(i)
}}}
+ 2\lambda(n-t)
}
}.
\intertext{We now plug in $q_{t}^{\star}$ and $\lambda_{t}^{\star}$ as described above:}
&\leq
\sup_{y_t}\biggr[
\frac{1}{\lambda_{t}^{\star}}\log\prn*{
\exp\prn*{
\lambda_{t}^{\star}\En_{i\sim{}q_t^{\star}}\tri*{q_{t}^{R_{i}}(\yr[t-1]), y_t
}
}
}+ \frac{1}{\lambda_{t}^{\star}}\log\prn*{
\En_{i\sim{}q_{t}^{\star}}\exp\prn*{
-\lambda_{t}^{\star}\tri*{q_{t}^{R_{i}}(\yr[t-1]), y_t
}
}
}\\&\hspace{.4in} 
+ \frac{1}{\lambda_{t}^{\star}}\log\prn*{\sum_{i}\exp\prn*{-\lambda{}_t^{\star}\brk*{\sum_{s=1}^{t-1}\tri*{q_{s}^{R_i}(\yr[s-1]),y_s} -
      2\sqrt{nR_{i}}
	 + \B_n(i)
}}}
+ 2\lambda_{t}^{\star}(n-t)
\biggl].
\intertext{We combine the first two terms in the expression and apply Jensen's inequality to arrive at an upper bound:}
&\leq
\sup_{y_t}\biggr[
\frac{1}{\lambda_{t}^{\star}}\log\prn*{
\En_{i,i'\sim{}q_t^{\star}}
\exp\prn*{
\lambda_{t}^{\star}\tri*{q_{t}^{R_{i}}(\yr[t-1]) - q_{t}^{R_{i'}}(\yr[t-1]), y_t
}
}
}\\&\hspace{0.4in}
+ \frac{1}{\lambda_{t}^{\star}}\log\prn*{\sum_{i}\exp\prn*{-\lambda{}_t^{\star}\brk*{\sum_{s=1}^{t-1}\tri*{q_{s}^{R_i}(\yr[s-1]),y_s} -
      2\sqrt{nR_{i}}
	 + \B_n(i)
}}}
+ 2\lambda_{t}^{\star}(n-t)
\biggl].
\intertext{The first term is now bounded using sub-gaussianity.}
&\leq
  \frac{1}{\lambda_{t}^{\star}}\log\prn*{\sum_{i}\exp\prn*{-\lambda{}_t^{\star}\brk*{\sum_{s=1}^{t-1}\tri*{q_{s}^{R_i}(\yr[s-1]),y_s} -
      2\sqrt{nR_{i}}
	 + \B_n(i)
}}}
+ 2\lambda_{t}^{\star}(n-t + 1)
\\
&=
\inf_{\lambda>0}\brk*{\frac{1}{\lambda}\log\prn*{\sum_{i}\exp\prn*{-\lambda{}\brk*{\sum_{s=1}^{t-1}\tri*{q_{s}^{R_i}(\yr[s-1]),y_s} -
      2\sqrt{nR_{i}}
	 + \B_n(i)
}}}
+ 2\lambda(n-t + 1)}
\\
&= \Bay(\yr[t-1]).
\end{align*}

Having shown that $\Bay$ is an admissible relaxation, it remains to show that the relaxation's final value,
\begin{align*}
  \Bay(\cdot) &
= \inf_{\lambda>0} 
\brk*{\frac{1}{\lambda}\log\prn*{
\sum_{i}\exp\prn*{\lambda\brk*{
      2\sqrt{nR_{i}}
  - D\sqrt{nR_{i}}
}}
}
+ 2\lambda{}n
}
\intertext{is not too large. Setting $D$ = 3,}
&= \inf_{\lambda>0} 
\brk*{\frac{1}{\lambda}\log\prn*{
\sum_{i}\exp\prn*{-\lambda\sqrt{nR_{i}}
}}
+ 2\lambda{}n
}.
\intertext{The complexity radius $R_{i}$ is discretized such that $R_{i} -
  R_{i-1}\geq{}1$, yielding}
&\leq \inf_{\lambda>0} 
\brk*{\frac{1}{\lambda}\log\prn*{
\exp(-\lambda\sqrt{n})+\sum_{i=2}^{\infty}(R_{i}-R_{i-1})\exp\prn*{-\lambda\sqrt{nR_{i}}
}}
+ 2\lambda{}n
}
\\
&\leq \inf_{\lambda>0} 
\brk*{\frac{1}{\lambda}\log\prn*{\exp(-\lambda\sqrt{n})+
\int_{1}^{\infty}\exp\prn*{-\lambda\sqrt{nR}
}dR}
+ 2\lambda{}n
}.
\end{align*}
The integral is a routine calculation.
\begin{align*}
  \int_{1}^{\infty}\exp\prn*{-\lambda\sqrt{nR}
}dR & =
      -2\frac{1}{\lambda^{2}n}\exp\prn*{-\lambda\sqrt{nR}}\brk*{\lambda{}\sqrt{nR} +1
}\bigg|_{1}^{\infty}.
\end{align*}
Finally, set $\lambda{} = 1/\sqrt{n}$ yielding
\begin{align*}
\Bay(\cdot) 
&\leq 4\sqrt{n}.
\end{align*}
Note that instead of setting $\lambda_{t}= \lambda_{t}^{\star}$ as described above, we could have set $\lambda_{t} = 1/\sqrt{n}$ and achieved the same regret bound.
\end{proof}
\begin{lemma}
\label{lemma:fixed_r}
Consider the experts setting from Example \ref{eg:pacb}, but with hypothesis class $\F(R) = \crl*{f:\KL(f\mid{}\pi)\leq{}R}$. The following inequality holds:
\[
  -\inf_{f\in{}\F(R)}\sum_{t=1}^{n}\tri{y_t,f} \leq -\sum_{t=1}^{n}\tri*{y_t, q^{R}(\yr[t-1])} + 2\sqrt{Rn}.
\]
\end{lemma}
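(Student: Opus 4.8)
The plan is to recognize the claimed inequality as the classical exponential-weights (Hedge) regret bound, run with prior $\pi$ and the fixed learning rate $\eta=\sqrt{R/n}$, specialized to comparators in the KL-ball $\F(R)$. Note first that $q^{R}(\yr[t-1])$ is exactly the exponential weights distribution after $t-1$ rounds, i.e.\ $q^{R}(\yr[t-1])_{k}\propto\pi_{k}\exp\prn*{-\eta\tri{\sum_{s=1}^{t-1}y_{s},e_{k}}}$, so that the left-hand side of the lemma is the negative cumulative loss of the best $f\in\F(R)$ and the first term on the right is the negative cumulative loss of the Hedge iterates.

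First I would introduce the free-energy potential $W_{t}=\En_{k\sim\pi}\exp\prn*{-\eta\sum_{s=1}^{t}\tri{y_{s},e_{k}}}$ with $W_{0}=1$, and record the one-step identity $W_{t}/W_{t-1}=\En_{k\sim q^{R}(\yr[t-1])}\exp\prn*{-\eta\tri{e_{k},y_{t}}}$. Since each coordinate loss $\tri{e_{k},y_{t}}$ lies in $[0,1]$ by assumption, Hoeffding's lemma applied to $-\eta\tri{e_{k},y_{t}}\in[-\eta,0]$ gives $\En_{k\sim q^{R}(\yr[t-1])}\exp\prn*{-\eta\tri{e_{k},y_{t}}}\leq\exp\prn*{-\eta\tri{q^{R}(\yr[t-1]),y_{t}}+\eta^{2}/8}$, hence $\log W_{t}-\log W_{t-1}\leq-\eta\tri{q^{R}(\yr[t-1]),y_{t}}+\eta^{2}/8$. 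Summing over $t=1,\ldots,n$ yields $\log W_{n}\leq-\eta\sum_{t=1}^{n}\tri{q^{R}(\yr[t-1]),y_{t}}+n\eta^{2}/8$.

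Next I would lower bound $\log W_{n}$ via the Donsker--Varadhan variational formula for the KL divergence: for any $f\in\Delta$, $\log W_{n}=\log\En_{k\sim\pi}\exp\prn*{-\eta\sum_{t=1}^{n}\tri{y_{t},e_{k}}}\geq-\eta\sum_{t=1}^{n}\tri{y_{t},f}-\KL(f\mid\pi)$. Chaining the two bounds on $\log W_{n}$ and dividing through by $\eta$ produces the generic Hedge inequality $\sum_{t=1}^{n}\tri{q^{R}(\yr[t-1]),y_{t}}\leq\sum_{t=1}^{n}\tri{y_{t},f}+\KL(f\mid\pi)/\eta+n\eta/8$, valid for every $f\in\Delta$.

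Finally I would set $\eta=\sqrt{R/n}$ and restrict to $f\in\F(R)$, so that $\KL(f\mid\pi)/\eta\leq R/\sqrt{R/n}=\sqrt{Rn}$ and $n\eta/8=\sqrt{Rn}/8$; the right-hand side is then at most $\sum_{t=1}^{n}\tri{y_{t},f}+2\sqrt{Rn}$. Taking the infimum over $f\in\F(R)$ (the ball is nonempty, since $\pi\in\F(R)$ whenever $R\geq1$) and rearranging gives exactly $-\inf_{f\in\F(R)}\sum_{t=1}^{n}\tri{y_{t},f}\leq-\sum_{t=1}^{n}\tri{y_{t},q^{R}(\yr[t-1])}+2\sqrt{Rn}$. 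There is no real obstacle here; the only point requiring attention is the choice of cumulant-generating bound (Hoeffding's lemma with its $\eta^{2}/8$ constant) and verifying that the resulting constants stay comfortably below $2\sqrt{Rn}$, which they do because of the slack deliberately built into the statement.
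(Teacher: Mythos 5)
Your proof is correct and yields the claimed bound with room to spare (your constant works out to $\tfrac{9}{8}\sqrt{Rn}\le 2\sqrt{Rn}$). It is, however, phrased quite differently from the paper's argument. The paper proceeds by Lagrangian relaxation to a KL-penalized objective, recognizes the resulting penalized infimum as $\alpha\Psi^{\star}\prn{-\tfrac{1}{\alpha}\sum_t y_t}$ where $\Psi^{\star}$ is the Fenchel conjugate of $\KL(\cdot\mid\pi)$ (i.e.\ the log-partition / softmax potential), and then exploits the $1$-strong smoothness of $\Psi^{\star}$---dual to the $1$-strong convexity of KL---to peel off one term per round, identifying $\nabla\Psi^{\star}$ with the exponential-weights play $q^R(\yr[t-1])$. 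Your route instead runs the classical Hedge potential argument: control the one-step ratio $W_t/W_{t-1}$ via Hoeffding's lemma (which is morally the strong-smoothness step, yielding $\eta^2/8$ rather than the paper's cruder $1/(2\alpha^2)$), then lower bound $\log W_n$ via Donsker--Varadhan (which is exactly the Fenchel-duality identity for KL), and combine. So the two arguments rest on the same underlying convex duality, but yours is the more elementary, self-contained, ``textbook Hedge analysis'' presentation with slightly sharper constants, whereas the paper's framing slots directly into the mirror-descent/smoothness machinery used elsewhere in the appendix (e.g.\ to verify the initial condition of the relaxation) and makes the identification $\nabla\Psi^{\star}=q^R$ explicit. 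Either proof serves.
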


\begin{proof}
Our strategy is to move to an upper bound based on the Kullback-Leibler divergence and exploit convex duality:
\begin{align*}
  &-\inf_{f\in{}\F(R)}\sum_{t=1}^{n}\tri{y_t,f}\\
  &\leq -\inf_{f\in{}\F(R)}\crl*{\sum_{t=1}^{n}\tri{y_t,f} + \alpha\KL(f\mid{}\pi)} + \alpha R \\
  &\leq -\inf_{f\in{}\F}\crl*{\sum_{t=1}^{n}\tri{y_t,f} + \alpha\KL(f\mid{}\pi) } + \alpha R.
  \intertext{We use $\Psi^{\star}$ to denote the Fenchel conjugate of $\KL(\cdot\mid{}\pi)$:}
  &= \alpha\Psi^{\star}\prn*{-\frac{1}{\alpha}\sum_{t=1}^{n}y_t} \Psi+ \alpha R.
  \end{align*}
  The function $\KL(\cdot\mid{}\pi)$ is $1$-strongly convex, which implies that $\Psi^*$ is $1$-strongly smooth. We peel off one term at a time:
  \begin{align*}
	  \alpha\Psi^{\star}\prn*{-\frac{1}{\alpha}\sum_{t=1}^{n}y_t} \leq \alpha\Psi^{\star}\prn*{-\frac{1}{\alpha}\sum_{t=1}^{n-1}y_t} + \inner{-y_n, \nabla\Psi^{\star}\prn*{-\frac{1}{\alpha}\sum_{t=1}^{n-1}y_t}} + \frac{1}{\alpha}.
  \end{align*}
This obtains the following upper bound:
  \begin{align*}
  &
    -\sum_{t=1}^{n}\tri*{y_t, \nabla\Psi^{\star}\prn*{-\frac{1}{\alpha}\sum_{s=1}^{t-1}y_s}} + \frac{KCn}{\alpha}
+ \alpha R.
  \end{align*}
  Setting $\alpha = \sqrt{n/R}$ and noting that $\nabla\Psi^{\star}\prn*{-\sqrt{\frac{R}{n}}\sum_{s=1}^{t-1}y_s} = q^{R}(\yr[t-1])$ yields the result.
\end{proof}

\begin{proof}[\textbf{Proof of Lemma \ref{lemma:ada}}]

Recall the form of the $\Ada$ relaxation, where we have abbreviated $\Relaxn^{R}$ to $\Rd^{R}$:
\[
\Ada(\yr[t]) = \sup_{\y,\y'}\En_{\eps}\sup_{R}\brk*{
\Rd^{R}(\yr[t]) - \Rd^{R}\theta(\Rd^{R})
+ 2\sum_{s=t+1}^{n}\eps_{s}\En_{\hat{y}_{s}\sim{}q_{s}^{R}(\yr[t],\y'_{t+1:s-1}(\eps))}\ls(\hat{y}_s,\y_s(\eps))
}.
\]

\paragraph{Initial Condition:}

This directly follows from the fact that $\Rd^R$ satisfy the initial condition:

\begin{align*}
  \Ada(\yr[n]) &= \sup_{R}\brk*{
\Rd^{R}(\yr[n]) - \Rd^{R}\theta(\Rd^{R})
}\\
&\geq
\sup_{R}\brk*{
-\inf_{f\in{}\F(R)}\sum_{t=1}^{n}\ls(f, y_t) - \Rd^{R}\theta(\Rd^{R})
}\\
&=
-\inf_{R}\inf_{f\in{}\F(R)}\brk*{
\sum_{t=1}^{n}\ls(f, y_t) + \Rd^{R}\theta(\Rd^{R})
}.
\end{align*}
Therefore, playing the strategy corresponding to $\Ada$ yields an adaptive regret bound of the form $\B_{n}(R) = \Relaxn^{R}(\cdot)\theta(\Relaxn^{R}(\cdot)) + \Ada(\cdot)$.\\
~\\
\noindent\textbf{Admissibility Condition:} 
We obtain the following equalities using the same minimax swap technique as in the Lemma \ref{lem:sym} proof:
\begin{align*}
  &\inf_{q_t}\sup_{y_t}\En_{\hat{y}_t\sim{}q_t}\brk*{
  \ls(\hat{y}_t, y_t) + \Ada(\yr[t])
}
\\
&\begin{aligned}= 
\inf_{q_t}\sup_{y_t}\En_{\hat{y}_t\sim{}q_t}\sup_{\y,\y'}\En_{\eps}\sup_{R}\biggl[&
  \ls(\hat{y}_t, y_t) +
\Rd^{R}(\yr[t]) - \Rd^{R}\theta(\Rd^{R})\\
&+ 2\sum_{s=t+1}^{n}\eps_{s}\En_{\hat{y}_{s}\sim{}q_{s}^{R}(\yr[t],\y'_{t+1:s-1}(\eps))}\ls(\hat{y}_s,\y_s(\eps))
\biggr]\end{aligned}
\\
&\begin{aligned}= 
\sup_{p_t}\En_{y_t\sim{}p_t}\sup_{\y,\y'}\En_{\eps}\sup_{R}\biggl[&
  \inf_{\hat{y}_t}\En_{y'_t\sim{}p_t}\ls(\hat{y}_t, y'_t) +
\Rd^{R}(\yr[t]) - \Rd^{R}\theta(\Rd^{R})\\
&+ 2\sum_{s=t+1}^{n}\eps_{s}\En_{\hat{y}_{s}\sim{}q_{s}^{R}(\yr[t],\y'_{t+1:s-1}(\eps))}\ls(\hat{y}_s,\y_s(\eps))
\biggr].\end{aligned}
\end{align*}
 Note that $\inf_{\hat{y}_t}\En_{y'_t\sim{}p_t}\ls(\hat{y}_t, y'_t) = \inf_{q_t\in\Delta(\D)}\En_{\hat{y}_t\sim{}q_t}\En_{y'_t\sim{}p_t}\ls(\hat{y}_t, y'_t),$
  and so we may upper bound using the randomized strategy $q_{t}^{R}$ corresponding to $\Relaxn^{R}$. That this strategy depends on $\yr[t-1]$ is left implicit. This yields an upper bound,
\begin{align*}
\sup_{p_t}\En_{y_t\sim{}p_t}\sup_{\y,\y'}\En_{\eps}\sup_{R}\biggl[
  \En_{y'_t\sim{}p_t}\En_{\hat{y}_t\sim{}q_{t}^{R}}\ls(\hat{y}_t, y'_t) +
\Rd^{R}(\yr[t]) - \Rd^{R}\theta(\Rd^{R})& \\
+ 2\sum_{s=t+1}^{n}\eps_{s}\En_{\hat{y}_{s}\sim{}q_{s}^{R}(\yr[t],\y'_{t+1:s-1}(\eps))}\ls(\hat{y}_s,\y_s(\eps))
&\biggr],
\end{align*}
which we can write by adding and subtracting $\En_{\hat{y}_t\sim{}q_{t}^{R}}\ls(\hat{y}_t, y_t)$ as
\begin{align*}
\sup_{p_t}\En_{y_t\sim{}p_t}\sup_{\y,\y'}\En_{\eps}\sup_{R} \biggl[&
  \En_{y'_t\sim{}p_t}\En_{\hat{y}_t\sim{}q_{t}^{R}}\ls(\hat{y}_t, y'_t) -
  \En_{\hat{y}_t\sim{}q_{t}^{R}}\ls(\hat{y}_t, y_t) +  \En_{\hat{y}_t\sim{}q_{t}^{R}}\ls(\hat{y}_t, y_t)  \\
	&+\Rd^{R}(\yr[t]) - \Rd^{R}\theta(\Rd^{R})+ 2\sum_{s=t+1}^{n}\eps_{s}\En_{\hat{y}_{s}\sim{}q_{s}^{R}(\yr[t],\y'_{t+1:s-1}(\eps))}\ls(\hat{y}_s,\y_s(\eps))\biggr].
\end{align*}
Now, using the fact that $\Rd^{R}$ are admissible,
\begin{align*}
\leq
\sup_{p_t}\En_{y_t\sim{}p_t}\sup_{\y,\y'}\En_{\eps}\sup_{R}\biggl[&
  \En_{y'_t\sim{}p_t}\En_{\hat{y}_t\sim{}q_{t}^{R}}\ls(\hat{y}_t, y'_t) -
  \En_{\hat{y}_t\sim{}q_{t}^{R}}\ls(\hat{y}_t, y_t)  \\
&+\Rd^{R}(\yr[t-1]) - \Rd^{R}\theta(\Rd^{R})+ 2\sum_{s=t+1}^{n}\eps_{s}\En_{\hat{y}_{s}\sim{}q_{s}^{R}(\yr[t],\y'_{t+1:s-1}(\eps))}\ls(\hat{y}_s,\y_s(\eps))\biggr].
\end{align*}
By Jensen's inequality, we upper bound the last expression by
\begin{align*}
\sup_{p_t}\En_{y_t,y'_t\sim{}p_t}\sup_{\y,\y'}\En_{\eps}\sup_{R}\biggl[
\En_{\hat{y}_t\sim{}q_{t}^{R}}\ls(\hat{y}_t, y'_t) -
  \En_{\hat{y}_t\sim{}q_{t}^{R}}\ls(\hat{y}_t, y_t) +
\Rd^{R}(\yr[t-1]) - \Rd^{R}\theta(\Rd^{R})& \\
+ 2\sum_{s=t+1}^{n}\eps_{s}\En_{\hat{y}_{s}\sim{}q_{s}^{R}(\yr[t],\y'_{t+1:s-1}(\eps))}\ls(\hat{y}_s,\y_s(\eps))&
\biggr].
\end{align*}
We now replace each choice $y_t$ in the last sum by a worst-case choice $y''_t$:
\begin{align*}
\leq \sup_{p_t}\En_{y_t,y'_t\sim{}p_t}\sup_{y''_t}\sup_{\y,\y'}\En_{\eps}\sup_{R}\biggl[
\En_{\hat{y}_t\sim{}q_{t}^{R}}\ls(\hat{y}_t, y'_t) -
  \En_{\hat{y}_t\sim{}q_{t}^{R}}\ls(\hat{y}_t, y_t) +
\Rd^{R}(\yr[t-1]) - \Rd^{R}\theta(\Rd^{R})& \\
+ 2\sum_{s=t+1}^{n}\eps_{s}\En_{\hat{y}_{s}\sim{}q_{s}^{R}(\yr[t-1], y''_t,\y'_{t+1:s-1}(\eps))}\ls(\hat{y}_s,\y_s(\eps))&
\biggr].
\end{align*}
 We then introduce $\epsilon_t$ since $y_t,y'_t$ can be renamed. The last expression is equal to
\begin{align*}
\sup_{p_t}\En_{y_t,y'_t\sim{}p_t}\En_{\eps_{t}}\sup_{y''_t}\sup_{\y,\y'}\En_{\eps}\sup_{R}\biggl[
\En_{\hat{y}_t\sim{}q_{t}^{R}}\brk*{\eps_{t}(\ls(\hat{y}_t, y'_t) - \ls(\hat{y}_t, y_t))} +
\Rd^{R}(\yr[t-1]) - \Rd^{R}\theta(\Rd^{R})& \\
+ 2\sum_{s=t+1}^{n}\eps_{s}\En_{\hat{y}_{s}\sim{}q_{s}^{R}(\yr[t-1],y''_t,\y'_{t+1:s-1}(\eps))}\ls(\hat{y}_s,\y_s(\eps))&
\biggr].
\end{align*}
By splitting into two terms we arrive at an upper bound of
\begin{align*}
&\begin{aligned}\sup_{p_t}\En_{y_t\sim{}p_t}\En_{\eps_{t}}\sup_{y''_t}\sup_{\y,\y'}\En_{\eps}\sup_{R}\biggl[&
  2\eps_{t}\En_{\hat{y}_t\sim{}q_{t}^{R}}\brk*{\ls(\hat{y}_t, y_t)} +
\Rd^{R}(\yr[t-1]) - \Rd^{R}\theta(\Rd^{R}) \\
&+ 2\sum_{s=t+1}^{n}\eps_{s}\En_{\hat{y}_{s}\sim{}q_{s}^{R}(\yr[t-1],y''_t,\y'_{t+1:s-1}(\eps))}\ls(\hat{y}_s,\y_s(\eps))
\biggr]\end{aligned}
\\
&\begin{aligned}=
\sup_{y_t}\En_{\eps_{t}}\sup_{y''_t}\sup_{\y,\y'}\En_{\eps}\sup_{R}\biggl[&
  2\eps_{t}\En_{\hat{y}_t\sim{}q_{t}^{R}}\brk*{\ls(\hat{y}_t, y_t)} +
\Rd^{R}(\yr[t-1]) - \Rd^{R}\theta(\Rd^{R}) \\
&+ 2\sum_{s=t+1}^{n}\eps_{s}\En_{\hat{y}_{s}\sim{}q_{s}^{R}(\yr[t-1],y''_t,\y'_{t+1:s-1}(\eps))}\ls(\hat{y}_s,\y_s(\eps))
\biggr]\end{aligned}
\\
&= \Ada(\yr[t-1]).
\end{align*}

\end{proof}
\end{small}

\end{document}